\documentclass{article}
\PassOptionsToPackage{numbers,sort}{natbib}

\usepackage[preprint]{neurips_2026}

\usepackage[utf8]{inputenc}
\usepackage[T1]{fontenc}

\usepackage{microtype}
\usepackage{graphicx}
\usepackage{subcaption}
\usepackage{booktabs}
\usepackage{hyperref}
\usepackage{url}
\usepackage{amsfonts}
\usepackage{nicefrac}
\usepackage{xcolor}
\usepackage{amsmath}
\usepackage{amssymb}
\usepackage{mathtools}
\usepackage{amsthm}
\usepackage{float}
\usepackage{caption}
\usepackage{enumitem}
\setlist[itemize]{itemsep=0.5pt, topsep=1pt, parsep=0pt, partopsep=0pt}
\setlist[enumerate]{itemsep=1pt, topsep=1pt, parsep=0pt, partopsep=0pt}

\newtheorem{result}{Result}
\newtheorem{theorem}[result]{Theorem}
\newtheorem{proposition}[result]{Proposition}
\newtheorem{lemma}[result]{Lemma}
\newtheorem{corollary}[result]{Corollary}
\theoremstyle{definition}
\newtheorem{definition}{Definition}[section]

\theoremstyle{remark}
\newtheorem{remark}{Remark}[section]

\newcommand{\R}{\mathbb{R}}
\newcommand{\GL}{\mathrm{GL}}
\newcommand{\Id}{\mathrm{Id}}
\newcommand{\Aff}{\mathrm{Aff}}
\newcommand{\Gauss}{\mathcal{N}}
\newcommand{\NormalDensity}{\varphi}
\newcommand{\NormalDensityParam}[1]{\NormalDensity_{#1}}
\newcommand{\pushfwd}[2]{{#1}_{\sharp}\,#2}
\newcommand{\RVobs}{X}
\newcommand{\deq}{\sim}

\newcommand{\ambientdim}{d}
\newcommand{\mapgeneric}{h}
\newcommand{\branchgeneric}{P}
\newcommand{\indexgeneric}{I}
\newcommand{\lingeneric}{A}
\newcommand{\biasgeneric}{b}
\newcommand{\GMMgeneric}{Y}
\newcommand{\numcompgeneric}{K}
\newcommand{\wgeneric}{w}
\newcommand{\mugeneric}{\mu}
\newcommand{\Sigmageneric}{\Sigma}
\newcommand{\paramImg}[2]{\theta_{#1,#2}}
\newcommand{\mapf}{f}
\newcommand{\If}{I_{f}}
\newcommand{\branchf}[1]{P_{#1}}
\newcommand{\linf}[1]{A_{#1}}
\newcommand{\biasf}[1]{b_{#1}}
\newcommand{\latentZ}{Z}
\newcommand{\numcomptrue}{K}
\newcommand{\mutrue}[1]{\mu_{#1}}
\newcommand{\Sigmatrue}[1]{\Sigma_{#1}}
\newcommand{\wtrue}[1]{w_{#1}}
\newcommand{\paramf}[2]{\theta^{(\mapf)}_{#1,#2}}
\newcommand{\mapg}{g}
\newcommand{\Ig}{I_{g}}
\newcommand{\branchg}[1]{Q_{#1}}
\newcommand{\ling}[1]{B_{#1}}
\newcommand{\biasg}[1]{c_{#1}}
\newcommand{\latentZest}{Z'}
\newcommand{\numcompest}{K'}
\newcommand{\muest}[1]{\mu'_{#1}}
\newcommand{\Sigmaest}[1]{\Sigma'_{#1}}
\newcommand{\west}[1]{w'_{#1}}
\newcommand{\paramEst}[2]{\theta^{(\mapg)}_{#1,#2}}
\newcommand{\maph}{h}

\newcommand{\paramh}[2]{\theta^{(\maph)}_{#1,#2}}
\newcommand{\fineC}{C}
\newcommand{\fineCp}{C'}
\newcommand{\Ichamber}[2]{\mathcal{I}^{#1}(#2)}

\newcommand{\SymComp}[1]{\mathrm{Sym}_{#1}}
\newcommand{\orbit}[3]{[#1_{#2}]_{#3}}
\newcommand{\Transporter}[2]{\mathrm{Aff}({#1}{\to}{#2})}
\newcommand{\Gmix}[1]{G_{\mathrm{mix}}(#1)}
\newcommand{\IndexTrue}{[\numcomptrue]}
\newcommand{\IndexEst}{[\numcompest]}
\newcommand{\Df}{D_{\mapf}}
\newcommand{\Dg}{D_{\mapg}}
\newcommand{\Ef}{E_{\mapf}}
\newcommand{\Eg}{E_{\mapg}}
\newcommand{\preimage}[2]{#1^{-1}(#2)}

\newcommand{\condlabel}[1]{\label{cond:#1}}
\newcommand{\condref}[1]{\hyperref[cond:#1]{(#1)}}

\newcommand{\PIname}{Parameter injectivity}
\newcommand{\MSTname}{Mixture symmetry triviality}
\newcommand{\SBname}{Simple boundary}
\newcommand{\USBname}{Universal simple boundary}
\newcommand{\DDname}{Distinct diagonals}
\newcommand{\ODname}{Cross-component orbit disjointness}
\newcommand{\DWname}{Distinct weights}
\newcommand{\JSTname}{Joint symmetry triviality}
\newcommand{\TODname}{Transporter-orbit disjointness}
\newcommand{\CrossCompParamInj}{\textsc{CrossCompPI}}
\newcommand{\SameCompParamInj}{\textsc{SameCompPI}}

\usepackage{rotating}
\newcommand{\YES}{\ensuremath{\checkmark}}
\newcommand{\NO}{\ensuremath{\times}}
\newcommand{\PARTIAL}{\ensuremath{\sim}}
\newcommand{\PI}{\condref{PI}}
\newcommand{\MST}{\condref{MST}}

\newcommand{\USB}{\condref{USB}}
\newcommand{\DD}{\condref{DD}}

\begin{document}

\title{Beyond ICA: Identifiability by Symmetry Breaking}
\author{
  Pengzhou Wu\\
  SANKEN, University of Osaka\\
  \texttt{pengzhou.sk@osaka-u.ac.jp}
}

\maketitle

\begin{abstract}
We prove the identifiability of deep generative models (DGMs)
with piecewise-affine (PWA) decoders and Gaussian
mixture model (GMM) priors, in a purely unsupervised
setting. We introduce three algebraic contrast
principles for symmetry breaking: \emph{domain contrast}, which
trivializes the mixture symmetry group; \emph{mechanism
contrast}, which ensures every decoder branch is
witnessed by a unique boundary; and \emph{interaction
contrast}, which forbids parameter conspiracies between
latent components and decoder branches. Together they exploit the interplay
between the discrete combinatorics of the PWA map and the
continuous symmetry structure of the latent GMM.
Continuity is replaced by algebraic symmetry conditions; injectivity is decoupled from structural identification and required only for pointwise inversion.
Our results form a hierarchy: from \emph{law
identifiability} (LID; latent distribution up to a global affine
map) through \emph{map identifiability} (MID; decoder
up to the same map) to \emph{posterior} and
\emph{pointwise identifiability}. 
The ICA-form ambiguity emerges under conditions on diagonal component covariances.
Assumptions are only on the data-generating process, not on learning methods, except for the interaction contrast.

To our knowledge this is the first to develop algebraic symmetry-breaking into an identifiability engine for a concrete nonlinear model class, the first to admit discontinuous decoders, and the first to handle fully non-injective decoders, where every observation admits multiple latent codes.

\end{abstract}

\section{Introduction}
\label{sec:intro}

Identifiability is the foundation of trustworthy DGM.
A deep latent variable model $\RVobs = \mapf(\latentZ)$ is
\emph{identifiable} if the observable law\footnote{A mapping between our terminology
(\emph{law}, \emph{map}, \emph{domain},
\emph{contrast}, etc.) and formal objects is given in
Table~\ref{tab:terminology} (App.~\ref{app:terminology}).} of $\RVobs$ pins down both
the generator $\mapf$ and the latent law of $\latentZ$, up to a
well-defined ambiguity class. Without it, two trainings on the same
data can converge to entirely different mechanisms, and downstream
tasks lose any guarantee of being well-posed. Identifiability is the
precondition for robust, interpretable, and disentangled
representations \citep{locatello2019challenging}, with deep classical
roots in finite mixtures
\citep{teicher1963identifiability,yakowitz1968identifiability,allman2009identifiability},
tensor decompositions \citep{kruskal1977three}, and linear ICA
\citep{comon1994independent,darmois1953analyse,skitovich1953property}.
The nonlinear, high-capacity setting of modern generators makes the
ambiguity class far larger than these classical settings could admit
\citep{hyvarinen1999nonlinear}.

Nonlinear ICA breaks ambiguity by domain contrast---but only under \emph{observed} domain labels.
The dominant remedy treats data as drawn from several
\emph{domains}---variations in the problem setting (gender,
temperature, time, location)---each shaping the latent law inside a
parametric family, and lets the diversity of domain parameters break
the symmetries that otherwise plague the model: in time-contrastive learning (TCL)
\citep{hyvarinen2016unsupervised} domains are time segments; in identifiable variational autoencoder (iVAE)
\citep{khemakhem2020variational}, observed auxiliaries. In both, the
domain label is observed. Real data often arrives without labels, so
we model the latent law as a GMM---each component
being one domain. Going from observed to unobserved domains is
qualitatively harder: domain contrast alone no longer pins down
spurious solutions.

Mechanism contrast turns branch boundaries into a resource.
We add a complementary symmetry-breaking lever from the generator
itself. ReLU networks are
continuous PWA maps \citep{montufar2014number} and achieve almost-optimal convergence rates on discontinuous target
functions \citep{imaizumi2019deep}. A PWA generator
partitions the latent space into branches with distinct affine
rules---discrete \emph{mechanisms}---and \emph{each domain is acted on
by every mechanism whose branch domain it intersects}. The algebraic
resource is the branch-boundary structure: when branches do not all
share their boundaries, each toggle localises distinct constraints
that no smooth global generator could ever produce. Discontinuity of
$\mapf$ is fully allowed by our theory but is not what does the work---a continuous
PWA already exhibits the boundary structure we exploit.

Algebraic symmetry collapse replaces continuity and smoothness.
Existing nonlinear-ICA proofs are differential at heart---log-density
and Jacobian comparisons \citep{hyvarinen2016unsupervised,
khemakhem2020variational,gresele2021independent,gresele2019incomplete}---silent
on discontinuous maps and chamber-boundary structure, and structurally
requiring smoothness. We replace differential techniques with an
\emph{algebraic} engine: collisions between two candidate models are
read as nontrivial elements of an affine symmetry group of the latent
law, and assumptions that trivialise this group force the two models
to coincide. Three contrast principles---one each for map, law, and
their interaction---power the engine
(Table~\ref{tab:contrasts}; formal in \S\ref{sec:setup}). The combination yields the
headline \emph{best of both worlds}: the affine link is built from
map-side boundary information, so domain labels can remain latent.
Group/symmetry arguments have emerged in identifiability since
the 1950s \citep{hurwicz1950generalization}; classical
models had small, obvious symmetry groups \citep{anderson1956statistical}, so a general algebraic
engine became necessary only with the nonlinear, high-capacity regime.

Generative modelling is not signal processing.
Classical ICA inherits its goals from signal processing: invert the
source pointwise. This requires injectivity, a constraint that is
both restrictive theoretically and almost never enforced in
mainstream nonlinear-ICA learning procedures
\citep{hyvarinen2016unsupervised,khemakhem2020variational,
gresele2019incomplete}, even when their theorems demand it. Deep
generative modelling cares about a different object: the
\emph{mechanism} (the decoder) and the \emph{posterior} (the
representation). Neither needs pointwise inversion. Our hierarchy
(law $\to$ map $\to$ posterior $\to$ pointwise; Table~\ref{tab:ladder})
delivers the first three \emph{without} injectivity; pointwise
recovery is an optional final upgrade. 
\emph{No} form of latent independence is needed for the affine link: Theorems \ref{thm:lid} and \ref{thm:mid} hold for arbitrary component covariances. Only the ICA-form refinement (Cor.~\ref{cor:pwa-ica}) requires diagonal component covariances, i.e., conditional independence of coordinates given the mixture component.


\begin{table}[H]
\caption{The three contrast principles powering the engine.
  Domain contrast \MST{} refines the classical observed-domain
  assumption; mechanism contrast \USB{} is new and exploits PWA branch
  boundaries; interaction contrast \PI{} realises Independent Causal
  Mechanisms (ICM) at the parameter level.}%
\label{tab:contrasts}
\centering
\small
\setlength{\tabcolsep}{3pt}
\begin{tabular}{@{}l c p{3.2cm} p{7cm}@{}}
\toprule
\textbf{Contrast} & \textbf{Asn.}
  & \textbf{What it constrains}
  & \textbf{Informal content} \\
\midrule
domain      & \MST  & latent law       & mixture has no nontrivial affine self-symmetry \\
mechanism   & \USB  & map              & every branch is witnessed by a unique boundary toggle \\
interaction & \PI   & map--law coupling & map and latent parameters do not conspire (ICM) \\
\bottomrule
\end{tabular}
\end{table}

\textbf{Contributions.}
\begin{enumerate}
\item \emph{Mechanism and interaction contrast} as principles
  complementary to domain contrast.
  \emph{Takeaway:} branch boundaries---and the discontinuities they
  may carry---are a resource, not an obstruction.
\item \emph{A group-theoretic identifiability engine.} Continuity and
  differentiability are replaced by triviality of an affine symmetry
  group, freeing pushforward identifiability from the smoothness
  restrictions of the differential proof tradition.
  \emph{Takeaway:} a single algebraic principle (symmetry collapse)
  carries the analysis from law identifiability through ICA-form
  recovery, with conditions modular along the way.
\item \emph{A hierarchy of identifiability} (law $\to$ map $\to$
  posterior $\to$ pointwise).
  \emph{Takeaway:} identifiable representation learning is achievable
  without injectivity and without any latent independence assumption.
\item \emph{Truth-sided assumptions.}
  Estimators need satisfy only \PI{};
  all other conditions live on the truth side.
  \emph{Takeaway:} learning methods need not bake every theoretical
  condition into architecture, saving compute and admitting flexibility.
\end{enumerate}

\section{Related Work}
\label{sec:related-work}

A comprehensive comparison along all assumption axes is in
Table~\ref{tab:comparison} (App.~\ref{app:comparison}); detailed
technical comparisons with iVAE \citep{khemakhem2020variational}, \citet{kivva2022identifiability}, and boundary-richness vs.\
injectivity appear in Appendix~\ref{app:tech-comparisons}. Here we
highlight the lineages most relevant to our positioning.

\textbf{Nonlinear ICA.}
TCL \citep{hyvarinen2016unsupervised} and iVAE
\citep{khemakhem2020variational} identify smooth invertible mixings of
independent latents whose marginals vary across observed segments or
auxiliaries. In effect, our \condref{MST} distils the algebraic essence of
the ``diverse domains'' variability that powers TCL and iVAE---and
here the domains are \emph{latent} rather than
observed labels (Appendix~\ref{app:ivae-mst}).
Multi-view nonlinear ICA \citep{gresele2019incomplete} uses
several full mixings of a shared independent latent under a
sufficiently-distinct-views condition. Independent mechanism analysis (IMA)
\citep{gresele2021independent} postulates Jacobian-column orthogonality
of a smooth invertible mixing and is implemented through normalising
flows that are invertible by construction.
The mechanism-sparsity
line \citep{matthes2025mechanistic,lachapelle2023additive,
brady2023provably,brady2024interaction,zheng2023generalizing} places
interaction conditions on smooth invertible decoders. All are
differential at heart and assume smooth, typically invertible
generators.

\textbf{Relaxing injectivity and continuity.}
To our knowledge, only three works in this lineage (partially) relax
injectivity.
\citet{kivva2022identifiability} require injectivity \emph{on an
$\mapf$-image region} for LID and global injectivity for MID;
\citet{matthes2025mechanistic} focus on local disentanglement under local injectivity; \citet{roeder2021linear} obtain linear identifiability for \emph{discriminative} softmax-form models---a
different paradigm from generative pushforwards. Continuity of $\mapf$ is dropped only by us; \citet{kivva2022identifiability} rely on continuity for
density matching to obtain MID, though allow discontinuity for LID.

\textbf{Group symmetry in identifiability and disentanglement.}
\citet{higgins2018towards} define disentanglement via a \emph{given}
direct-product decomposition of a group on world states; the latent
law plays no role and there is no theorem for any concrete model
class. \emph{Indeterminacy in Generative Models}
\citep{xi2023indeterminacy} taxonomises model indeterminacies as
$\mathcal{A}(\mathcal{F})\cap\mathcal{A}(\mathcal{P}_z)$; it requires
injective generators and resolves multi-domain ambiguity by
\emph{observed} domain labels, and handles the GMM analogue only
by collapsing each domain to a single Gaussian (their \S4.3,
Thm.~4.5)---bypassing rather than trivialising the within-domain
symmetry that our \MST{} addresses. \citet{ahuja2022properties} place
structure on \emph{known} latent dynamics: shared equivariances of the
dynamics define the ambiguity class, with smooth invertible
observation map throughout. Group ideas long appear in invariant
\textit{inference} \citep{eaton1989group,giri1996group},  starting from a
\emph{prescribed} group action.

\textbf{PWA${}+{}$GMM, and multi-mechanism modelling.}
\citet{kivva2022identifiability} opened the unsupervised PWA+GMM
regime; \citet{xu2026identifiability} extend to potentially-degenerate
GMMs via encoder sparsity (still injective continuous PWA);
\citet{lopez2024toward} specialise Kivva to comparative DGMs with an
\emph{observed} contrastive split. Generalised PCA
\citep{vidal2005generalized} and clusterwise linear regression
\citep{hennig2000identifiability} are remote relatives---single-condition
projective settings without unobserved domain labels or branch
analysis. Modelling generating processes via multiple temporal or
spatial mechanisms is classically well-motivated: temporal exemplars
include switching/hybrid systems \citep{paoletti2007identification},
recurrent switching linear dynamics \citep{linderman2017bayesian}, and
PWA neuroscience dynamics \citep{durstewitz2017state}; spatial
exemplars include free-form-deformation and PWA image registration
\citep{rueckert1999nonrigid,sotiras2013deformable,
freifeld2017transformations,su2022nonrigid}. None addresses
identifiability; we bring algebraic identifiability to this PWA
modelling tradition.

\section{Setup and preparations}
\label{sec:setup}

A latent random variable~$\latentZ$ follows a GMM
and is observed through an unknown PWA map~$\mapf$;
that is, $\RVobs = \mapf(\latentZ)$.%
\footnote{Additive independent noise $\varepsilon$ with 
non-vanishing characteristic function can be
  handled via deconvolution; see e.g., \citep{khemakhem2020variational}.}
Both the map and the latent law are hidden: only the data
of~$\RVobs$ is available. We ask whether $\latentZ$ and~$\mapf$ can
be (partially) recovered from this data.

\subsection{PWA maps and Gaussian mixture models}
\label{subsec:pwa-gmm}

\begin{definition}[Irredundant GMM]
\label{def:gmm}
A random variable~$\GMMgeneric$ is an \emph{irredundant GMM} with
$\numcompgeneric$ components if its density is
$\sum_{k=1}^{\numcompgeneric}\wgeneric_k\,\NormalDensity(\mugeneric_k,\Sigmageneric_k)$,
where all weights are positive, all covariances are positive definite,
and parameters $(\mugeneric_k,\Sigmageneric_k)\neq(\mugeneric_\ell,\Sigmageneric_\ell)$
for $k\neq\ell$.
\end{definition}

\begin{definition}[Reduced PWA map]
\label{def:reduced-pwa}
A map $\mapgeneric\colon\R^{\ambientdim}\to\R^{\ambientdim}$ is a
\emph{reduced PWA map} if it is specified by a finite index
set~$\indexgeneric$ and, for each $i\in\indexgeneric$, an invertible
affine rule $(\lingeneric_i,\biasgeneric_i)$ together with a branch
domain~$\branchgeneric_i\subset\R^{\ambientdim}$%
\footnote{The \emph{branch domain}~$\branchgeneric_i$ is a piece of the map's
  \emph{input} space~$\R^{\ambientdim}$; it should not be confused with a
  ``domain'' in the multi-domain sense (a mixture component / environment).},
such that
each~$\branchgeneric_i$ is a finite union of full-dimensional polyhedra
with pairwise disjoint interiors; the branch domains
$\{\branchgeneric_i\}_{i\in\indexgeneric}$ partition~$\R^{\ambientdim}$
up to measure-zero boundaries; and the rule map
$i\mapsto(\lingeneric_i,\biasgeneric_i)$ is injective.
\end{definition}

Reducedness prevents duplicate bookkeeping: knowing the local affine
rule determines the branch index. This is not a restriction in
practice---any PWA map can be converted to a reduced one by merging
branches with identical affine rules (their unions are always finite
unions of full-dimensional polyhedra). A single branch index may
therefore be active on several disconnected regions of the observation
space; this causes no issue in the analysis below.

\begin{definition}[PWA equality]
\label{def:pwa-equality}
Two reduced PWA maps $\mapf$ and~$\mapg$ are \emph{equal}, written
$\mapf=\mapg$, if there exists a bijection $\pi\colon\Ig\to\If$ such
that matched branches have identical affine rules
($\ling{j}=\linf{\pi(j)}$ and $\biasg{j}=\biasf{\pi(j)}$) and their
domains coincide up to measure-zero boundaries.
\end{definition}

\medskip\noindent\textbf{Convention.}
Throughout this paper, \emph{PWA map} and \emph{GMM} always refer to
reduced PWA maps and irredundant GMMs, respectively.

\subsection{Identifiability}
\label{subsec:identifiability}

We fix an arbitrary \emph{truth} pair $(\mapf,\latentZ)$ and an
\emph{estimator} pair $(\mapg,\latentZest)$, both PWA--GMM pairs
in~$\R^{\ambientdim}$. We observe $\RVobs = \mapf(\latentZ)$. The
central assumption throughout is the \emph{distributional equality}
\begin{equation}
  \label{eq:distributional-equality}
  \mapf(\latentZ) \deq \mapg(\latentZest),
\end{equation}
meaning the two models produce the same observable distribution. We
consider a hierarchy of identifiability notions, from weakest to
strongest, summarised in Table~\ref{tab:ladder}.


\begin{table}[h]
\caption{Identifiability hierarchy. \condref{PI} on both sides is
  always assumed; all other conditions are on the truth side.
  Under \condref{DD} (conditional independence required), $\alpha$ takes the ICA form
  $P\Lambda z+c$ at every level (Cor.~\ref{cor:pwa-ica}).}\vspace{-0pt}
\label{tab:ladder}
\centering
\small
\setlength{\tabcolsep}{3pt}
\begin{tabular}{@{}l p{2.4cm} p{2.8cm} c p{5.6cm}@{}}
\toprule
\textbf{Level} & \textbf{Assns.} & \textbf{Conclusion} & \textbf{Ref.} & \textbf{Meaning} \\
\midrule
LID
  & \condref{PI} + \condref{SB}
  & $\alpha(\latentZest)\deq\latentZ$
  & Thm~\ref{thm:lid}
  & Latent distribution recovered up to a global affine reparametrization. \\[2pt]
MID
  & $+$ \condref{USB} $+$ \condref{MST}
  & $\mapf = \mapg\circ\alpha^{-1}$
  & Thm~\ref{thm:mid}
  & Generative mechanism recovered up to~$\alpha$. \\[2pt]
$+$ Post.
  & (from MID)
  & posteriors linked by $\alpha$
  & Prop~\ref{prop:posterior-id}
  & Posterior inference translates between the two models. \\[2pt]
Point.
  & $+$ $\mapf$ injective
  & $z = \alpha(\hat{z})$
  & Prop~\ref{prop:pointwise-id}
  & True source recovered pointwise (classical ICA goal). \\
\bottomrule
\end{tabular}
\vspace{-0pt}
\end{table}

Each level builds on the previous. Under \condref{DD} (conditional independence of coordinates given the component),
the affine link~$\alpha$ is further refined to the ICA form
$\alpha(z)=P\Lambda z+c$ (Cor.~\ref{cor:pwa-ica}); this is not a new
rung in the hierarchy, but a cross-cutting refinement that sharpens
$\alpha$ at every level. 

\subsection{Chambers and the pushforward density}
\label{subsec:chambers}

To analyze~\eqref{eq:distributional-equality}, we decompose the
observation space into regions where each map acts with constant branch
structure. This converts the global PWA pushforward equality into a
\emph{local} matching of finite Gaussian mixtures.

\begin{definition}[Chambers of a single map]
\label{def:chambers}
For a PWA map~$\mapgeneric$ with branch index set~$\indexgeneric$, the
\emph{active set} at $x\in\R^{\ambientdim}$ is
$\Ichamber{\mapgeneric}{x}
  := \bigl\{i\in\indexgeneric : x \in \mapgeneric(\branchgeneric_i)\bigr\}$.
A \emph{chamber} of~$\mapgeneric$ is a maximal open connected subset
of~$\R^{\ambientdim}$ on which $\Ichamber{\mapgeneric}{\cdot}$ is
constant.
\end{definition}

See Appendix Figure~\ref{fig:exC} for a 4-branch PWA map with 4 chambers. In general, the chamber structures of two different maps~$\mapf$
and~$\mapg$ may not coincide: one map's chamber boundary may fall in
the interior of another map's chamber. We therefore refine the
partition to the common one.

\begin{definition}[Fine chambers]
\label{def:fine-chambers}
Given two PWA maps $\mapf,\mapg$, the \emph{fine chambers} are the
maximal open connected subsets of~$\R^{\ambientdim}$ on which both
$\Ichamber{\mapf}{\cdot}$ and $\Ichamber{\mapg}{\cdot}$ are
simultaneously constant.
\end{definition}

On each fine chamber~$\fineC$, we write $\Ichamber{\mapf}{\fineC}$ and
$\Ichamber{\mapg}{\fineC}$ for the constant active sets. The
\emph{pushforward} of the $k$-th latent component under truth
branch~$i$ is the Gaussian $\Gauss(\paramf{i}{k})$ with parameters
$\paramf{i}{k} :=
  \bigl(\linf{i}\mutrue{k}+\biasf{i},\;\linf{i}\Sigmatrue{k}\linf{i}^\top\bigr)$,
and analogously $\paramEst{j}{\ell}$ for~$(\mapg,\latentZest)$.
Inside~$\fineC$, the distributional equality~\eqref{eq:distributional-equality}
becomes a pointwise identity between two finite Gaussian mixtures:
\begin{equation}
  \label{eq:chamber-equality}
  \sum_{i\in\Ichamber{\mapf}{\fineC}}
    \sum_{k=1}^{\numcomptrue}
      \wtrue{k}\,\NormalDensityParam{\paramf{i}{k}}(x)
  =
  \sum_{j\in\Ichamber{\mapg}{\fineC}}
    \sum_{\ell=1}^{\numcompest}
      \west{\ell}\,\NormalDensityParam{\paramEst{j}{\ell}}(x)
\end{equation}
for all $x\in\fineC$. Crucially, each Gaussian kernel
$\NormalDensityParam{\paramf{i}{k}}$ is defined globally
on~$\R^{\ambientdim}$ (its parameters come from the branch rule
$(\linf{i},\biasf{i})$ and the component parameters, not
from~$\fineC$). Extending both sides of~\eqref{eq:chamber-equality} by
analytic continuation from the chamber~$\fineC$ produces two
globally-defined analytic functions $F_{\fineC}$ and $G_{\fineC}$ that
agree everywhere on~$\R^{\ambientdim}$. We stress that this is an
analytic extension of the \emph{chamber-mixture function}, not of the
true pushforward density---the latter is given by different active sets
on different chambers. In particular, different fine chambers generally
yield different analytic extensions, and we never equate extensions
across chambers. This local--global interplay is a recurring theme of
our results.

We can already draw a nontrivial conclusion without any further
assumptions.

\begin{lemma}[Active-branch-count equality]
\label{lem:branch-count}
Under~\eqref{eq:distributional-equality}, the number of active
branches is the same on both sides of every fine chamber:
$|\Ichamber{\mapf}{\fineC}| = |\Ichamber{\mapg}{\fineC}|$.
\end{lemma}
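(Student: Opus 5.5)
The natural route is to compare the total mass of both sides of the chamber identity~\eqref{eq:chamber-equality}, after analytic continuation. By the discussion preceding the lemma, the left side extends to an analytic function $F_{\fineC}$ on all of~$\R^{\ambientdim}$, namely the same finite sum of globally defined Gaussian densities $\sum_{i\in\Ichamber{\mapf}{\fineC}}\sum_k \wtrue{k}\NormalDensityParam{\paramf{i}{k}}$. Similarly, the right side extends to $G_{\fineC}=\sum_{j\in\Ichamber{\mapg}{\fineC}}\sum_\ell \west{\ell}\NormalDensityParam{\paramEst{j}{\ell}}$.

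First I justify that these two functions agree everywhere. Each is a finite sum of Gaussian densities with positive definite covariances; these are real-analytic on~$\R^{\ambientdim}$, and even entire in $x\in\mathbb{C}^{\ambientdim}$. The fine chamber $\fineC$ is a nonempty open set, and $F_{\fineC}=G_{\fineC}$ holds on $\fineC$. I should note why it holds pointwise rather than only almost everywhere. The density of $\mapf(\latentZ)$ on $\fineC$ equals the left side of~\eqref{eq:chamber-equality}, by the change of variables for each invertible affine branch restricted to its domain; likewise for $\mapg(\latentZest)$. The two densities agree a.e.\ by~\eqref{eq:distributional-equality}, and both expressions are continuous, so they agree everywhere on $\fineC$. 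By the identity theorem for real-analytic functions on the connected set $\R^{\ambientdim}$, we get $F_{\fineC}\equiv G_{\fineC}$ globally.

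Next I integrate over $\R^{\ambientdim}$. Each Gaussian density has integral $1$, and the weights sum to one on each side ($\sum_k\wtrue{k}=1$, $\sum_\ell\west{\ell}=1$). Hence
\[
\int F_{\fineC}=\sum_{i\in\Ichamber{\mapf}{\fineC}}\sum_{k}\wtrue{k}=|\Ichamber{\mapf}{\fineC}|,
\qquad
\int G_{\fineC}=|\Ichamber{\mapg}{\fineC}|,
\]
and equality of the functions gives equality of the counts.

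The only delicate step is the pointwise density identity on $\fineC$. This requires that the active set be constant on $\fineC$ and that $\fineC$ avoid the images of branch boundaries. Those images are finite unions of lower-dimensional polyhedral pieces, and fine chambers are open sets on which active sets are constant by definition. So on $\fineC$ each active branch contributes its full affine pushforward density and each inactive branch contributes nothing. No irredundancy, \condref{PI}, or Gaussian-mixture identifiability theorem is needed; the whole argument is analytic continuation followed by a mass count.
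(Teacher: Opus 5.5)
Your proposal is correct and follows the paper's proof essentially step for step: both chamber mixtures extend analytically from the open set $\fineC$ to all of $\R^{\ambientdim}$, and integrating then reads off $|\Ichamber{\mapf}{\fineC}|$ and $|\Ichamber{\mapg}{\fineC}|$ as the total masses. Your extra argument that \eqref{eq:chamber-equality} holds pointwise on $\fineC$ (densities equal almost everywhere, both continuous) is sound; the paper takes this step for granted from its setup.
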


In other words, the total ``mass budget'' on each fine chamber is
determined by the number of active branches, and this number must
match on both sides.

\section{Theoretical results}
\label{sec:results}

Full proofs of all results appear in Appendix~\ref{app:proofs};
proof sketches for the key results are given below. Illustrated examples are given in Appendix~\ref{app:examples} and \ref{app:sb-usb-winj}.

\textbf{Proof strategy.}
Our whole proof strategy has an \emph{algebraic} flavour and is built around \emph{bijections} between the truth and estimator, at progressively higher levels. \condref{PI} gives a \emph{branch-component-pair bijection} $\Psi$.
\condref{SB} restricts $\Psi$ to a single boundary and reads off a component-level affine link $\alpha$: it ``matches''--via the induced \emph{component bijection}--every estimator component to a truth component \emph{and} ``stitches'' the latent components together by this same $\alpha$, yielding LID.
\condref{USB} gives a \emph{branch bijection}: it ``matches'' every estimator branch to a truth branch via its own simple boundary.
\condref{MST} upgrades the branch bijection to a global branch-level affine $\alpha$: it ``stitches'' across all branches by trivialising the mixture symmetry group, completing MID.

\subsection{Pushforward parameter injectivity and consequences}
\label{subsec:param-matching}

We now introduce the main genericity condition on map--latent pairs.

\textbf{Condition \condref{PI} (\PIname).}
\condlabel{PI}
A PWA-GMM pair $(\mapgeneric,\GMMgeneric)$ satisfies \condref{PI} if the map
 from branch--component indices $(j,\ell)$ $\mapsto$ pushforward parameters $\paramImg{j}{\ell}$ is injective.

This is the \emph{interaction contrast} of Table~\ref{tab:contrasts}: it
formalises the principle that the map rules and the latent parameters
should not conspire to produce collisions in the observation space.
Collisions within the same branch are already excluded by invertibility
of the branch rule composed with irredundancy of the GMM; \condref{PI}
additionally forbids cross-branch collisions. It decomposes into two sub-conditions:

\begin{enumerate}[leftmargin=*]

\item Same-component \condref{PI}:
  no collision
  $\paramImg{j}{\ell}=\paramImg{m}{\ell}$.
  A violation means the \emph{branch-difference map}
  $\mapgeneric_m^{-1}\circ\mapgeneric_j$
  is a \emph{symmetry} of
  component~$\Gauss_\ell$.
  
\item Cross-component \condref{PI}:
  no collision
  $\paramImg{j}{\ell}=\paramImg{m}{n}$
  with $\ell\neq n$.
  Such a collision forces the affine difference
  map $\mapgeneric_m^{-1}\circ\mapgeneric_j$
  to \emph{transport} component~$\Gauss_\ell$
  exactly onto~$\Gauss_n$.

\end{enumerate}
Both are \emph{generic}, i.e., hold almost surely (dimension counting in Appendix~\ref{app:pi-genericity}).
The view of
\PI{} as Independent Causal Mechanism (ICM)~\citep{scholkopf2021toward} is
direct: GMM parameters are the mechanism for how $\latentZ$ is
generated, and affine branch rules are the mechanism for how
$\RVobs$ is generated from $\latentZ$; \PI{} forbids conspiracy
between them at the parameter level. 

Chamber-wise \condref{PI}---parameter injectivity restricted to active pairs per chamber---is weaker than the stated global version; the glueing step in Lemma~\ref{lem:global-param-matching} requires the full global condition (see Appendix~\ref{app:global-pi} for the distinction and a counterexample).

We first record a standard but central fact: distinct non-degenerate
Gaussian densities are linearly independent as functions (see
Appendix~\ref{app:gaussian-indep} for the precise statement). We denote this below as
Lemma~\ref{lem:gaussian-independence} and refer to it as the
\emph{Gaussian-independence lemma}.

\begin{remark}[Equality on open sets implies equality everywhere]
\label{rem:open-set}
If two finite Gaussian mixtures agree on any open set
$U\subseteq\R^{\ambientdim}$, they agree on all of~$\R^{\ambientdim}$:
both are real-analytic, and $\R^{\ambientdim}$ is connected. The
Gaussian-independence lemma then applies to their difference:
\emph{parameters appearing on one side must also appear on the other,
and matched parameters carry equal coefficients.} We will use this
observation repeatedly when
analysing~\eqref{eq:chamber-equality} on each fine chamber.
\end{remark}

\begin{lemma}[Global parameter bijection under (PI)]
\label{lem:global-param-matching}
Assume~\eqref{eq:distributional-equality} and that both
$(\mapf,\latentZ)$ and $(\mapg,\latentZest)$ satisfy \condref{PI}.
Then there exists a unique bijection
\begin{equation}
  \label{eq:psi-global}
  \Psi\colon \Ig\times\IndexEst \;\longleftrightarrow\; \If\times\IndexTrue
\end{equation}
such that $\paramEst{j}{\ell}=\paramf{i}{k}$ and
$\west{\ell}=\wtrue{k}$ whenever $\Psi(j,\ell)=(i,k)$. On each fine
chamber~$\fineC$, $\Psi$ restricts to a bijection between the active
pairs on each side.
\end{lemma}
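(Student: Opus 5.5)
We first work chamber by chamber and then glue the local matchings into a global one. Fix a fine chamber $\fineC$. By Remark~\ref{rem:open-set}, the identity~\eqref{eq:chamber-equality} on the open set $\fineC$ extends to an identity $F_{\fineC}=G_{\fineC}$ of finite Gaussian mixtures on all of $\R^{\ambientdim}$. Under \condref{PI} on the truth side, the pairs $(i,k)$ with $i\in\Ichamber{\mapf}{\fineC}$ carry pairwise distinct parameters $\paramf{i}{k}$. So the left side is a mixture with no repeated kernels, with coefficients $\wtrue{k}>0$; likewise for the estimator side.

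The Gaussian-independence lemma (Lemma~\ref{lem:gaussian-independence}) applied to $F_{\fineC}-G_{\fineC}$ then yields three conclusions. The set of parameters occurring on the two sides is the same. Each truth parameter $\paramf{i}{k}$ equals exactly one estimator parameter $\paramEst{j}{\ell}$, exactly one because the estimator's parameters are distinct. Matched kernels carry equal coefficients, $\west{\ell}=\wtrue{k}$. This gives a bijection $\Psi_{\fineC}$ between the active pairs $\Ichamber{\mapg}{\fineC}\times\IndexEst$ and $\Ichamber{\mapf}{\fineC}\times\IndexTrue$. It is characterized by parameter equality, so it is uniquely determined.

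To glue, define $\Psi(j,\ell)$ as the unique truth pair $(i,k)$ with $\paramf{i}{k}=\paramEst{j}{\ell}$. This is well defined only if some truth pair has that parameter, and then uniqueness is immediate from global \condref{PI} on the truth side.

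Existence comes from coverage. Each branch domain $\branchg{j}$ is a finite union of full-dimensional polyhedra and $\ling{j}$ is invertible, so $\mapg(\branchg{j})$ has nonempty interior. The fine chambers cover $\R^{\ambientdim}$ up to a measure-zero set, since chamber boundaries lie in finite unions of images of polyhedral facets. Hence some fine chamber $\fineC$ has $j\in\Ichamber{\mapg}{\fineC}$, and the local matching on $\fineC$ supplies a truth pair with the same parameter.

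The global map $\Psi$ is injective because \condref{PI} holds on the estimator side. It is surjective by the symmetric argument, since every truth branch is active on some fine chamber. Weights match because they already match chamberwise. On each $\fineC$, $\Psi$ agrees with $\Psi_{\fineC}$, since both are defined by parameter equality, so $\Psi$ restricts to a bijection of active pairs. Uniqueness of $\Psi$ also follows from \condref{PI}: any bijection with the stated property must send $(j,\ell)$ to the unique pair with equal parameters.

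The main obstacle is the gluing consistency. Chamberwise, distinctness of the active pairs suffices. Globally, the same estimator pair $(j,\ell)$ appears in several chambers, and a priori it could be matched to different truth pairs $(i,k)\neq(i',k')$ in different chambers. That happens only if $\paramf{i}{k}=\paramf{i'}{k'}$ with one branch active only in one chamber and the other only in the other. Global \condref{PI} on the truth side rules this out, and the analogous issue for $\Psi^{-1}$ is ruled out by global \condref{PI} on the estimator side. A secondary point to check is that the fine chambers have full measure, so that every branch is genuinely witnessed. This needs a brief argument that the active-set discontinuities lie in a finite union of hyperplanes, which holds because the images $\mapf(\branchf{i})$ and $\mapg(\branchg{j})$ are polyhedral.
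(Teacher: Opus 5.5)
Your proposal is correct and follows essentially the same route as the paper. You match active pairs on each fine chamber via Remark~\ref{rem:open-set} and the Gaussian-independence lemma, then glue into a global $\Psi$ using global \condref{PI} on each side, with existence from every branch being active on some fine chamber. You also make explicit two points the paper leaves implicit: the uniqueness of $\Psi$, and why the fine chambers cover $\R^{\ambientdim}$ up to a null set.
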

\vspace{-10pt}
\begin{proof}[Proof sketch]
On each fine chamber~$\fineC$, both sides
of~\eqref{eq:chamber-equality} are finite GMMs agreeing on an open
set; Remark~\ref{rem:open-set} and \condref{PI} yield a
bijection~$\Psi_{\fineC}$ matching active branch-component pairs by
parameter equality. Uniqueness of each match (by \condref{PI} on both
sides) forces all chamber-wise bijections to agree: if $(j,\ell)$ and
$(i,k)$ are matched on some~$\fineC$, then
$\paramEst{j}{\ell}=\paramf{i}{k}$ is a global identity, so
$\Psi_{\fineC'}(j,\ell)=(i,k)$ for any other chamber $\fineC'$
containing branch~$j$.
This is the step that requires \emph{global} rather than merely
chamber-wise \condref{PI} (Remark~\ref{rem:local-pi}).
\end{proof}
\vspace{-10pt}
Lemma~\ref{lem:branch-count} (no~\condref{PI}) and
Lemma~\ref{lem:global-param-matching} (with~\condref{PI}) form the
\emph{foundational machinery} reused throughout this paper.
The former controls the branch-count budget;
the latter turns density matching into a discrete combinatorial bijection.
Both play the role of \emph{matching} the estimator to the truth at
the most basic level; all higher-level identifiability results will
build on this matching. We assume double~\condref{PI} (on both sides) hereafter. 

For adjacent fine chambers $(\fineC,\fineCp)$, define the
\emph{leaving count}
$\Df(\fineC,\fineCp):=|\Ichamber{\mapf}{\fineC}\setminus\Ichamber{\mapf}{\fineCp}|$,
and symmetrically for~$\mapg$. By swapping $\fineC\leftrightarrow\fineCp$,
one obtains \emph{entering counts} $\Ef,\Eg$.

\noindent\emph{Matching identifies three combinatorial invariants---the
component count, the branch count, and the chamber structure}
(Lemmas~\ref{lem:comp-count}, \ref{lem:boundary-counting},
and~\ref{lem:chamber-coincidence} in Appendix~\ref{app:proofs}),
collected in Proposition~\ref{prop:matching-consequences}.

\begin{proposition}[Consequences of matching]
\label{prop:matching-consequences}
Under~\eqref{eq:distributional-equality} and double \condref{PI}:
the component counts agree, $\numcompest = \numcomptrue =: K$,
which immediately implies $|\If|=|\Ig|$;
at every pair of adjacent fine chambers the leaving and entering branch counts agree,
$\Df(\fineC,\fineCp) = \Dg(\fineC,\fineCp)$ and $\Ef(\fineC,\fineCp) = \Eg(\fineC,\fineCp)$;
and consequently
$\mapf$-chambers, $\mapg$-chambers, and fine chambers all coincide.
\end{proposition}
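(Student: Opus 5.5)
The plan is to derive all three parts from the bijection $\Psi$ of Lemma~\ref{lem:global-param-matching} and its chamber-wise restrictions $\Psi_{\fineC}$.

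\emph{Component counts.} I will argue through weights and fibres. Fix a truth branch $i$. On any fine chamber meeting $\mapf(\branchf{i})$ in an open set, the pair $(i,k)$ is active for every $k\in\IndexTrue$, since activity depends only on the branch. By Lemma~\ref{lem:branch-count}, each chamber $\fineC$ carries $|\Ichamber{\mapf}{\fineC}|\cdot K$ active pairs on the truth side and $|\Ichamber{\mapg}{\fineC}|\cdot K'$ on the estimator side. These are equal because $\Psi_{\fineC}$ is a bijection of active pairs, and the branch counts are equal by Lemma~\ref{lem:branch-count}. Choosing any chamber with a nonempty active set (every chamber has one, since the branch domains cover $\R^{\ambientdim}$ and $f$ is invertible on each branch, so images have positive measure) and dividing gives $K'=K$. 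Then $|\Ig|\cdot K=|\If|\cdot K$, because $\Psi$ is a global bijection, so $|\If|=|\Ig|$.

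\emph{Leaving and entering counts.} Take adjacent fine chambers $\fineC,\fineCp$. The global $\Psi$ restricts to bijections on both chambers, so it maps the truth pairs active on $\fineC$ but not on $\fineCp$ bijectively onto the corresponding estimator pairs. The main subtlety is this: if $\Psi(j,\ell)=(i,k)$ with $j$ active on $\fineC$, I need that $j$ leaves exactly when $i$ leaves. This follows from the restriction property, since $(j,\ell)$ is active on $\fineCp$ iff $\Psi(j,\ell)$ is active on $\fineCp$. Hence the set-differences of pairs correspond bijectively. Each leaving branch contributes exactly $K$ pairs on both sides, so $K\,\Df=K\,\Dg$, giving $\Df=\Dg$. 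Swapping the two chambers gives $\Ef=\Eg$.

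\emph{Coincidence of chambers.} Fine chambers refine both the $\mapf$-chambers and the $\mapg$-chambers. It therefore suffices to show that no wall between adjacent fine chambers is invisible to one map while visible to the other. Suppose $\Ichamber{\mapf}{\cdot}$ changes across a wall between $\fineC$ and $\fineCp$. Then $\Df+\Ef>0$, hence $\Dg+\Eg>0$, so $\Ichamber{\mapg}{\cdot}$ changes too, and conversely. So every fine-chamber wall is a wall for both maps. Consequently the union of adjacent fine chambers with equal $\mapf$-active set, which defines an $\mapf$-chamber, also has constant $\mapg$-active set, and vice versa.

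\emph{Main obstacle.} I expect the hardest step to be making this last argument rigorous. Two points need care. First, adjacency has to mean sharing a codimension-one face; lower-dimensional contacts should be handled by generic paths crossing only facets. Second, a region on which the active set is constant can be reached by passing through chambers with different active sets, and connectivity arguments must respect this. My first attempt will be a path argument. Within one $\mapf$-chamber, connect any two points by a path crossing fine-chamber walls only transversally through facets. By the above, no such crossing changes the $\mapf$-active set, so none changes the $\mapg$-active set either. Hence the $\mapg$-active set is constant on the $\mapf$-chamber. By maximality and symmetry, the chambers of the two maps coincide, and therefore both coincide with the fine chambers.
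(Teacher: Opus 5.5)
Your proposal is correct and follows essentially the paper's route: $K'=K$ by counting active pairs under $\Psi_{\fineC}$ on one chamber with nonempty active set (only existence is needed; not every chamber has one, e.g.\ $(-\infty,0)$ for $|x|$), then $\Df=\Dg$ and $\Ef=\Eg$ by restricting $\Psi$ to leaving/entering blocks, and chamber coincidence from $\Df+\Ef>0\iff\Dg+\Eg>0$. Your path argument is a more careful version of the paper's terse last step, and it can be bypassed: the block-restriction argument never uses adjacency, so it gives $\Ichamber{\mapf}{\fineC}=\Ichamber{\mapf}{\fineCp}\iff\Ichamber{\mapg}{\fineC}=\Ichamber{\mapg}{\fineCp}$ for any two fine chambers.
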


We write $K:=\numcomptrue = \numcompest$ throughout the rest of the
paper.
It is worth stressing that $|\If|=|\Ig|$ does \emph{not} follow
from active-count equality (Lemma~\ref{lem:branch-count}) alone:
chambers generally share branches, and before \condref{PI}, we have no
reason to expect the sharing patterns of the two maps to match.
It is the global bijection~$\Psi$ (Lemma~\ref{lem:global-param-matching})
that forces $|\If|=|\Ig|$, and then the boundary counting forces
the sharing patterns themselves to agree.

We begin with the simplest but already nontrivial identifiability
setting: the observable law is itself a GMM. This corresponds to the
true map being the identity, $\mapf=\Id$.

\begin{theorem}[From PWA to affine: the symmetry-rigidity]
\label{thm:global-affine}
Let $\mapg$ be a PWA map, $\latentZest$ a GMM, and suppose
$(\mapg,\latentZest)$ satisfies \condref{PI}. If
$\mapg(\latentZest)\deq\latentZ$, where $\latentZ$ is a GMM, then
$\mapg$ is globally affine: $\mapg(x) = Ax+b$ for some
$A\in\GL(\ambientdim),\; b\in\R^{\ambientdim}$, and for all~$x$.

In particular, if $\mapg(\latentZest)\deq\latentZest$, $\mapg$ is globally affine. That is, \emph{\textbf{no} strict PWA} self-map satisfies \condref{PI}.
\end{theorem}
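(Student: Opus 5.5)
We view $\latentZ$ as the pushforward of itself under the trivial PWA map $\mapf=\Id$, which has the single branch $\If=\{1\}$ with rule $(I,0)$ and branch domain $\R^{\ambientdim}$. The plan is to feed this into the matching machinery and show that $\mapg$ can have only one branch.

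\textbf{Step 1: the truth pair satisfies \condref{PI}.} With only one branch, \condref{PI} for $(\Id,\latentZ)$ reduces to injectivity of $k\mapsto(\mutrue{k},\Sigmatrue{k})$. This is exactly irredundancy of the GMM (Definition~\ref{def:gmm}). So double \condref{PI} holds, and the distributional equality~\eqref{eq:distributional-equality} holds with $\mapf=\Id$.

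\textbf{Step 2: count branches.} Proposition~\ref{prop:matching-consequences} gives $|\Ig|=|\If|=1$. It is worth seeing why directly. Lemma~\ref{lem:global-param-matching} yields a bijection $\Psi\colon\Ig\times\IndexEst\to\{1\}\times\IndexTrue$, so $|\Ig|\,\numcompest=\numcomptrue$. The component-count equality $\numcompest=\numcomptrue$ then forces $|\Ig|=1$.

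If I want a self-contained argument rather than citing Lemma~\ref{lem:comp-count}, I would use weights. The $\mapg$-side mixture carries total weight $\sum_{j\in\Ig}\sum_\ell\west{\ell}=|\Ig|$, counted with multiplicities over all pairs, globally. The truth carries total weight $1$. Under $\Psi$, coefficients match pairwise: $\west{\ell}=\wtrue{k}$. Summing over the bijection gives $|\Ig|\cdot 1=1$.

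\textbf{Step 3: conclude $\mapg$ is affine.} Since $\Ig$ has one element, its branch domain is all of $\R^{\ambientdim}$ up to a null set, and $\mapg(x)=Ax+b$ with $A$ invertible. Strictly, $\mapg$ equals this affine map up to measure-zero boundaries; it is affine as a PWA map in the sense of Definition~\ref{def:pwa-equality}.

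The ``in particular'' clause is the special case $\latentZ:=\latentZest$. A strict PWA self-map (with $|\Ig|\ge2$) satisfying \condref{PI} would contradict Step 2.

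\textbf{Main obstacle.} The delicate point is Step 2, namely ensuring the weight and count bookkeeping is global rather than chamber-wise. The $\mapg$-side pairs are active only on parts of the space, so summing weights over all of $\Ig\times\IndexEst$ is not obviously equal to a total mass.

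I would handle this through Lemma~\ref{lem:branch-count}, using that $\mapf=\Id$ has exactly one active branch on every chamber. This gives $|\Ichamber{\mapg}{\fineC}|=1$ everywhere, so the images $\mapg(\branchg{j})$ are essentially disjoint and cover $\R^{\ambientdim}$. On a chamber $\fineC$ with active branch $j$, Remark~\ref{rem:open-set} gives $\sum_\ell\west{\ell}\NormalDensityParam{\paramEst{j}{\ell}}\equiv\sum_k\wtrue{k}\NormalDensityParam{\paramf{1}{k}}$ globally. Integrating, the left side has mass $1$ and the right side has mass $1$, so branch $j$'s full pushforward equals the law of $\latentZ$.

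If a second branch $m$ were active on another chamber, the same identity would hold for $m$. Then $\paramEst{j}{\ell}$ and $\paramEst{m}{\ell'}$ would both be matched to the same set of parameters $\{\paramf{1}{k}\}$. This violates the injectivity in \condref{PI} for $(\mapg,\latentZest)$, because the truth parameters are shared. Hence $|\Ig|=1$.
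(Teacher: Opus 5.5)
Your proof is correct, but it reaches $|\Ig|=1$ by a different route from the paper. The paper argues through the chamber structure. With $\mapf=\Id$, Lemma~\ref{lem:chamber-coincidence} forces $\mapg$ to have a single chamber, and Lemma~\ref{lem:branch-count} gives active-set size $1$ on it. Lemma~\ref{lem:single-chamber-affine} then leaves only one branch: it is a full-measure argument showing that on a unique chamber every branch is active, so every branch domain would have full measure.

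You instead count through the global bijection $\Psi$ of Lemma~\ref{lem:global-param-matching}. One version combines $|\Ig|\,\numcompest=\numcomptrue$ with Lemma~\ref{lem:comp-count}. The simpler version sums the matched weights over both sides of $\Psi$, which gives $|\Ig|=|\If|$ in general without needing $\numcompest=\numcomptrue$. The concern in your last paragraph is therefore unfounded: this identity is bookkeeping over a finite weight-preserving bijection and needs no total-mass interpretation.

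Your closing chamber-wise argument is a valid third route, and perhaps the most transparent. Each branch of $\mapg$ on its own pushes $\latentZest$ forward to exactly the law of $\latentZ$. Two distinct branches would therefore share pushforward parameters; they are necessarily active on different chambers, and each is active somewhere because its image has nonempty interior. This shows why \emph{global} rather than chamber-wise \condref{PI} is what is needed. The injective ``101'' map of Example~3 in Appendix~\ref{app:examples} is exactly such a cross-chamber collision.

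Your counting routes give a purely combinatorial proof that bypasses the chamber-coincidence and single-chamber lemmas. The paper's route gives the intermediate fact that $\mapg$ has a single chamber, which is the form reused in the single-chamber cases of Theorem~\ref{thm:lid} and Proposition~\ref{prop:usb-inheritance}.
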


\begin{remark}[Essential role of (PI)]
\label{rem:pi-essential}
Theorem~\ref{thm:global-affine} highlights why \condref{PI} is
essential for \emph{any} identifiability result: if a GMM~$p$
admitted a \emph{strict} PWA self-map~$S$ (at least two branches,
$S_\sharp p=p$), then for any truth map~$\mapf$ and $\latentZ\sim p$,
setting $\mapg:=\mapf\circ S$ and $\latentZest:=\latentZ$ yields
$\mapg(\latentZest)=\mapf(S(\latentZ))\deq\mapf(\latentZ)$. Here
$\mapg$ is genuinely different from any affine reparametrization
of~$\mapf$, so identifiability fails. \condref{PI} is precisely the
condition that rules out such pathologies: by
Theorem~\ref{thm:global-affine}, any PWA self-map of a GMM
respecting \condref{PI} must be globally affine.
\end{remark}

\begin{remark}[Hard cases]
\label{rem:hard-cases}
Theorem~\ref{thm:global-affine} also implies: under double
\condref{PI}, if either $\mapf$ or~$\mapg$ is globally affine, so is
the other. Indeed, if $\mapf$ is affine, then $\mapf(\latentZ)$ is a
GMM, and the theorem forces~$\mapg$ to be affine. The focus in the
sequel is the nontrivial case where \emph{both} maps are
\emph{strictly PWA} (at least two branches) and the pushforward is
\emph{not} a GMM.
\end{remark}

\subsection{Law identifiability}
\label{subsec:lid}

We now tackle the first nontrivial step in the hierarchy:
\emph{law identifiability (LID)}, where we show the two latent GMMs
are related by a global affine map. This requires a
boundary-genericity condition---the \emph{mechanism contrast} of
Table~\ref{tab:contrasts}.

\textbf{Condition \condref{SB} (\SBname).}
\condlabel{SB}
A PWA map~$\mapgeneric$ satisfies \condref{SB} if either
(a)~$\mapgeneric$ has a single chamber
(i.e., $\mapgeneric$ is globally affine; Lemma~\ref{lem:single-chamber-affine}), 
or
(b)~there exist adjacent chambers $(\fineC,\fineCp)$ of~$\mapgeneric$
and a branch~$i_\star$ that is the \emph{unique} branch leaving \emph{or} entering
($\Ichamber{\mapgeneric}{\fineC}\setminus\Ichamber{\mapgeneric}{\fineCp}=\{i_\star\}$
\emph{or}
$\Ichamber{\mapgeneric}{\fineCp}\setminus\Ichamber{\mapgeneric}{\fineC}=\{i_\star\}$).

\condref{SB}
constrains only \emph{one direction} of the boundary crossing
(leaving \emph{or} entering); the opposite direction is unrestricted.
Thus a boundary with one branch leaving and three branches entering
qualifies as simple.
\textbf{Note:} \condref{SB} is satisfied even if multiple branches overlap across many boundaries, provided there exists \emph{at least one} directional boundary transition where the active branch set changes by exactly one element.

\condref{SB} is \emph{necessary}: Appendix~\ref{app:fold} gives a case where \condref{PI} holds yet \condref{SB} fails and LID breaks.

\begin{remark}[Genericity of (SB)]
\label{rem:sb-generic}
The violation of \condref{SB} for a strict PWA map is a rigid
condition: \emph{every} boundary must have at least two branches
leaving together \emph{and} at least two branches entering together.
For generic PWA maps under unconstrained parametrizations, each
image-space facet is typically traversed by a single branch, making
\condref{SB} a mild genericity requirement.
For continuous PWA maps, co-toggling can happen at shared breakpoints
(Appendix~\ref{app:winj-not-sb}), nevertheless, a \textit{single} non-co-toggling boundary suffices.
\end{remark}

\begin{theorem}[Law identifiability]
\label{thm:lid}
Assume~\eqref{eq:distributional-equality} with \condref{PI} on both
sides and \condref{SB} on~$\mapf$. Then $\numcompest=\numcomptrue=:K$,
and there exists an affine bijection $\alpha\in\Aff(\R^{\ambientdim})$
such that $\alpha(\latentZest)\deq\latentZ$; more precisely, there is a permutation~$\sigma$ of~$[K]$ with $\alpha_\sharp\Gauss(\muest{\ell},\Sigmaest{\ell})=\Gauss(\mutrue{\sigma(\ell)},\Sigmatrue{\sigma(\ell)})$ and $\west{\ell}=\wtrue{\sigma(\ell)}$ for every~$\ell$.
\end{theorem}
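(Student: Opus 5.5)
Let $(\fineC,\fineCp)$ be the adjacent chambers supplied by \condref{SB}(b), where branch $i_\star$ is the unique branch of $\mapf$ leaving (the entering case is symmetric, obtained by swapping $\fineC\leftrightarrow\fineCp$). If instead \condref{SB}(a) holds, $\mapf$ is globally affine. Then Remark~\ref{rem:hard-cases} and Theorem~\ref{thm:global-affine} make $\mapg$ globally affine, say $\mapg(z)=Bz+c$ and $\mapf(z)=Az+b$. The law equality $\mapg(\latentZest)\deq\mapf(\latentZ)$ then gives $\alpha:=\mapf^{-1}\circ\mapg$ directly. The componentwise statement follows from Lemma~\ref{lem:gaussian-independence} applied to the two GMMs $\alpha(\latentZest)$ and $\latentZ$, which yields a permutation matching parameters and weights by irredundancy. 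So the main work is case (b).

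\textbf{Step 1 (matching across the boundary).} By Lemma~\ref{lem:global-param-matching} we obtain the global bijection $\Psi$ with equal parameters and weights, and by Proposition~\ref{prop:matching-consequences} we have $\numcompest=\numcomptrue=K$. We also have $\Dg(\fineC,\fineCp)=\Df(\fineC,\fineCp)=1$, and the chambers of $\mapf$ and $\mapg$ coincide. Hence exactly one estimator branch $j_\star$ leaves when crossing from $\fineC$ to $\fineCp$. Since $\Psi$ restricts to bijections of the active pairs on $\fineC$ and on $\fineCp$, it maps the pairs active on $\fineC$ but not on $\fineCp$ bijectively onto each other. On the truth side these are exactly $\{i_\star\}\times[K]$; on the estimator side they are $\{j_\star\}\times[K]$. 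Therefore $\Psi(j_\star,\ell)=(i_\star,\sigma(\ell))$ for a map $\sigma$ on $[K]$, and $\sigma$ is a permutation because $\Psi$ is a bijection.

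\textbf{Step 2 (reading off $\alpha$).} Set $\alpha:=\mapf_{i_\star}^{-1}\circ\mapg_{j_\star}$, that is, $\alpha(z)=\linf{i_\star}^{-1}(\ling{j_\star}z+\biasg{j_\star}-\biasf{i_\star})$. This is an affine bijection because both rules are invertible. The parameter identity $\paramEst{j_\star}{\ell}=\paramf{i_\star}{\sigma(\ell)}$ reads
\[
\ling{j_\star}\muest{\ell}+\biasg{j_\star}=\linf{i_\star}\mutrue{\sigma(\ell)}+\biasf{i_\star},\qquad
\ling{j_\star}\Sigmaest{\ell}\ling{j_\star}^\top=\linf{i_\star}\Sigmatrue{\sigma(\ell)}\linf{i_\star}^\top .
\]
Applying $\linf{i_\star}^{-1}$ gives $\alpha_\sharp\Gauss(\muest{\ell},\Sigmaest{\ell})=\Gauss(\mutrue{\sigma(\ell)},\Sigmatrue{\sigma(\ell)})$. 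The weight clause of Lemma~\ref{lem:global-param-matching} gives $\west{\ell}=\wtrue{\sigma(\ell)}$. Summing over $\ell$ yields $\alpha(\latentZest)\deq\latentZ$.

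\textbf{Main obstacle.} The delicate point is Step 1's claim that the set of leaving pairs is exactly $\{j_\star\}\times[K]$ on the estimator side, with $\Psi$ mapping it onto $\{i_\star\}\times[K]$. This needs two facts: that chambers coincide, so that $\fineC,\fineCp$ are adjacent chambers for $\mapg$ as well; and that "active pair" means branch active with every component, so a leaving branch contributes all $K$ of its pairs. Both are supplied by Proposition~\ref{prop:matching-consequences} and the definition of the restriction of $\Psi$. The remaining care is to verify that $\Psi$ respects the set differences. This holds because it restricts to bijections on both $\fineC$ and $\fineCp$ and is globally injective, so it maps (active on $\fineC$) $\setminus$ (active on $\fineCp$) onto the corresponding truth-side set.
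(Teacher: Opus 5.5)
Your proposal is correct and follows essentially the same route as the paper. In the single-chamber case you use Theorem~\ref{thm:global-affine} with $\alpha=\mapf^{-1}\circ\mapg$; otherwise you restrict $\Psi$ to the leaving blocks to read off $\sigma$ and $\alpha=\mapf_{i_\star}^{-1}\circ\mapg_{j_\star}$. The paper obtains that restriction from the proof of Lemma~\ref{lem:boundary-counting}, whereas you re-derive it from injectivity of $\Psi$ on set differences, and your use of Gaussian independence for the componentwise permutation in the affine case fills in a detail the paper leaves implicit.
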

\vspace{-10pt}
\begin{proof}[Proof sketch]
Lemma~\ref{lem:comp-count} gives $K'=K$. The \condref{SB} boundary
yields unique toggling branches~$i_\star$ (truth, WLOG leaving) and
$j_\star$ (estimator, by Lemma~\ref{lem:boundary-counting}). The
bijection~$\Psi$ restricted to
$\{j_\star\}\times[K]\leftrightarrow\{i_\star\}\times[K]$ induces a
permutation~$\sigma$ and, via the parameter equality, determines the
affine map
$\alpha(z):=\linf{i_\star}^{-1}
  (\ling{j_\star}z+\biasg{j_\star}-\biasf{i_\star})$,
which transports each estimator component to the corresponding truth
component with matching weights.
\end{proof}
\vspace{-10pt}
The link~$\alpha$ is constructed from a specific toggling pair---it is not claimed canonical. LID says only that \emph{some} affine link exists. What will matter for MID is \emph{globalness}: forcing the same structural relationship across every branch, which is exactly what \condref{MST} delivers.

\subsection{Map identifiability}
\label{subsec:mid}

We now address the stronger question: must the maps $\mapf$
and~$\mapg$ themselves coincide (up to the latent reparametrization
$\alpha$)? The key challenge is that LID establishes the existence
of~$\alpha$ via a \emph{single} toggling boundary, but says nothing
about other branches. For MID, we must ensure that \emph{every branch}
of~$\mapg$ is related to the corresponding branch of~$\mapf$ by the
\emph{same} affine link. This requires two additional ingredients:
boundary richness (every branch is witnessed at some simple boundary)
and symmetry triviality---the \emph{domain contrast} of
Table~\ref{tab:contrasts}.

\textbf{Condition \condref{USB} (\USBname).}
\condlabel{USB}
A PWA map~$\mapgeneric$ satisfies \condref{USB} if either
(a)~$\mapgeneric$ has a single chamber, or
(b)~for every branch $i\in\indexgeneric$, there exist adjacent
chambers $(\fineC,\fineCp)$ of~$\mapgeneric$ such that~$i$ is the
unique leaving branch or the unique entering branch.

\condref{USB} is \emph{universal for branches}: every branch is
witnessed at some simple boundary. It does not require every boundary
to be simple. \condref{USB} trivially implies \condref{SB}. Similar to \condref{SB}, \condref{USB} is satisfied even if multiple branches overlap across many boundaries.
\condref{USB} is \emph{strictly weaker than global injectivity}; instead of weak injectivity \citep{kivva2022identifiability}, which demands a fragile, globally unopposed $1$-active chamber, \condref{USB} relies only on a robust, local combinatorial asymmetry (Appendix~\ref{app:sb-not-winj}\&\ref{app:winj-nongeneric}).

\begin{remark}[Genericity of (USB)]
\label{rem:usb-generic}
Similar to \condref{SB}, \condref{USB} is expected to hold generically:
its violation requires that for some branch~$i$, every image-space
facet of~$i$ is simultaneously shared by at least one other branch on
both the leaving and entering sides---a measure-zero algebraic
constraint intertwined with geometric and combinatorial rigidity.
\end{remark}

We next introduce the symmetry condition that ``locks'' the
identification. Given a GMM with density~$p$, its \emph{mixture
symmetry group} is $\Gmix{p} :=
\bigl\{T\in\Aff(\R^{\ambientdim}):\pushfwd{T}{p}=p\bigr\}$. This
group captures all affine bijections that preserve the mixture law as
a whole. It can include maps that \emph{permute components of equal
weight} (not just maps that fix each component individually; the
distinction is important and discussed in Appendix~\ref{app:dw-jst}).

\textbf{Condition \condref{MST} (\MSTname).}
\condlabel{MST}
A GMM~$p$ satisfies \condref{MST} if $\Gmix{p}=\{\Id\}$.

From the symmetry standpoint, \condref{MST} is
stronger than \condref{PI}: \condref{PI} rules out \emph{strict-PWA}
self-maps of the mixture (Remark~\ref{rem:pi-essential}), while
\condref{MST} further rules out \emph{affine} self-maps beyond the
identity.
\condref{MST} is required only for MID and above; likewise $K>1$ is needed only there.
Indeed, for $K=1$ (a single Gaussian), LID is trivial: any two non-degenerate Gaussians on $\R^\ambientdim$ differ by an affine bijection.

\condref{MST} is also \emph{necessary} for MID in a structural sense; see \hyperref[ex:101]{Example~3} for a prototype (``101'') construction. And this mechanism can be extended to \emph{any} GMM and beyond (Appendix~\ref{app:torsion}). With observed domain labels, iVAE's sufficient-variability assumption~(iv) implies our \condref{MST} (Appendix~\ref{app:ivae-mst}); indeed \condref{MST} is \emph{strictly weaker}, holding with as few as $\ambientdim+1$ components, where~(iv)---which needs $K>2\ambientdim$---does not even apply.

\begin{remark}[Genericity of (MST)]
\label{rem:mst-generic}
For $K\ge 2$ components with generically chosen means, covariances,
and weights, \condref{MST} holds. 
A violation requires either (a)~a
nontrivial affine map fixing every component individually, which is ruled out by, e.g., affinely independent means (Lemma~\ref{lem:aff-indep-jst} in Appendix), or (b)~a nontrivial affine map
permuting two or more components---i.e., the components are
partitioned into cycles and the map acts one-to-one within each
cycle---which is ruled out by, e.g., distinct weights
\emph{or} distinct covariance determinants
(Lemma~\ref{lem:cycle-det} in Appendix).
Other generic and sufficient conditions for \condref{MST} exist; see Appendix~\ref{app:sum-suff-cond} for a summary. All the conditions are algebraically
rigid and occur on a measure-zero set. See Figure~\ref{fig:ex5} in Appendix~\ref{ex:jst-mst} for an illustration of why case~(b) matters.
\end{remark}

\textbf{Truth-sided assumptions.} Before proceeding to the proofs, we record two invariance results that
clarify on which side(s) each assumption needs to live. The first says
that \condref{USB} transfers to the estimator side already from double
\condref{PI}---without constructing any affine link.

\begin{proposition}[(\condref{SB}) and (\condref{USB}) inheritance]
\label{prop:usb-inheritance}
If $\mapf(\latentZ)\deq\mapg(\latentZest)$, $(\mapf,\latentZ)$ and
$(\mapg,\latentZest)$ satisfy \condref{PI}, and~$\mapf$ satisfies
\condref{SB} (resp.\ \condref{USB}), then~$\mapg$ satisfies \condref{SB} (resp.\ \condref{USB}).
\end{proposition}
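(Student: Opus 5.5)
The plan is to transfer the boundary data through Proposition~\ref{prop:matching-consequences}. Under distributional equality and double \condref{PI}, that proposition says the $\mapf$-chambers, the $\mapg$-chambers and the fine chambers all coincide. It also says the leaving and entering counts agree at every pair of adjacent fine chambers, $\Df=\Dg$ and $\Ef=\Eg$. So adjacency of chambers means the same thing for both maps, and every boundary transition of $\mapf$ is also a boundary transition of $\mapg$ with the same numbers of leaving and entering branches.

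\emph{Case (a).} Suppose $\mapf$ has a single chamber. Then $\mapg$ has a single chamber too, since the chamber structures coincide, so $\mapg$ satisfies (a) of \condref{SB}/\condref{USB}. Alternatively, $\mapf$ is globally affine (Lemma~\ref{lem:single-chamber-affine}), and Remark~\ref{rem:hard-cases} forces $\mapg$ to be affine.

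\emph{Case (b) for \condref{SB}.} Take adjacent chambers $(\fineC,\fineCp)$ with $\Df(\fineC,\fineCp)=1$, or symmetrically $\Ef=1$. By the count equality, $\Dg(\fineC,\fineCp)=1$, so exactly one branch $j_\star$ of $\mapg$ leaves across this boundary. Hence $\mapg$ satisfies \condref{SB}.

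\emph{Case (b) for \condref{USB}.} Here I must show that \emph{every} estimator branch $j\in\Ig$ is a unique leaver or enterer somewhere, and this is the main obstacle: a count argument alone only produces \emph{some} unique toggler at each simple boundary. To control which branch it is, I would use the bijection $\Psi$ of Lemma~\ref{lem:global-param-matching} to build a branch-level correspondence.

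Fix $j$ and any component $\ell$, and let $\Psi(j,\ell)=(i,k)$. I first claim that all pairs $(j,\ell')$ are sent by $\Psi$ to pairs with the same truth branch $i$. On any chamber, $\Psi$ restricts to a bijection of active pairs, and the active pairs form the full blocks $\Ichamber{\mapg}{\fineC}\times[K]$ and $\Ichamber{\mapf}{\fineC}\times[K]$. Now consider a boundary $(\fineC,\fineCp)$ at which $i$ leaves. The pairs $\{i\}\times[K]$ are active in $\fineC$ but not in $\fineCp$, so their preimages are exactly the estimator pairs active in $\fineC$ and inactive in $\fineCp$. Because $\Psi$ preserves parameters globally and activity is determined per branch, this set is a union of full blocks $\{j'\}\times[K]$. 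The same argument works at an entering boundary. Where $i$ is the unique leaver, this shows that the $\Psi$-preimage of $\{i\}\times[K]$ is exactly the block of the unique $\mapg$-leaver. Let $\pi(i)$ denote that leaver.

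By \condref{USB} on $\mapf$, every $i\in\If$ has such a witnessing boundary, so every $i$ has an estimator branch $\pi(i)$ with $\Psi(\{\pi(i)\}\times[K])=\{i\}\times[K]$. The map $\pi$ is injective because $\Psi$ is a bijection. Since $|\If|=|\Ig|$ (Proposition~\ref{prop:matching-consequences}), $\pi$ is a bijection $\If\to\Ig$. Therefore every $j=\pi(i)$ is the unique leaving (or entering) branch of $\mapg$ at the boundary witnessing $i$, which establishes \condref{USB} for $\mapg$.

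The delicate point is the claim that ``inactive after the crossing'' on the estimator side is exactly the $\Psi$-preimage of the truth pairs that become inactive. I would justify it from the chamber-wise restriction property of $\Psi$ applied on both $\fineC$ and $\fineCp$: a pair is active on both sides iff its image is, because $\Psi$ is one global map.
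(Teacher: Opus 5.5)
Your proof is correct and follows essentially the same route as the paper: count equality (Lemma~\ref{lem:boundary-counting}) gives the unique estimator toggler, the leaving-block restriction of $\Psi$ makes $i\mapsto\pi(i)$ injective, and $|\If|=|\Ig|$ upgrades it to a bijection. Your opening claim that every block $\{j\}\times[K]$ lands in a single truth block holds only under \condref{USB}, so it is justified only after bijectivity of $\pi$ is known; the conclusion does not need it.
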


Moreover, \condref{MST} transfers via affine conjugacy once an LID link is established:
$\Gmix{p_{\latentZest}} = \alpha^{-1}\Gmix{p_{\latentZ}}\alpha$,
so $p_{\latentZ}$ satisfies \condref{MST} iff $p_{\latentZest}$ does (Proposition~\ref{prop:mst-conjugacy} in Appendix~\ref{app:proofs}).
Observation-space chamber structure is also preserved by any affine latent reparametrization, so \condref{SB}/\condref{USB} are invariant under~$\alpha$.
In summary, only \condref{PI} is required on the estimator side; (\condref{SB}, \condref{USB}, \condref{MST}) are imposed on the truth side, and their propagation to the estimator is automatic.

We first establish MID in the simpler case where both maps push
forward the \emph{same} GMM law.

\begin{lemma}[MID with shared latent]
\label{lem:mid-shared}
Let $\mapf,\mapg$ be PWA maps, and let $\latentZ,\latentZest$ both
follow the same GMM law~$p$ with $K$ components.
Assume~\eqref{eq:distributional-equality}, \condref{PI} on both sides,
\condref{USB} on~$\mapf$, and \condref{MST} on~$p$. Then $\mapf=\mapg$.
\end{lemma}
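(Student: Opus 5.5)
The plan is to upgrade the pair bijection $\Psi$ of Lemma~\ref{lem:global-param-matching} to a branch bijection $\pi\colon\Ig\to\If$ that matches affine rules exactly. We then read off equality of branch domains from equality of active sets on chambers. Because $\latentZ$ and $\latentZest$ have the same law $p$, irredundancy lets us index both mixtures by the same components: $\muest{\ell}=\mutrue{\ell}$, $\Sigmaest{\ell}=\Sigmatrue{\ell}$, $\west{\ell}=\wtrue{\ell}$. By Proposition~\ref{prop:usb-inheritance}, $\mapg$ also satisfies \condref{USB}. By Proposition~\ref{prop:matching-consequences}, $|\If|=|\Ig|$ and the $\mapf$-, $\mapg$- and fine chambers coincide. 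If $\mapf$ has a single chamber, Remark~\ref{rem:hard-cases} makes $\mapg$ affine as well, so we treat the strict case via (b) of \condref{USB}, noting that the argument below also covers the affine case.

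\textbf{Step 1 (witnessing boundary).} Fix an estimator branch $j\in\Ig$. By \condref{USB} on $\mapg$, there are adjacent chambers $(\fineC,\fineCp)$ at which $j$ is the unique leaving branch; the entering case is symmetric after swapping $\fineC\leftrightarrow\fineCp$. By Proposition~\ref{prop:matching-consequences}, $\Df(\fineC,\fineCp)=\Dg(\fineC,\fineCp)=1$, so exactly one truth branch $i$ leaves.

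Next, $\Psi$ restricts to bijections of active pairs on $\fineC$ and on $\fineCp$, and it is one global map. Hence it maps the pairs active on $\fineC$ but not on $\fineCp$ bijectively onto the corresponding truth pairs. These sets are $\{j\}\times[K]$ and $\{i\}\times[K]$. So there is a permutation $\sigma_j$ with $\paramEst{j}{\ell}=\paramf{i}{\sigma_j(\ell)}$ and $\wtrue{\ell}=\west{\ell}=\wtrue{\sigma_j(\ell)}$. We set $\pi(j):=i$.

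I expect this step to be the main obstacle. One must check that ``leaving'' is stable under $\Psi$: a pair $(j,\ell)$ active on $\fineC$ cannot be matched to a truth pair that remains active on $\fineCp$, since otherwise the chamber-wise restriction of $\Psi$ on $\fineCp$ would be violated. This uses the uniqueness of $\Psi$, hence the global \condref{PI}.

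\textbf{Step 2 (symmetry collapse).} Define
\[
T_j(z):=\linf{i}^{-1}\bigl(\ling{j}z+\biasg{j}-\biasf{i}\bigr).
\]
The parameter equalities say exactly that $\pushfwd{T_j}{\Gauss(\mutrue{\ell},\Sigmatrue{\ell})}=\Gauss(\mutrue{\sigma_j(\ell)},\Sigmatrue{\sigma_j(\ell)})$. Together with the equal weights, this gives $\pushfwd{T_j}{p}=p$, so $T_j\in\Gmix{p}=\{\Id\}$ by \condref{MST}. Therefore $\ling{j}=\linf{\pi(j)}$ and $\biasg{j}=\biasf{\pi(j)}$. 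Moreover $\sigma_j=\mathrm{id}$ by irredundancy, so $\Psi(j,\ell)=(\pi(j),\ell)$.

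The map $\pi$ is injective: two estimator branches with the same image would carry identical rules, contradicting reducedness of $\mapg$. Since $|\If|=|\Ig|$, $\pi$ is a bijection.

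\textbf{Step 3 (domains).} On every chamber, $\Psi$ restricts to a bijection of active pairs and has the form $(j,\ell)\mapsto(\pi(j),\ell)$. Hence $j\in\Ichamber{\mapg}{\fineC}$ iff $\pi(j)\in\Ichamber{\mapf}{\fineC}$. So the images $\mapg(\branchg{j})$ and $\mapf(\branchf{\pi(j)})$ agree up to null sets, since chambers cover $\R^{\ambientdim}$ up to measure zero.

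On its branch, each map equals the common invertible affine rule $h_j:=(\ling{j},\biasg{j})=(\linf{\pi(j)},\biasf{\pi(j)})$. Therefore $\branchg{j}=h_j^{-1}(\mapg(\branchg{j}))$ and $\branchf{\pi(j)}=h_j^{-1}(\mapf(\branchf{\pi(j)}))$ up to null sets. Because affine bijections preserve null sets, the two domains coincide, which is Definition~\ref{def:pwa-equality}, so $\mapf=\mapg$.
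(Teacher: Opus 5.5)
Your proof is correct and follows the paper's argument. Both restrict $\Psi$ to the leaving blocks at a simple boundary, note that the branch-difference map lies in $G_{\mathrm{mix}}(p)$, kill it with \condref{MST}, upgrade the branch matching to a bijection using $|I_f|=|I_g|$, and then compare branch images through active sets. The differences are minor: you witness estimator branches via Proposition~\ref{prop:usb-inheritance} and get injectivity from reducedness of $g$ after the collapse, whereas the paper witnesses truth branches and uses single-valuedness of $\Psi$; and your claim that the single-chamber case is ``covered'' amounts to the paper's separate one-line argument that $S=f^{-1}\circ g$ lies in $G_{\mathrm{mix}}(p)$.
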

\vspace{-10pt}
\begin{proof}[Proof sketch]
By \condref{USB}, each truth branch~$i$ has a simple boundary, and
Lemma~\ref{lem:boundary-counting} forces a unique estimator branch~$j(i)$;
the composite $S_i:=\mapf_i^{-1}\circ\mapg_{j(i)}$ then lies in~$\Gmix{p}$ by
the parameter match. The domain-contrast condition $\Gmix{p}=\{\Id\}$
(\condref{MST}) forces $S_i=\Id$, so $\mapf_i=\mapg_{j(i)}$---simultaneously
killing all branch-wise ambiguities and delivering the global affine link
required for MID. Finally, injectivity of $i\mapsto j(i)$
(Proposition~\ref{prop:usb-inheritance}) promotes it to a bijection, and
domain coincidence follows from Lemma~\ref{lem:chamber-coincidence} and
invertibility of the branch maps.
\end{proof}
\vspace{-10pt}
The general MID result follows by reducing to the shared-latent case.

\begin{theorem}[Map identifiability]
\label{thm:mid}
Assume~\eqref{eq:distributional-equality} with \condref{PI} on both
sides, and \condref{USB} plus \condref{MST} on the truth side
$(\mapf,\latentZ)$. Then there exists an affine bijection
$\alpha\in\Aff(\R^{\ambientdim})$ with
$\alpha(\latentZest)\deq\latentZ$ and
$\mapf = \mapg\circ\alpha^{-1}$.
\end{theorem}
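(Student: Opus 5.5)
The plan is to reduce Theorem~\ref{thm:mid} to the shared-latent case, Lemma~\ref{lem:mid-shared}, by first producing an affine link from law identifiability.

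\emph{Step 1: obtain $\alpha$ from LID.} Since \condref{USB} implies \condref{SB}, Theorem~\ref{thm:lid} applies. It gives $K'=K$, an affine bijection $\alpha$, and a permutation $\sigma$ such that $\alpha_\sharp\Gauss(\muest{\ell},\Sigmaest{\ell})=\Gauss(\mutrue{\sigma(\ell)},\Sigmatrue{\sigma(\ell)})$ and $\west{\ell}=\wtrue{\sigma(\ell)}$. Consequently $\alpha(\latentZest)\deq\latentZ$ as GMM laws.

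\emph{Step 2: build a shared-latent estimator.} Set $\tilde g:=\mapg\circ\alpha^{-1}$ and $\tilde Z:=\alpha(\latentZest)$. Then $\tilde g(\tilde Z)=\mapg(\latentZest)\deq\mapf(\latentZ)$, and $\tilde Z$ has exactly the law $p$ of $\latentZ$. The map $\tilde g$ is a reduced PWA map:
\begin{itemize}
\item its branch domains are $\alpha(\branchg{j})$, which are still finite unions of full-dimensional polyhedra partitioning $\R^{\ambientdim}$ up to null sets;
\item its rules $(\ling{j}L^{-1},\,\biasg{j}-\ling{j}L^{-1}t)$, where $\alpha(z)=Lz+t$, are invertible;
\item the rule map stays injective, because $(B,c)\mapsto(BL^{-1},\,c-BL^{-1}t)$ is a bijection.
\end{itemize}

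\emph{Step 3: \condref{PI} is preserved.} The pushforward parameters of $(\tilde g,\tilde Z)$ satisfy $\paramh{j}{\sigma(\ell)}=\paramEst{j}{\ell}$. Composing the branch rule with $\alpha^{-1}$ cancels the transport of the component, so the new parameter map is the old one precomposed with the bijection $(j,\ell)\mapsto(j,\sigma(\ell))$, and injectivity is retained. The hypotheses \condref{USB} and \condref{MST} sit on the truth side and are untouched.

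\emph{Step 4: conclude.} Lemma~\ref{lem:mid-shared} now yields $\mapf=\tilde g=\mapg\circ\alpha^{-1}$ in the sense of Definition~\ref{def:pwa-equality}, which together with Step~1 proves the theorem.

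\textbf{Main obstacle.} The delicate point is bookkeeping rather than any new idea. First, one must check that Lemma~\ref{lem:mid-shared} needs only equality of laws, not the same component labelling. Since $\tilde Z$ has law $p$ exactly, its components are a relabelling of those of $\latentZ$, and this relabelling is absorbed by $\sigma$. Second, one must check that the reparametrized pair really is a PWA--GMM pair satisfying the paper's Convention. Irredundancy of $\tilde Z$ follows from injectivity of $\alpha_\sharp$ on Gaussian parameters, and reducedness of $\tilde g$ was verified in Step~2. No global canonicity of $\alpha$ is required: \condref{MST} inside Lemma~\ref{lem:mid-shared} forces every branch to agree with the same link, whichever LID link was chosen.
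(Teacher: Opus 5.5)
Your proposal is correct and follows the paper's proof essentially step for step. You obtain $\alpha$ and $\sigma$ from Theorem~\ref{thm:lid} (using that \condref{USB} implies \condref{SB}), pass to $\mapg\circ\alpha^{-1}$, check that reducedness and \condref{PI} survive via the relabelling $\paramh{j}{\sigma(\ell)}=\paramEst{j}{\ell}$, and invoke Lemma~\ref{lem:mid-shared}. The only cosmetic difference is that you pair $\mapg\circ\alpha^{-1}$ with the surrogate latent $\alpha(\latentZest)$ rather than with $\latentZ$ itself, which is immaterial since the lemma depends only on the common law $p$.
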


\vspace{-10pt}
\begin{proof}[Proof sketch]
Since \condref{USB} implies \condref{SB}, Theorem~\ref{thm:lid} first
yields an affine~$\alpha$ with $\alpha(\latentZest)\deq\latentZ$.
Setting $\maph:=\mapg\circ\alpha^{-1}$ preserves the reduced-PWA structure,
and the parameter map for $(\maph,\latentZ)$ inherits injectivity
from~$(\mapg,\latentZest)$, so \condref{PI} holds for~$(\maph,\latentZ)$.
Lemma~\ref{lem:mid-shared} applied to $\mapf$ and~$\maph$ (same
latent~$\latentZ$, both satisfying the conditions) then gives $\maph=\mapf$,
i.e.\ $\mapf=\mapg\circ\alpha^{-1}$.
\end{proof}
\vspace{-10pt}

Alternative purely algebraic routes replacing \condref{USB} with orbit conditions appear in Appendix~\ref{app:alt-mid}.

\subsection{Posterior, pointwise, and ICA-form identifiability}

MID immediately yields posterior and pointwise identifiability.

\begin{proposition}[Posterior identifiability]
\label{prop:posterior-id}
Under the conclusions of Theorem~\ref{thm:mid}, for $P_X$-almost
every observation~$x$,
\begin{equation}
  \label{eq:posterior-id}
  P_{\mathrm{est}}(\cdot\mid x)
  = \pushfwd{(\alpha^{-1})}{P_{\mathrm{true}}(\cdot\mid x)}.
\end{equation}
\end{proposition}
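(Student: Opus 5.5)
The plan is to work with the joint law of latent and observation rather than with densities of $X$ directly, since neither map need be injective and the posterior $P(\cdot\mid x)$ is a genuine distribution over several preimages. Write $P_{\mathrm{true}}$ for the joint law of $(\latentZ,\mapf(\latentZ))$ and $P_{\mathrm{est}}$ for that of $(\latentZest,\mapg(\latentZest))$. The posteriors $P_{\mathrm{true}}(\cdot\mid x)$ and $P_{\mathrm{est}}(\cdot\mid x)$ are the regular conditional distributions of the latent given $X=x$. These exist, and are unique $P_X$-a.e., because everything lives in $\R^{\ambientdim}$. The common observable marginal $P_X$ is well defined by~\eqref{eq:distributional-equality}.

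The key step is to push the estimator joint law through the map $(z,x)\mapsto(\alpha(z),x)$. By Theorem~\ref{thm:mid}, $\alpha(\latentZest)\deq\latentZ$ and $\mapg=\mapf\circ\alpha$ (equivalently $\mapf=\mapg\circ\alpha^{-1}$). We will need this equality Lebesgue-a.e., not pointwise, since PWA equality in Definition~\ref{def:pwa-equality} allows the branch domains to differ on measure-zero boundaries. We then have
\[
\bigl(\alpha(\latentZest),\mapg(\latentZest)\bigr)=\bigl(\alpha(\latentZest),\mapf(\alpha(\latentZest))\bigr)\quad\text{a.s.}
\]
This holds because $\latentZest$ has a density and $\alpha$ is an affine bijection, so $\alpha(\latentZest)$ avoids the null set where $\mapg\circ\alpha^{-1}$ and $\mapf$ disagree. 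Since $\alpha(\latentZest)\deq\latentZ$, the right-hand pair has the same law as $(\latentZ,\mapf(\latentZ))$. Hence the image of $P_{\mathrm{est}}$ under $(z,x)\mapsto(\alpha(z),x)$ equals $P_{\mathrm{true}}$.

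Disintegrating both sides with respect to the common $x$-marginal $P_X$ then finishes the argument. For every Borel $B$ and $C$,
\[
\int_C \pushfwd{\alpha}{P_{\mathrm{est}}(\cdot\mid x)}(B)\,dP_X(x)
= P_{\mathrm{est}}\bigl(\alpha^{-1}(B)\times C\bigr)
= P_{\mathrm{true}}(B\times C)
= \int_C P_{\mathrm{true}}(B\mid x)\,dP_X(x).
\]
Uniqueness of the disintegration, using a countable generating $\pi$-system for the Borel sets, gives $\pushfwd{\alpha}{P_{\mathrm{est}}(\cdot\mid x)}=P_{\mathrm{true}}(\cdot\mid x)$ for $P_X$-a.e.\ $x$. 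Applying $\alpha^{-1}$ yields~\eqref{eq:posterior-id}.

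We can also check this against explicit formulas. Both posteriors are finite atomic mixtures: on the fiber over $x$ they place mass proportional to $p(z)/|\det A_i|$ at each preimage $z=\linf{i}^{-1}(x-\biasf{i})$, for $i\in\Ichamber{\mapf}{x}$. The Jacobian factor $|\det\alpha|$ cancels between numerator and normalizer.

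The main obstacle is measure-theoretic bookkeeping rather than any new idea. One must make sure that a.e.\ equality of the maps suffices, which uses absolute continuity of the GMM latent. One must also make sure that the exceptional $x$-sets, namely chamber boundaries and null sets from the disintegration, are $P_X$-null. Both follow because $P_X$ is absolutely continuous: each branch rule is invertible affine and the latent has a density.
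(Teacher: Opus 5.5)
Your proposal is correct and is essentially the paper's argument. The paper builds the surrogate estimator latent $\alpha^{-1}(\latentZ)$ on the truth space and shows that $(\alpha^{-1}(\latentZ),\mapf(\latentZ))$ has the estimator's joint law; you instead push the estimator joint law through $(z,x)\mapsto(\alpha(z),x)$, which is the mirror image, and both then conclude by uniqueness of regular conditional distributions (your explicit handling of the null-boundary convention and of the countable $\pi$-system is, if anything, slightly more careful than the paper's per-set statement).
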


\begin{remark}[Practical significance]
\label{rem:posterior-practical}
In Bayesian inference or downstream tasks that integrate over latent
variables, the posterior $P(z\mid x)$ is the fundamental object.
Proposition~\ref{prop:posterior-id} guarantees that this posterior is
identified up to~$\alpha$, even when the map is non-injective and the
posterior has support on multiple latent points.
\end{remark}

\begin{proposition}[Pointwise identifiability]
\label{prop:pointwise-id}
If additionally $\mapf$ is globally injective, then for every~$x$ in
its image, $z = \alpha(\hat z)$ where $\hat z := \mapg^{-1}(x)$.
\end{proposition}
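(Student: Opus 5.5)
We work under the conclusions of Theorem~\ref{thm:mid} with $\mapf$ globally injective. Our goal is to show that the estimator's latent code $\hat z=\mapg^{-1}(x)$ is well defined and equals $\alpha^{-1}(z)$, where $z=\mapf^{-1}(x)$.

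\textbf{Step 1: transfer injectivity to $\mapg$.} Theorem~\ref{thm:mid} gives $\mapf=\mapg\circ\alpha^{-1}$, which we read as an equality of reduced PWA maps in the sense of Definition~\ref{def:pwa-equality}. This yields a bijection $\pi$ of branches, identical affine rules, and domains that coincide up to null sets. Hence $\mapg=\mapf\circ\alpha$ as PWA maps. Since $\alpha$ is an affine bijection, $\mapg$ is injective, so $\hat z=\mapg^{-1}(x)$ is a single point for every $x$ in the image.

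\textbf{Step 2: pointwise identity.} Fix $x$ in the image of $\mapf$ and let $z=\mapf^{-1}(x)$. Then
\[
\mapg(\alpha^{-1}(z))=(\mapf\circ\alpha)(\alpha^{-1}(z))=\mapf(z)=x,
\]
so $\alpha^{-1}(z)$ is a preimage of $x$ under $\mapg$. By injectivity of $\mapg$, $\hat z=\alpha^{-1}(z)$, that is, $z=\alpha(\hat z)$. This is the deterministic counterpart of Proposition~\ref{prop:posterior-id}, where the posteriors collapse to Dirac masses $\delta_{z}$ and $\delta_{\hat z}$ and are linked by $\alpha$.

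\textbf{Main obstacle: measure-zero boundaries.} PWA equality only identifies branch domains up to null sets. On the shared polyhedral boundaries, $\mapf$ and $\mapg\circ\alpha$ may assign different affine rules to the same point, and injectivity of $\mapg$ at boundary points is not inherited automatically. We handle this as follows.

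First, we prove the claim for every $x=\mapf(z)$ with $z$ in the interior of some branch domain. There the branch rules agree pointwise after transport by $\alpha$, because the $\alpha$-images of the interiors coincide, being open sets that agree up to null sets. This covers $P_X$-almost every $x$, which is enough for the statistical statement.

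Second, to obtain the claim for every $x$, we fix a convention in which each boundary point is assigned to a branch of $\mapg$ consistently with $\mapf$ through $\pi$ and $\alpha$. This is legitimate because $\mapg$ is only determined up to such null-set choices. Alternatively, we can interpret the statement as holding for all $x$ off the images of the boundaries. I would state this caveat explicitly.
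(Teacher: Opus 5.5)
Your proof is correct and follows the paper's argument: rewrite $\mapf=\mapg\circ\alpha^{-1}$ as $\mapg=\mapf\circ\alpha$, conclude that $\mapg$ is injective because $\alpha$ is a bijection, and identify the unique $\mapg$-preimage of $x$ with $\alpha^{-1}(z)$. Your boundary caveat is a legitimate refinement that the paper absorbs into its null-boundary convention without comment; your second resolution, a consistent convention, is what the paper implicitly assumes. If you keep the interior-points variant, you also need uniqueness under $\mapg$'s own boundary convention: no other point, for instance a boundary point assigned to a different branch, may be sent to $x$. This follows from injectivity of $\mapf$ together with openness of the image of a branch interior, not merely from $\alpha^{-1}(z)$ being one preimage.
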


Throughout the paper, Proposition~\ref{prop:pointwise-id} is \emph{the only result that uses injectivity} of the map.




\textbf{ICA-form ambiguity under conditional independence.} Under \condref{DD} (definition in Appendix~\ref{app:ica-conditions}), 
the affine link~$\alpha$ shared by all the preceding results (Theorems~\ref{thm:lid} and~\ref{thm:mid}, Propositions~\ref{prop:posterior-id}--\ref{prop:pointwise-id}) has the ICA form. Alternative sufficient conditions, including the route via iVAE
assumption~(iv), are in Appendix~\ref{app:ica-conditions}. 

\begin{corollary}[PWA-ICA identifiability]
\label{cor:pwa-ica}
Under the assumptions of Theorem~\ref{thm:lid} or \ref{thm:mid}, if both $\latentZ$
and~$\latentZest$ additionally satisfy \condref{DD}, then $\alpha(z) =
P\Lambda z + c$ for a permutation matrix~$P$, a diagonal invertible
matrix~$\Lambda$, and $c\in\R^{\ambientdim}$. 

\end{corollary}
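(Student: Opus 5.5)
Write $\alpha(z)=Mz+c$ with $M\in\GL(\ambientdim)$. I will show that $M$ is a scaled permutation matrix, so only the covariance part of the component matching is used. The plan is to turn the covariance constraints coming from Theorem~\ref{thm:lid} into an intertwining relation between two diagonal matrices. Then distinctness of diagonal entries forces $M$ to send coordinate axes to coordinate axes.

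\textbf{Step 1 (component matching).} Theorem~\ref{thm:lid}, and hence also Theorem~\ref{thm:mid}, whose $\alpha$ is the LID link, provides a permutation $\sigma$ of $[K]$ with $\alpha_\sharp\Gauss(\muest{\ell},\Sigmaest{\ell})=\Gauss(\mutrue{\sigma(\ell)},\Sigmatrue{\sigma(\ell)})$. Reading off covariances gives
\[
M\Sigmaest{\ell}M^\top=\Sigmatrue{\sigma(\ell)}\qquad\text{for every }\ell\in[K].
\]

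\textbf{Step 2 (intertwining).} I read \condref{DD} as: all component covariances are diagonal, and for some pair of components the ratio of their covariances has pairwise distinct diagonal entries. I only need the second part on the truth side; the estimator side needs only diagonality, although \condref{DD} is assumed on both.

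Fix truth components $k\neq n$ with $D:=\Sigmatrue{k}\Sigmatrue{n}^{-1}$ diagonal with distinct entries. Pull them back via $\ell:=\sigma^{-1}(k)$ and $m:=\sigma^{-1}(n)$, and set $D':=\Sigmaest{\ell}\Sigmaest{m}^{-1}$, which is diagonal. Dividing the two identities of Step~1 gives
\[
M D' M^{-1}=\bigl(M\Sigmaest{\ell}M^\top\bigr)\bigl(M\Sigmaest{m}M^\top\bigr)^{-1}=D,
\]
that is, $DM=MD'$.

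\textbf{Step 3 (rigidity).} Apply the relation to a basis vector: $D(Me_j)=M D'e_j=d'_j\,(Me_j)$. So each nonzero column $Me_j$ is an eigenvector of $D$. Because the entries of $D$ are distinct, every eigenspace of $D$ is a coordinate line, so $Me_j=\lambda_j e_{\pi(j)}$ for some index $\pi(j)$ and some $\lambda_j\neq0$. Invertibility of $M$ makes $\pi$ injective, hence a permutation. Therefore $M=P\Lambda$ with $P$ the permutation matrix of $\pi$ and $\Lambda=\mathrm{diag}(\lambda_j)$ invertible. This gives $\alpha(z)=P\Lambda z+c$ as claimed.

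\textbf{Main obstacle.} The argument depends on the exact formulation of \condref{DD} in the appendix. If \condref{DD} only asserts distinct diagonals for some pair on the estimator side, the proof is symmetric: use $M^{-1}DM=D'$ and run Step~3 with $M^{-1}$ in place of $M$. If \condref{DD} instead asks that each coordinate be separated by some pair of components, possibly a different pair for each coordinate, I would intersect the resulting eigenspace constraints over all pairs. Since every relation $D_{(k,n)}M=MD'_{(k,n)}$ holds simultaneously, the common refinement of the eigenspace decompositions is still the coordinate frame. The same column argument then applies to these joint eigenspaces. Beyond this bookkeeping, nothing further is needed: the means only fix $c$, and the weights play no role.
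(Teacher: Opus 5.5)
Your proposal is correct and matches the paper's proof: the paper also pulls back a \condref{DD} pair of truth components via $\sigma^{-1}$, derives $D=AD'A^{-1}$ from the LID covariance matching, and concludes that each column of $A$ is an eigenvector of a diagonal matrix with distinct entries, so $A=P\Lambda$. The contingencies in your ``main obstacle'' paragraph do not arise, because the paper's \condref{DD} is exactly the single-pair distinct-ratio condition you assumed, and the paper uses it on the truth side just as you did.
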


This is \emph{the only result that uses conditional independence}; nowhere else throughout the paper latent independence---conditional or not---appears.

\section{Conclusion and prospects}
\label{sec:conclusion}

We have established an identifiability hierarchy for DGMs
with PWA decoders and GMM priors, proceeding from LID
through MID to posterior and pointwise identification,
and finally to ICA-form recovery. Three algebraic contrast
principles---domain (\MST), mechanism (\USB), and interaction
(\PI)---power an engine that replaces both the topological continuity
glue of prior work and the auxiliary side information of the iVAE line.
Structural identifiability (recovering the decoder and the posterior)
is separated from pointwise inversion, which requires injectivity.

\textbf{Genericity.}
All conditions are
generic: \condref{PI} and \condref{MST} are violated only on a measure-zero algebraic subset of
the relevant parameter space (Appendix~\ref{app:pi-genericity}, \ref{app:suff-cond}; 
Remark~\ref{rem:mst-generic}); same for \condref{SB}/\condref{USB} under unconstrained PWA parametrizations (Remarks~\ref{rem:sb-generic},
\ref{rem:usb-generic}). Concretely: if the branch matrices and biases of PWA map~$\mapf$ and
parameters of GMM~$p_{\latentZ}$ are drawn randomly and independently,
identifiability holds almost surely.
To our knowledge, this is the
first setting in which \emph{all} identifiability assumptions can
simultaneously be argued to be generic in the parameter space of
the map and/or the latent law, without requiring observed domain labels,
injectivity, or continuity.

\textbf{Necessity and prospects.}
All three conditions should be seen as structurally necessary, not merely sufficient for the present proofs.
\condref{SB} is necessary: Appendix~\ref{app:fold} gives a fold example, which can be generalized to a family of examples on $\R^{\ambientdim}$, where \condref{PI} and \condref{MST} hold yet LID breaks.
\condref{MST} is necessary: Appendix~\ref{app:torsion} proves that if a GMM admits any nontrivial affine self-symmetry, it admits a finite-order one; such torsion symmetries can be localised into strict PWA self-maps breaking identifiability. Analogous torsion-based necessity phenomena hold for broad exponential-family and location-scale mixtures (Table~\ref{tab:torsion-beyond-gmm}).
We conjecture that the gap between chamber-wise and global \condref{PI} is structurally necessary in the non-injective setting, and that investigating it leads toward a full necessity result for \condref{PI}.
More broadly, Gaussian mixtures and PWA maps are the first testbed for a \emph{general algebraic identifiability theory}: one can replace the affine group with the natural symmetry group of a general component family~$F$ (e.g., any exponential family), study mixtures from~$F$ and decoder branches from that $F$-symmetry group, and ask whether analogues of \condref{PI}, \condref{MST}, and \condref{USB} again mark the identifiability boundary. 

\textbf{Causality prospects.}
For causal discovery, just as identifiability of linear ICA enables the Linear Non-Gaussian
Acyclic Model (LiNGAM) \citep{shimizu2006linear}, we believe the current
identifiability results could enable multi-mechanism-domain causal
discovery. For causal inference, in Intact-VAE \citep{wu2022betaintactvae},
which is based on domain contrast, a central issue is to ensure that
the \emph{mechanisms of two treatment arms} are identified up to the
\emph{same} affine ambiguity---precisely the problem solved by our MID
(Theorem~\ref{thm:mid}).

\section*{Acknowledgements}
I thank my advisor, Prof. Shohei Shimizu, for organizing a seminar on this work and for his support since I joined the lab, though the work was carried out mainly during my previous position. I also thank Thong Pham for valuable feedback. This work is currently supported by JST CREST, Grant Number JPMJCR22D2, via my present lab; earlier stages were supported by IBS lab, Kyoto University.

\bibliographystyle{plainnat}
\bibliography{neurips_2026}
\newpage


\appendix
\section{Proofs of main results}
\label{app:proofs}

\subsection*{Lemma~\ref{lem:single-chamber-affine} (Single-chamber PWA is globally affine)}

\begin{lemma}
\label{lem:single-chamber-affine}
Let $h\colon\R^{\ambientdim}\to\R^{\ambientdim}$ be a reduced PWA map with invertible affine branch rules. If $h$ has exactly one chamber, then $h$ has exactly one branch, hence $h(x)=Ax+b$ globally, up to the harmless measure-zero boundary convention for PWA maps.
\end{lemma}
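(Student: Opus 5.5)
The plan is to show two things in turn. First, the unique chamber is conull in $\R^{\ambientdim}$, so every branch is active on essentially all of space. Second, two branches cannot both be active almost everywhere, because their branch domains would then overlap in interior. Throughout, write $B:=\bigcup_{i\in\indexgeneric} h(\partial P_i)$. Each $\partial P_i$ is contained in a finite union of hyperplanes, since $P_i$ is a finite union of full-dimensional polyhedra. Each branch rule is an invertible affine map. Hence $B$ is a closed set contained in finitely many affine hyperplanes, and in particular it is Lebesgue-null.

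\emph{Step 1: the single chamber $C$ has null complement.} Take $x\notin B$. For each $i$, either $x\in h(\operatorname{int}P_i)$ or $x\notin h(P_i)$, and both conditions persist on a small ball around $x$. This uses openness of $h(\operatorname{int}P_i)=A_i\operatorname{int}P_i+b_i$ and closedness of $h(P_i)$. So $\Ichamber{h}{\cdot}$ is locally constant on the open set $\R^{\ambientdim}\setminus B$. Every connected component of that set lies inside some chamber. Since there is exactly one chamber $C$, we get $\R^{\ambientdim}\setminus B\subseteq C$, so $\R^{\ambientdim}\setminus C$ is null. I expect this to be the main obstacle, because the definition of chamber is purely topological and must be tied to a measure statement. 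The point that needs care is checking that the maximal connected sets with constant active set exhaust $\R^{\ambientdim}\setminus B$. The local-constancy argument above is what I would rely on for this.

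\emph{Step 2: every branch is active on $C$.} Let $I_0:=\Ichamber{h}{C}$. Take any $i\in\indexgeneric$. The set $h(P_i)=A_iP_i+b_i$ has positive measure, because $P_i$ is full-dimensional and $A_i$ is invertible. By Step 1 it must therefore meet $C$, so $i\in I_0$. Consequently, for every $i$ we have $C\subseteq h(P_i)$. This gives $P_i\supseteq A_i^{-1}(C-b_i)$, which is conull. Since $\partial P_i$ is null, $\operatorname{int}P_i$ is conull as well.

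\emph{Step 3: conclude.} Suppose $i\neq i'$ are two distinct branches. Then $\operatorname{int}P_i$ and $\operatorname{int}P_{i'}$ are both conull, so their intersection is conull and in particular nonempty. This contradicts the requirement in Definition~\ref{def:reduced-pwa} that the branch domains have pairwise disjoint interiors. Hence $|\indexgeneric|=1$, and $P_1=\R^{\ambientdim}$ up to measure zero. Therefore $h(x)=A_1x+b_1$ for almost every $x$, which is exactly the claimed global affine form under the measure-zero boundary convention.
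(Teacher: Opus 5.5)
Your proof is correct and takes essentially the same route as the paper's. Both arguments place the complement of the unique chamber inside the null set of branch-image boundaries (your $B=\bigcup_i h_i(\partial P_i)$ equals the paper's $\bigcup_i \partial U_i$, since each branch rule is a homeomorphism), deduce that every branch is active on $C$ and hence each $P_i$ is conull, and conclude that two distinct branch domains cannot both be conull.

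Two small fixes. First, the ``pairwise disjoint interiors'' clause of Definition~\ref{def:reduced-pwa} refers to the polyhedral pieces of a single $P_i$, so in Step~3 you should cite the partition-up-to-null-sets clause instead; it makes $P_i\cap P_{i'}$ null, as in the paper. Second, in Step~1 use the open set $h_i(\R^{\ambientdim}\setminus\overline{P_i})$ rather than closedness of $h_i(P_i)$, since the paper's branch domains need not be closed.
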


\begin{proof}
Write the branch rules as
$$
  h_i(z)=A_i z+b_i,
  \qquad A_i\in\GL(\ambientdim),
$$
and write
$$
  U_i:=h_i(\branchgeneric_i)
$$
for the image of branch~$i$. Since each~$\branchgeneric_i$ is a finite union of full-dimensional polyhedra and $h_i$ is an affine bijection, each~$U_i$ is again a finite union of full-dimensional polyhedra.

Let
$$
  \mathcal B:=\bigcup_i \partial U_i .
$$
The set~$\mathcal B$ is contained in a finite union of lower-dimensional polyhedral pieces, hence has Lebesgue measure zero. On every connected component of
$$
  \R^{\ambientdim}\setminus \mathcal B,
$$
membership in each~$U_i$ is constant, and therefore the active set
$$
  \Ichamber{h}{x}=\{i:x\in U_i\}
$$
is constant. Thus every such connected component is contained in a chamber.

By assumption there is exactly one chamber; call it~$C$. Hence all points outside~$C$ are contained in~$\mathcal B$. In particular,
$$
  \lambda_{\ambientdim}(\R^{\ambientdim}\setminus C)=0,
$$
so $C$ has full Lebesgue measure.

We now show that every branch is active on~$C$. Fix a branch~$i$. Since~$\branchgeneric_i$ contains a full-dimensional polyhedral piece and $h_i$ is invertible affine, $U_i$ has nonempty interior. Therefore $U_i\setminus\mathcal B$ is nonempty. Pick
$$
  y\in U_i\setminus\mathcal B .
$$
Then $y$ lies in some chamber, and since there is only one chamber, $y\in C$. Also $y\in U_i$, so branch~$i$ is active at~$y$. Since the active set is constant on~$C$, branch~$i$ is active at every point of~$C$. Equivalently,
$$
  C\subset U_i=h_i(\branchgeneric_i).
$$

Because $h_i$ is an affine bijection,
$$
  h_i^{-1}(C)\subset \branchgeneric_i .
$$
Since $C$ has full measure and affine bijections preserve null sets, $h_i^{-1}(C)$ also has full measure. Hence~$\branchgeneric_i$ has full measure.

Thus every branch domain~$\branchgeneric_i$ has full measure. If there were two distinct branches $i\neq j$, then $\branchgeneric_i\cap \branchgeneric_j$ would have full measure, contradicting the assumption that the branch domains form a partition up to measure-zero boundaries. Therefore there is exactly one branch.

With only one branch, the PWA map is given by a single invertible affine rule
$$
  h(x)=Ax+b
$$
on the whole domain, up to the standard measure-zero boundary convention. Enlarging the unique branch domain by the remaining boundary points if necessary gives the stated global affine representative.
\end{proof}

\subsection*{Lemma~\ref{lem:branch-count} (Active-branch-count equality)}

\begin{proof}
Fix a fine chamber~$\fineC$. On~$\fineC$,
equation~\eqref{eq:chamber-equality} gives
an identity between two finite Gaussian mixtures.
Each side is a finite sum of globally-defined
Gaussian densities, hence a real-analytic
function on~$\R^{\ambientdim}$.
Since the two analytic functions agree on the
nonempty open set~$\fineC$, they agree on all
of~$\R^{\ambientdim}$.

Now integrate both sides over~$\R^{\ambientdim}$.
Each Gaussian density integrates to~$1$, so
every active branch contributes exactly one full
mixture mass budget:
\begin{align*}
  \text{LHS}
  &= \sum_{i\in\Ichamber{\mapf}{\fineC}}
       \sum_{k=1}^{\numcomptrue}\wtrue{k}
   = \sum_{i\in\Ichamber{\mapf}{\fineC}} 1
   = |\Ichamber{\mapf}{\fineC}|, \\
  \text{RHS}
  &= \sum_{j\in\Ichamber{\mapg}{\fineC}}
       \sum_{\ell=1}^{\numcompest}\west{\ell}
   = \sum_{j\in\Ichamber{\mapg}{\fineC}} 1
   = |\Ichamber{\mapg}{\fineC}|.
\end{align*}
Therefore
\[
  |\Ichamber{\mapf}{\fineC}|
  =
  |\Ichamber{\mapg}{\fineC}|.
\]
\end{proof}

\subsection*{Lemma~\ref{lem:global-param-matching} (Global parameter bijection under (PI))}

\begin{proof}
\textbf{Step 1: Chamber-wise matching.}
Fix a fine chamber~$\fineC$.
Equation~\eqref{eq:chamber-equality}
equates two finite Gaussian mixtures on the
open set~$\fineC$.
By Remark~\ref{rem:open-set}, this equality
extends to all of~$\R^{\ambientdim}$.

Under \condref{PI} on each side, the parameter
families
\[
  \{\paramf{i}{k}\}_{i\in\Ichamber{\mapf}{\fineC},\,k}
  \qquad\text{and}\qquad
  \{\paramEst{j}{\ell}\}_{j\in\Ichamber{\mapg}{\fineC},\,\ell}
\]
are pairwise distinct within their respective
sides.
Hence the Gaussian-independence lemma applies to
the difference of the two mixtures:
every Gaussian parameter appearing on one side
must also appear on the other, and matched terms
must carry the same coefficient.
Because all coefficients are strictly positive
(by irredundancy of the GMMs), this gives a
bijection
\[
  \Psi_{\fineC}\colon
  \Ichamber{\mapg}{\fineC}\times\IndexEst
  \;\longleftrightarrow\;
  \Ichamber{\mapf}{\fineC}\times\IndexTrue
\]
such that
\[
  \paramEst{j}{\ell}=\paramf{i}{k},
  \qquad
  \west{\ell}=\wtrue{k},
\]
whenever
$\Psi_{\fineC}(j,\ell)=(i,k)$.

\medskip
\textbf{Step 2: Consistency across chambers.}
We now show these chamber-wise bijections glue
to a single global bijection.

Take any estimator pair $(j,\ell)\in\Ig\times\IndexEst$.
Choose a fine chamber~$\fineC$ with
$j\in\Ichamber{\mapg}{\fineC}$.
Such a chamber exists because~$\branchg{j}$
contains a full-dimensional polyhedron, and the
invertible affine map~$\mapg_j$ sends it to a
set with nonempty interior; any interior point
lies in some fine chamber with branch~$j$
active.
Define
\[
  \Psi(j,\ell):=\Psi_{\fineC}(j,\ell).
\]

If~$\fineCp$ is another fine chamber with
$j\in\Ichamber{\mapg}{\fineCp}$, then
$\Psi_{\fineC}(j,\ell)$ and
$\Psi_{\fineCp}(j,\ell)$ are both truth-side
pairs having parameter~$\paramEst{j}{\ell}$.
By \condref{PI} on the truth side, there is
\emph{at most one} truth pair with that
parameter. Hence
\[
  \Psi_{\fineC}(j,\ell)=\Psi_{\fineCp}(j,\ell),
\]
so the definition of~$\Psi(j,\ell)$ is
independent of the chosen chamber.

The same uniqueness argument on the estimator
side shows that the inverse matching is also
well-defined. Therefore~$\Psi$ is a global
bijection
\[
  \Psi\colon
  \Ig\times\IndexEst
  \;\longleftrightarrow\;
  \If\times\IndexTrue
\]
with the stated parameter and weight matching
property, and whose restriction to each fine
chamber is exactly~$\Psi_{\fineC}$.
\end{proof}

\subsection*{Lemma~\ref{lem:comp-count} (Component-count equality)}

\begin{lemma}[Component-count equality]
\label{lem:comp-count}
$\numcompest = \numcomptrue$.
\end{lemma}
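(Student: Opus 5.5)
We prove $\numcompest=\numcomptrue$ on a single fine chamber, by combining the chamber-wise bijection from Step~1 of the proof of Lemma~\ref{lem:global-param-matching} with Lemma~\ref{lem:branch-count}.

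Fix any fine chamber~$\fineC$; one exists because $\R^{\ambientdim}$ minus the finitely many polyhedral boundaries of the branch images is a nonempty open set. Step~1 of the proof of Lemma~\ref{lem:global-param-matching} uses only double \condref{PI}, the Gaussian-independence lemma, and Remark~\ref{rem:open-set}. It yields a bijection
\[
  \Psi_{\fineC}\colon
  \Ichamber{\mapg}{\fineC}\times\IndexEst
  \;\longleftrightarrow\;
  \Ichamber{\mapf}{\fineC}\times\IndexTrue .
\]
Taking cardinalities gives
\[
  |\Ichamber{\mapg}{\fineC}|\cdot\numcompest
  = |\Ichamber{\mapf}{\fineC}|\cdot\numcomptrue .
\]
Lemma~\ref{lem:branch-count} gives $|\Ichamber{\mapf}{\fineC}|=|\Ichamber{\mapg}{\fineC}|$, and this common value is positive: the branch images cover $\R^{\ambientdim}$ up to a null set, so every fine chamber has at least one active branch. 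Dividing by it gives $\numcompest=\numcomptrue$.

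The main point to check is that no circularity arises. The lemma must be derived from the chamber-wise matching and the mass count alone, not from anything that already presupposes $K'=K$. Both ingredients used here are independent of that fact, so the argument is safe.

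Positivity of the active count also needs a short justification. Each $\branchf{i}$ is a finite union of polyhedra, and the $\branchf{i}$ partition $\R^{\ambientdim}$ up to measure zero. Hence $\bigcup_i \mapf(\branchf{i})=\mapf(\R^{\ambientdim})$ contains the image of a full-measure set. In particular it meets the open set $\fineC$ in a set of positive measure. By constancy of the active set on $\fineC$, some branch is therefore active on all of $\fineC$.

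The same conclusion also follows with less work: integrating the extended identity on $\fineC$ already gives equal total masses, so even the mass argument alone would suffice for the cardinality comparison.

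The same bijection yields a second, weight-based derivation as a cross-check. Fix one active truth branch $i$ on $\fineC$. The pairs $(i,k)$ for $k\in\IndexTrue$ have total weight $1$. Summing weights over the whole bijection then recovers the branch-count equality, and the cardinality argument above completes the proof.
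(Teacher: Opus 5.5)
Your route matches the paper's: count both sides of the chamber-wise bijection $\Psi_{\fineC}$ to get $|\Ichamber{\mapg}{\fineC}|\cdot\numcompest=|\Ichamber{\mapf}{\fineC}|\cdot\numcomptrue$, apply Lemma~\ref{lem:branch-count}, and divide by the common active count.

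The gap is in the positivity step. You fix an \emph{arbitrary} fine chamber and claim that the branch images cover $\R^{\ambientdim}$ up to a null set, so every fine chamber has an active branch. That claim is false. It is the branch \emph{domains} $\branchf{i}$ that partition the latent space. The images $\mapf_i(\branchf{i})$ only cover $\mapf(\R^{\ambientdim})$, and this set can miss an open set entirely. For the fold $\mapf(x)=|x|$ of Examples~1 and~4, the chamber $(-\infty,0)$ has $\Ichamber{\mapf}{\cdot}=\varnothing$. By Lemma~\ref{lem:branch-count} the estimator active set there is empty too, so your identity reads $0\cdot\numcompest=0\cdot\numcomptrue$ and you would be dividing by zero. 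Your ``short justification'' does not repair this: knowing that $\mapf(\R^{\ambientdim})$ contains the image of a full-measure set says nothing about whether it meets a \emph{given} open set $\fineC$.

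The fix is to choose the chamber rather than take any one, which is what the paper does. Fix a truth branch $i$. Since $\branchf{i}$ contains a full-dimensional polyhedron and $\mapf_i$ is an invertible affine map, $\mapf_i(\branchf{i})$ has nonempty interior. The union of all branch-image boundaries of $\mapf$ and $\mapg$ is null, so that interior contains a point off every boundary. This point lies in a fine chamber $\fineC$ with $i\in\Ichamber{\mapf}{\fineC}$. With this choice your cardinality argument goes through verbatim and coincides with the paper's proof. The integration aside and the weight-based cross-check add nothing: they only re-derive Lemma~\ref{lem:branch-count}.
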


\begin{proof}
Choose a fine chamber~$\fineC$ with nonempty
active set; such a chamber exists because each
branch image has nonempty interior.
On~$\fineC$, the chamber-wise bijection
$\Psi_{\fineC}$ matches
\[
  |\Ichamber{\mapf}{\fineC}|\cdot\numcomptrue
\]
truth-side pairs to
\[
  |\Ichamber{\mapg}{\fineC}|\cdot\numcompest
\]
estimator-side pairs.
Hence
\[
  |\Ichamber{\mapf}{\fineC}|\cdot\numcomptrue
  =
  |\Ichamber{\mapg}{\fineC}|\cdot\numcompest.
\]
By Lemma~\ref{lem:branch-count},
\[
  |\Ichamber{\mapf}{\fineC}|
  =
  |\Ichamber{\mapg}{\fineC}|
  \ge 1.
\]
Dividing by this common positive number gives
\[
  \numcomptrue=\numcompest.
\]
\end{proof}

\subsection*{Lemma~\ref{lem:boundary-counting} (Boundary branch-count equality)}

\begin{lemma}[Boundary branch-count equality]
\label{lem:boundary-counting}
At every pair of adjacent fine chambers,
$\Df(\fineC,\fineCp) = \Dg(\fineC,\fineCp)$
and $\Ef(\fineC,\fineCp) = \Eg(\fineC,\fineCp)$.
\end{lemma}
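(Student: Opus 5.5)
The plan is to use the global bijection $\Psi$ from Lemma~\ref{lem:global-param-matching} together with its compatibility with the chamber-wise restrictions. The key observation is that $\Psi$ is defined globally and maps estimator pairs to truth pairs independently of the chamber. Its restriction to each fine chamber $\fineC$ is exactly the bijection $\Psi_{\fineC}$ between the active pairs $\Ichamber{\mapg}{\fineC}\times[K]$ and $\Ichamber{\mapf}{\fineC}\times[K]$. So the set of active estimator pairs in $\fineC$ is the preimage under $\Psi$ of the set of active truth pairs in $\fineC$. The same holds for $\fineCp$.

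Write $S_{\mapf}(\fineC):=\Ichamber{\mapf}{\fineC}\times[K]$, and similarly for $\mapg$. Here $K=\numcomptrue=\numcompest$ by Lemma~\ref{lem:comp-count}. Because $\Psi$ is a single bijection carrying $S_{\mapg}(\fineC)$ onto $S_{\mapf}(\fineC)$ and $S_{\mapg}(\fineCp)$ onto $S_{\mapf}(\fineCp)$, it also carries the set difference $S_{\mapg}(\fineC)\setminus S_{\mapg}(\fineCp)$ onto $S_{\mapf}(\fineC)\setminus S_{\mapf}(\fineCp)$, since bijections commute with set differences. Products factor over the branch coordinate, so
\[
S_{\mapf}(\fineC)\setminus S_{\mapf}(\fineCp)=\bigl(\Ichamber{\mapf}{\fineC}\setminus\Ichamber{\mapf}{\fineCp}\bigr)\times[K].
\]
Comparing cardinalities gives $K\,\Df(\fineC,\fineCp)=K\,\Dg(\fineC,\fineCp)$. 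Dividing by $K\ge1$ gives $\Df=\Dg$. Swapping the roles of $\fineC$ and $\fineCp$ gives $\Ef=\Eg$.

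The only delicate point is the claim that $\Psi$ restricted to each chamber is exactly $\Psi_{\fineC}$. This requires that an active estimator pair $(j,\ell)$ in $\fineC$ is never matched to a truth pair $(i,k)$ with $i$ inactive in $\fineC$. Lemma~\ref{lem:global-param-matching} asserts this, and I would double-check it. The chamber-wise matching pairs $(j,\ell)$ with some active truth pair of equal parameter. By global \condref{PI} on the truth side, that truth pair is the unique one with this parameter, so it coincides with $\Psi(j,\ell)$.

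The adjacency of the chambers plays no role beyond making the counts meaningful. The argument works for any two fine chambers. I expect no real obstacle: the main step is just the set-difference transport. Note that the argument never uses the weights and needs no information about which components are matched, only that $\Psi$ respects the product structure in cardinality.
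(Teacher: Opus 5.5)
Your proof is correct and follows essentially the same route as the paper: both restrict the global bijection $\Psi$ to the leaving blocks $(\Ichamber{\mapg}{\fineC}\setminus\Ichamber{\mapg}{\fineCp})\times[K]$ and $(\Ichamber{\mapf}{\fineC}\setminus\Ichamber{\mapf}{\fineCp})\times[K]$, then compare cardinalities and divide by $K$. The difference is only in packaging. The paper proves the restriction element by element, by contradiction, using global \condref{PI}. You get it in one step from $\Psi(A\setminus B)=\Psi(A)\setminus\Psi(B)$ for the injective $\Psi$, together with $\Psi(S_{\mapg}(\fineC))=S_{\mapf}(\fineC)$; this is the same fact, stated more cleanly.
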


\begin{proof}
Let~$\fineC,\fineCp$ be adjacent fine chambers.
Define the leaving blocks
\[
  L_{\mapg}
  :=
  \bigl(\Ichamber{\mapg}{\fineC}
    \setminus\Ichamber{\mapg}{\fineCp}\bigr)
  \times[K],
  \qquad
  L_{\mapf}
  :=
  \bigl(\Ichamber{\mapf}{\fineC}
    \setminus\Ichamber{\mapf}{\fineCp}\bigr)
  \times[K].
\]

We claim that~$\Psi$ restricts to a bijection
$L_{\mapg}\longleftrightarrow L_{\mapf}$.
Take any $(j,\ell)\in L_{\mapg}$.
Then~$j$ is active on~$\fineC$ but not on~$\fineCp$,
so the Gaussian with parameter
$\paramEst{j}{\ell}$ appears in the chamber
identity for~$\fineC$ but not in the chamber
identity for~$\fineCp$.
Let
\[
  \Psi(j,\ell)=(i,k).
\]
Then
$\paramf{i}{k}=\paramEst{j}{\ell}$.
Since $(j,\ell)$ is active on~$\fineC$,
the pair~$(i,k)$ must be active on~$\fineC$ as
well, by the chamber-wise matching on~$\fineC$.

If $(i,k)$ were also active on~$\fineCp$, then
the same Gaussian parameter would appear on the
truth side for~$\fineCp$.
But on~$\fineCp$, chamber-wise matching would
then force the corresponding estimator pair with
that parameter to be active there too.
By global uniqueness under \condref{PI}, that
estimator pair must be~$(j,\ell)$, contradicting
$(j,\ell)\notin \Ichamber{\mapg}{\fineCp}\times[K]$.
Hence $(i,k)\in L_{\mapf}$.

Thus $\Psi(L_{\mapg})\subseteq L_{\mapf}$.
Applying the same argument with the roles of
$\mapf$ and~$\mapg$ reversed gives the reverse
inclusion, so $\Psi$ indeed restricts to a
bijection
\[
  L_{\mapg}\longleftrightarrow L_{\mapf}.
\]

Now count elements:
\[
  |L_{\mapg}|=\Dg\cdot K,
  \qquad
  |L_{\mapf}|=\Df\cdot K.
\]
Since the two sets are in bijection,
\[
  \Dg\cdot K=\Df\cdot K.
\]
Because $K\ge 1$, we obtain $\Dg=\Df$.
The entering-count identity
$\Eg=\Ef$ is proved in exactly the same way,
after swapping~$\fineC$ and~$\fineCp$.
\end{proof}

\subsection*{Lemma~\ref{lem:chamber-coincidence} (Chamber coincidence)}

\begin{lemma}[Chamber coincidence]
\label{lem:chamber-coincidence}
Under double \condref{PI}, $\mapf$-chambers, $\mapg$-chambers, and
fine chambers all coincide.
\end{lemma}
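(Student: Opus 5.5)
The plan is to show that, under double \condref{PI}, the active set of $\mapf$ changes across a fine-chamber boundary if and only if the active set of $\mapg$ does. Once this equivalence holds, every fine-chamber boundary is a genuine boundary for both maps, so no $\mapf$-chamber is cut into several fine chambers by a $\mapg$-boundary, and symmetrically. Hence all three partitions coincide.

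\textbf{Step 1: each side changes exactly when the other does.} Let $\fineC,\fineCp$ be adjacent fine chambers. By Lemma~\ref{lem:boundary-counting}, $\Df(\fineC,\fineCp)=\Dg(\fineC,\fineCp)$ and $\Ef(\fineC,\fineCp)=\Eg(\fineC,\fineCp)$. The active set of $\mapf$ satisfies $\Ichamber{\mapf}{\fineC}=\Ichamber{\mapf}{\fineCp}$ if and only if $\Df=\Ef=0$. The same holds for $\mapg$ with $\Dg,\Eg$. Therefore $\Ichamber{\mapf}{\cdot}$ is constant across the boundary exactly when $\Ichamber{\mapg}{\cdot}$ is.

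Since fine chambers are by definition maximal, adjacent fine chambers differ in at least one of the two active sets. By the equivalence just shown, they then differ in both. So every pair of adjacent fine chambers is separated by a change in $\Ichamber{\mapf}{\cdot}$, and also by a change in $\Ichamber{\mapg}{\cdot}$.

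\textbf{Step 2: an $\mapf$-chamber is a single fine chamber.} Fix an $\mapf$-chamber $D$. It is the disjoint union of the fine chambers it contains, together with a measure-zero set of polyhedral boundary pieces. Suppose $D$ contained two distinct fine chambers. Because $D$ is open and connected, and removing codimension-$\ge 2$ pieces does not disconnect it, one can walk inside $D$ from one fine chamber to the other. Such a path crosses only codimension-one facets, so some two adjacent fine chambers inside $D$ share a codimension-one facet. But $\Ichamber{\mapf}{\cdot}$ is constant on $D$, which contradicts Step 1. Hence $D$ is a single fine chamber, up to boundary. The same argument applied to $\mapg$ shows every $\mapg$-chamber is a single fine chamber, which completes the proof.

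\textbf{Main obstacle.} The hardest part is the topological bookkeeping: making precise what ``adjacent'' means (sharing a codimension-one face), and justifying the path-crossing argument. This uses the fact that all the sets involved are finite unions of polyhedra. The intended approach is to use the finite polyhedral stratification generated by the images $\mapf_i(\branchf{i})$ and $\mapg_j(\branchg{j})$, together with the fact that a connected open set minus a closed set of codimension $\ge 2$ remains connected. One must also check that the convention used in Lemma~\ref{lem:boundary-counting} for adjacency matches this codimension-one notion.
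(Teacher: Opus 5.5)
Your proposal is correct and follows the paper's proof: both use Lemma~\ref{lem:boundary-counting} to get $\Df=\Dg$ and $\Ef=\Eg$, so that $\Ichamber{\mapf}{\cdot}$ changes across a fine-chamber boundary exactly when $\Ichamber{\mapg}{\cdot}$ does, and from this conclude that the three partitions coincide. Your Step~2 path-crossing argument just makes explicit the connectivity step the paper leaves implicit. Your worry about the adjacency convention is unnecessary, because the proof of Lemma~\ref{lem:boundary-counting} never uses that $\fineC,\fineCp$ are adjacent and holds for any pair of fine chambers.
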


\begin{proof}
A fine-chamber boundary is, by definition, a
locus where at least one of the active sets
changes.
Equivalently, across such a boundary either
$\Df+\Ef>0$ or $\Dg+\Eg>0$ (or both).
Lemma~\ref{lem:boundary-counting} gives
\[
  \Df=\Dg,
  \qquad
  \Ef=\Eg,
\]
hence
\[
  \Df+\Ef>0
  \iff
  \Dg+\Eg>0.
\]
So a fine-chamber boundary is a boundary for
$\mapf$ if and only if it is a boundary
for~$\mapg$.

Now let~$B$ be any $\mapf$-chamber boundary.
By definition, crossing~$B$ changes
$\Ichamber{\mapf}{\cdot}$, so~$B$ is also a
fine-chamber boundary. By the equivalence above,
crossing~$B$ also changes~$\Ichamber{\mapg}{\cdot}$,
so~$B$ is a $\mapg$-boundary.
The same argument with the roles reversed shows
that every $\mapg$-boundary is also an
$\mapf$-boundary.

Therefore the $\mapf$-chambers, the
$\mapg$-chambers, and the fine chambers all
coincide.
\end{proof}

\subsection*{Theorem~\ref{thm:global-affine} (From PWA to affine: the symmetry-rigidity)}

\begin{proof}
Take the truth map to be $\mapf=\Id$.
Because $\Id$ has a single chamber, $(\Id,\latentZ)$ trivially satisfies
\condref{PI}, so double \condref{PI} holds.

The identity map has exactly one chamber, namely
all of~$\R^{\ambientdim}$, and its active set on
that chamber has size~$1$.
By Lemma~\ref{lem:chamber-coincidence},
$\mapg$ also has a single chamber.
By Lemma~\ref{lem:branch-count},
the active set of~$\mapg$ on that chamber also
has size~$1$.

Since every branch of a reduced PWA map is
active on some open subset of its image, a map
with only one chamber and active-set size~$1$
must in fact have only one branch in total
(Lemma~\ref{lem:single-chamber-affine}).
A one-branch PWA map is globally affine, so
\[
  \mapg(x)=Ax+b
\]
for some $A\in\GL(\ambientdim)$ and
$b\in\R^{\ambientdim}$.

The ``in particular'' clause (no strict PWA self-maps) follows by taking $\mapf=\Id$
and $\latentZ\sim p$, so that the observable law
$\mapg(\latentZ)\deq\latentZ$ is itself a GMM.
\end{proof}

\subsection*{Theorem~\ref{thm:lid} (Law identifiability)}

\begin{proof}
Lemma~\ref{lem:comp-count} gives
\[
  \numcompest=\numcomptrue=:K.
\]

\textbf{Case 1: $\mapf$ has a single chamber.}
Then $\mapf$ is globally affine.
Write
\[
  \mapf(x)=Ax+b,
  \qquad A\in\GL(\ambientdim).
\]
Hence $\mapf(\latentZ)$ is itself a GMM.
Since
\[
  \mapg(\latentZest)\deq \mapf(\latentZ)
\]
and $(\mapg,\latentZest)$ satisfies
\condref{PI}, Theorem~\ref{thm:global-affine}
implies that~$\mapg$ is also globally affine,
say
\[
  \mapg(x)=Bx+c.
\]
Define
\[
  \alpha:=\mapf^{-1}\circ\mapg.
\]
Then~$\alpha$ is an affine bijection, and
\[
  \alpha(\latentZest)
  = (\mapf^{-1}\circ\mapg)(\latentZest)
  = \mapf^{-1}\bigl(\mapg(\latentZest)\bigr)
  \deq
    \mapf^{-1}\bigl(\mapf(\latentZ)\bigr)
  = \latentZ.
\]
So LID holds in this case.

\medskip
\textbf{Case 2: $\mapf$ has more than one chamber.}
By Lemma~\ref{lem:chamber-coincidence},
the $\mapf$-chambers are exactly the fine
chambers.
Thus an \condref{SB} boundary for~$\mapf$ is a
fine-chamber boundary.

By \condref{SB}, there exist adjacent chambers
$\fineC,\fineCp$ and a unique toggling truth
branch.
Swapping $\fineC$ and~$\fineCp$ if necessary,
we may assume
\[
  \Ichamber{\mapf}{\fineC}
  \setminus
  \Ichamber{\mapf}{\fineCp}
  =
  \{i_\star\},
\]
so $\Df=1$.
Lemma~\ref{lem:boundary-counting} then gives
$\Dg=1$, so there is a unique estimator branch
$j_\star$ leaving across the same boundary.

The global bijection~$\Psi$ restricts to a
bijection between the leaving blocks
\[
  \{j_\star\}\times[K]
  \;\longleftrightarrow\;
  \{i_\star\}\times[K].
\]
Since both sets have size~$K$, this restricted
bijection is equivalent to a permutation
$\sigma\colon[K]\to[K]$ such that, for every
$\ell\in[K]$,
\[
  \paramEst{j_\star}{\ell}
  =
  \paramf{i_\star}{\sigma(\ell)},
  \qquad
  \west{\ell}
  =
  \wtrue{\sigma(\ell)}.
\]

Unpacking the equality of Gaussian parameters,
we get
\begin{align*}
  \ling{j_\star}\muest{\ell}+\biasg{j_\star}
  &=
  \linf{i_\star}\mutrue{\sigma(\ell)}
  +\biasf{i_\star}, \\
  \ling{j_\star}\Sigmaest{\ell}\ling{j_\star}^{\!\top}
  &=
  \linf{i_\star}\Sigmatrue{\sigma(\ell)}
  \linf{i_\star}^{\!\top}.
\end{align*}
Because both branch matrices are invertible,
the map
\[
  \alpha(z)
  :=
  \linf{i_\star}^{-1}
  \bigl(
    \ling{j_\star}z
    +\biasg{j_\star}
    -\biasf{i_\star}
  \bigr)
\]
is an affine bijection.

The two displayed identities say exactly that
$\alpha$ sends the estimator component
$\Gauss(\muest{\ell},\Sigmaest{\ell})$
to the truth component
$\Gauss(\mutrue{\sigma(\ell)},
       \Sigmatrue{\sigma(\ell)})$,
and the matched weights are equal.
Therefore
\[
  \alpha(\latentZest)
  =
  \sum_{\ell=1}^{K}
  \west{\ell}\,
  \Gauss(\mutrue{\sigma(\ell)},
         \Sigmatrue{\sigma(\ell)})
  \deq
  \latentZ.
\]
This proves LID.
\end{proof}

\subsection*{Proposition~\ref{prop:usb-inheritance} ((\condref{SB}) and (\condref{USB}) inheritance)}

\begin{proof}
\textbf{(\condref{SB}) case.}
If $\mapf$ has a single chamber, Lemma~\ref{lem:chamber-coincidence} gives the same for $\mapg$, and clause~(a) applies.
If $\mapf$ is strict PWA and satisfies \condref{SB}, there exist adjacent chambers with a unique truth toggling branch (say leaving); Lemma~\ref{lem:boundary-counting} gives $\Dg=1$, so $\mapg$ also has a simple boundary. Hence $\mapg$ satisfies \condref{SB}.

\medskip
\textbf{(\condref{USB}) case.}
If $\mapf$ has a single chamber, then by
Lemma~\ref{lem:chamber-coincidence} so does
$\mapg$, and clause~(a) of \condref{USB} holds
for~$\mapg$ automatically.

Assume now that $\mapf$ is strict PWA.
Fix any truth branch $i\in\If$.
Because~$\mapf$ satisfies \condref{USB}, there
exist adjacent chambers where~$i$ is the unique
leaving branch or the unique entering branch.
Swapping the chambers if needed, assume~$i$ is
the unique leaving branch.
Then $\Df=1$, and
Lemma~\ref{lem:boundary-counting} gives
$\Dg=1$.
So there is a unique estimator branch, call it
$j(i)$, that toggles in the same direction
across that boundary.

We claim that the map $i\mapsto j(i)$ is
injective.
Suppose, toward contradiction, that
\[
  j(i_1)=j(i_2)=j_\star
\]
for two distinct truth branches
$i_1\neq i_2$.
At the boundary witnessing~$i_1$, the leaving
block argument shows that~$\Psi$ restricts to a
bijection
\[
  \{j_\star\}\times[K]
  \longleftrightarrow
  \{i_1\}\times[K].
\]
At the boundary witnessing~$i_2$, the same
global~$\Psi$ restricts to
\[
  \{j_\star\}\times[K]
  \longleftrightarrow
  \{i_2\}\times[K].
\]
But for any fixed~$\ell$, the pair
$\Psi(j_\star,\ell)$ is single-valued.
Its first coordinate therefore cannot be both
$i_1$ and~$i_2$.
Contradiction.

Thus $i\mapsto j(i)$ is injective.
Since $|\If|=|\Ig|$, an injection between these
finite sets is automatically a bijection.
So every estimator branch appears as some
$j(i)$ and is witnessed at its own simple
boundary. Therefore~$\mapg$ satisfies
\condref{USB}.
\end{proof}

\subsection*{Proposition~\ref{prop:mst-conjugacy} ((MST) conjugacy)}

\begin{proposition}[(MST) conjugacy]
\label{prop:mst-conjugacy}
For any affine bijection~$\alpha$ with
$\alpha(\latentZest)\deq\latentZ$,
$\Gmix{p_{\latentZest}} = \alpha^{-1}\,\Gmix{p_{\latentZ}}\,\alpha$.
In particular, $p_{\latentZ}$ satisfies \condref{MST} if and only
if~$p_{\latentZest}$ does.
\end{proposition}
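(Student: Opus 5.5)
The proof is a direct conjugation computation; no result beyond the chain rule for pushforwards is needed. I would fix notation first. Let $p:=p_{\latentZ}$ and $p':=p_{\latentZest}$. The hypothesis $\alpha(\latentZest)\deq\latentZ$ says $\pushfwd{\alpha}{p'}=p$, and therefore also $\pushfwd{(\alpha^{-1})}{p}=p'$. Both facts rest on functoriality of pushforward, $\pushfwd{(S\circ T)}{q}=\pushfwd{S}{(\pushfwd{T}{q})}$, which holds for any measurable maps. Since $\Aff(\R^{\ambientdim})$ is a group under composition, $\alpha^{-1}T\alpha$ is again an affine bijection for every $T\in\Aff(\R^{\ambientdim})$. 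The task is therefore to check that this conjugation maps the stabiliser of $p$ bijectively onto the stabiliser of $p'$.

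\textbf{Inclusion $\supseteq$.} Take $T\in\Gmix{p}$ and compute
\[
\pushfwd{(\alpha^{-1}T\alpha)}{p'}=\pushfwd{(\alpha^{-1})}{\bigl(\pushfwd{T}{(\pushfwd{\alpha}{p'})}\bigr)}=\pushfwd{(\alpha^{-1})}{(\pushfwd{T}{p})}=\pushfwd{(\alpha^{-1})}{p}=p'.
\]
Hence $\alpha^{-1}T\alpha\in\Gmix{p'}$, which gives $\alpha^{-1}\Gmix{p}\alpha\subseteq\Gmix{p'}$.

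\textbf{Inclusion $\subseteq$.} Take $S\in\Gmix{p'}$. The symmetric computation with $\alpha^{-1}$ in place of $\alpha$ gives $T:=\alpha S\alpha^{-1}\in\Gmix{p}$, and then $S=\alpha^{-1}T\alpha$. So $\Gmix{p'}\subseteq\alpha^{-1}\Gmix{p}\alpha$, and the two sets are equal.

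\textbf{The ``in particular'' clause.} Conjugation by $\alpha$ is a group isomorphism of $\Aff(\R^{\ambientdim})$ that fixes $\Id$, because $\alpha^{-1}\Id\alpha=\Id$. Therefore $\Gmix{p}=\{\Id\}$ if and only if $\Gmix{p'}=\{\Id\}$, which is exactly the statement that \condref{MST} holds for $p$ if and only if it holds for $p'$.

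The only point that needs care is the direction of the conjugation. The convention $\pushfwd{\alpha}{p'}=p$ must be applied consistently, so that the result reads $\alpha^{-1}\Gmix{p}\alpha$ and not $\alpha\Gmix{p}\alpha^{-1}$.
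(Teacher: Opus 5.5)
Your proposal is correct and follows essentially the same route as the paper: a direct conjugation computation using only functoriality of pushforward. The paper writes it as a single chain of equivalences, $T\in\Gmix{p_{\latentZest}}\iff \alpha T\alpha^{-1}\in\Gmix{p_{\latentZ}}$, where you prove the two inclusions separately; the ``in particular'' clause is handled the same way in both.
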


\begin{proof}
Let $T\in\Aff(\R^{\ambientdim})$.
Then
\[
  T_\sharp p_{\latentZest}=p_{\latentZest}
\]
if and only if
\[
  \alpha_\sharp T_\sharp p_{\latentZest}
  =
  \alpha_\sharp p_{\latentZest}.
\]
Using $\alpha_\sharp p_{\latentZest}=p_{\latentZ}$,
this is equivalent to
\[
  (\alpha T\alpha^{-1})_\sharp p_{\latentZ}
  =
  p_{\latentZ}.
\]
So
\[
  T\in\Gmix{p_{\latentZest}}
  \iff
  \alpha T\alpha^{-1}\in\Gmix{p_{\latentZ}}.
\]
Hence
\[
  \Gmix{p_{\latentZest}}
  =
  \alpha^{-1}\,\Gmix{p_{\latentZ}}\,\alpha.
\]
In particular, one symmetry group is trivial if
and only if the other is.
\end{proof}

\subsection*{Lemma~\ref{lem:mid-shared} (MID with shared latent)}

\begin{proof}
From Section~\ref{subsec:param-matching} we
already know:
\[
  K=K',
  \qquad
  |\If|=|\Ig|,
\]
the leaving and entering counts agree at every
boundary (Lemma~\ref{lem:boundary-counting}),
and the chamber decompositions coincide
(Lemma~\ref{lem:chamber-coincidence}).

\medskip
\textbf{Step 1: Match each truth branch to one
estimator branch.}
Fix any truth branch $i\in\If$.

If $\mapf$ has a single chamber, then by chamber
coincidence so does~$\mapg$.
Hence both maps are globally affine.
Then
\[
  S:=\mapf^{-1}\circ\mapg
\]
is affine and satisfies
\[
  \pushfwd{S}{p}
  =
  \pushfwd{\mapf^{-1}}{\pushfwd{\mapg}{p}}
  \deq
  \pushfwd{\mapf^{-1}}{\pushfwd{\mapf}{p}}
  = p.
\]
So $S\in\Gmix{p}$, and \condref{MST} gives
$S=\Id$, hence $\mapf=\mapg$.
Thus the single-chamber case is done.

Assume from now on that $\mapf$ is strict PWA.
By \condref{USB}, there exist adjacent chambers
$\fineC,\fineCp$ such that~$i$ is the unique
leaving or entering truth branch.
Swapping the chambers if needed, we may assume
$i$ is the unique leaving branch, so $\Df=1$.
Lemma~\ref{lem:boundary-counting} gives
$\Dg=1$; let $j(i)\in\Ig$ be the unique
estimator branch that leaves across the same
boundary.

The restricted bijection between leaving blocks
\[
  \Psi\colon
  \{j(i)\}\times[K]
  \;\longleftrightarrow\;
  \{i\}\times[K]
\]
induces a permutation $\sigma_i\colon[K]\to[K]$
such that
\begin{equation}
  \label{eq:usb-match}
  \paramEst{j(i)}{\ell}
  =
  \paramf{i}{\sigma_i(\ell)},
  \qquad
  w_\ell = w_{\sigma_i(\ell)}
\end{equation}
for all $\ell\in[K]$.

\medskip
\textbf{Step 2: The branch difference is a
mixture symmetry.}
Define
\[
  S_i:=\mapf_i^{-1}\circ\mapg_{j(i)}.
\]
Because the latent law is the same on both
sides, the parameter identity
\eqref{eq:usb-match} says that~$S_i$ sends the
$\ell$-th Gaussian component to the
$\sigma_i(\ell)$-th one:
\[
  \pushfwd{S_i}{\Gauss_\ell}
  =
  \Gauss_{\sigma_i(\ell)}.
\]
The weight identity in~\eqref{eq:usb-match}
shows that this permutation preserves mixture
weights. Therefore
\[
  \pushfwd{S_i}{p}
  =
  \sum_{\ell=1}^K
    w_\ell\,\Gauss_{\sigma_i(\ell)}
  =
  \sum_{m=1}^K
    w_m\,\Gauss_m
  =
  p.
\]
Hence $S_i\in\Gmix{p}$.

\medskip
\textbf{Step 3: Kill the ambiguity with
\condref{MST}.}
By \condref{MST},
\[
  \Gmix{p}=\{\Id\}.
\]
So $S_i=\Id$, which means
\[
  \mapf_i=\mapg_{j(i)}
\]
for every truth branch~$i$.

\medskip
\textbf{Step 4: Convert this into equality of
reduced PWA maps.}
The correspondence $i\mapsto j(i)$ is
well-defined.
It is injective by exactly the same argument as
in Proposition~\ref{prop:usb-inheritance}: one
estimator branch cannot be the unique toggling
partner of two different truth branches without
making~$\Psi$ multivalued on some
$(j,\ell)$.
Since $|\If|=|\Ig|$, this injection is a
bijection; write it as
\[
  \pi^{-1}\colon\If\to\Ig,
  \qquad
  \pi(j(i))=i.
\]

It remains to compare branch domains.
Let $i\in\If$.
For any fine chamber~$\fineC$, active-set
agreement together with the bijection~$\pi$
gives
\[
  i\in\Ichamber{\mapf}{\fineC}
  \iff
  \pi^{-1}(i)\in\Ichamber{\mapg}{\fineC}.
\]
By definition of active set, this means
\[
  \fineC\subset \mapf_i(\branchf{i})
  \iff
  \fineC\subset \mapg_{\pi^{-1}(i)}
    (\branchg{\pi^{-1}(i)}).
\]
Taking the union over all such fine chambers
recovers the interiors of the two branch images:
\[
  \mapf_i(\branchf{i})^\circ
  =
  \mapg_{\pi^{-1}(i)}
    (\branchg{\pi^{-1}(i)})^\circ.
\]
But Step~3 gave
$\mapf_i=\mapg_{\pi^{-1}(i)}$, and this common
map is an invertible affine bijection.
Applying its inverse, which preserves interiors,
yields
\[
  \branchf{i}^\circ
  =
  \branchg{\pi^{-1}(i)}^\circ.
\]

Finally, each branch domain is a finite union of
full-dimensional polyhedra, hence equals the
closure of its interior.
Therefore
\[
  \branchf{i}
  =
  \branchg{\pi^{-1}(i)}
\]
up to measure-zero boundaries.
By Definition~\ref{def:pwa-equality},
$\mapf=\mapg$.
\end{proof}

\subsection*{Theorem~\ref{thm:mid} (Map identifiability)}

\begin{proof}
\textbf{Phase I: Obtain the latent affine link.}
Since \condref{USB} implies \condref{SB},
Theorem~\ref{thm:lid} yields an affine bijection
$\alpha$ and a permutation $\sigma$ of~$[K]$
such that
\[
  \alpha(\latentZest)\deq\latentZ
\]
and, componentwise,
\begin{equation}
\label{eq:lid-sigma}
  \alpha_\sharp\Gauss(\muest{\ell},\Sigmaest{\ell})
  =\Gauss(\mutrue{\sigma(\ell)},
         \Sigmatrue{\sigma(\ell)}),
  \qquad
  \west{\ell}=\wtrue{\sigma(\ell)}.
\end{equation}

\medskip
\textbf{Phase II: Reduce to a shared latent law.}
Define
\[
  \maph:=\mapg\circ\alpha^{-1}.
\]
Since $\alpha(\latentZest)\deq\latentZ$,
equivalently
$\alpha^{-1}(\latentZ)\deq\latentZest$,
we get
\[
  \maph(\latentZ)
  =
  (\mapg\circ\alpha^{-1})(\latentZ)
  =
  \mapg\bigl(\alpha^{-1}(\latentZ)\bigr)
  \deq
  \mapg(\latentZest)
  \deq
  \mapf(\latentZ).
\]
So $\mapf$ and~$\maph$ push forward the
\emph{same} latent GMM~$\latentZ$ to the same
observable law.

We next check that $(\maph,\latentZ)$ still
satisfies \condref{PI}.
From~\eqref{eq:lid-sigma},
\[
  (\alpha^{-1})_\sharp\Gauss(\mutrue{k},\Sigmatrue{k})
  =
  \Gauss(\muest{\sigma^{-1}(k)},
         \Sigmaest{\sigma^{-1}(k)}).
\]
Therefore, for branch~$j$ of~$\maph$ and
component~$k$ of~$\latentZ$,
the pushforward parameters satisfy
\begin{equation}
  \label{eq:h-param}
  \paramh{j}{k}
  =
  \paramEst{j}{\sigma^{-1}(k)}.
\end{equation}
Now suppose
$\paramh{j}{k}=\paramh{m}{n}$.
Using~\eqref{eq:h-param},
\[
  \paramEst{j}{\sigma^{-1}(k)}
  =
  \paramEst{m}{\sigma^{-1}(n)}.
\]
Since $(\mapg,\latentZest)$ satisfies
\condref{PI}, we get
\[
  (j,\sigma^{-1}(k))
  =
  (m,\sigma^{-1}(n)),
\]
hence $(j,k)=(m,n)$.
So $(\maph,\latentZ)$ satisfies \condref{PI}.

Also, composing a reduced PWA map with a global
affine bijection preserves the reduced PWA
structure: branch domains remain finite unions
of full-dimensional polyhedra, invertibility of
branch rules is unchanged, and distinct affine
rules remain distinct after composing on the
right with the same affine bijection.
Thus $(\maph,\latentZ)$ is again a reduced PWA
pair.

\medskip
\textbf{Phase III: Apply the shared-latent MID
lemma.}
We can now apply Lemma~\ref{lem:mid-shared} to
the pair $(\mapf,\maph)$ with common latent
law~$\latentZ$.
The truth side $(\mapf,\latentZ)$ still
satisfies \condref{USB} and \condref{MST},
and the estimator side $(\maph,\latentZ)$
satisfies \condref{PI}.
Therefore
\[
  \maph=\mapf.
\]
Recalling $\maph=\mapg\circ\alpha^{-1}$, we
obtain
\[
  \mapf=\mapg\circ\alpha^{-1}.
\]
\end{proof}

\subsection*{Proposition~\ref{prop:posterior-id} (Posterior identifiability)}

\begin{proof}
We argue at the level of regular conditional distributions (posterior
kernels), not Lebesgue densities: for a deterministic map the conditional
law of the latent given~$\RVobs$ is typically supported on a finite fibre
rather than on a full-dimensional set.

The subtlety is that LID/MID give only equality \emph{in law},
$p_{\latentZest}=\pushfwd{(\alpha^{-1})}{p_{\latentZ}}$, not a coupling of
the estimator latent~$\latentZest$ with the truth latent~$\latentZ$ on a
common space. We therefore build a \emph{surrogate} estimator latent from
the truth: on the truth space set
\[
  \widetilde{\latentZest}:=\alpha^{-1}(\latentZ),
  \qquad
  \RVobs:=\mapf(\latentZ).
\]
By the latent-law identity, $\widetilde{\latentZest}\deq\latentZest$, and by
Theorem~\ref{thm:mid} ($\mapf=\mapg\circ\alpha^{-1}$, up to the null-boundary
convention),
\[
  \RVobs=\mapf(\latentZ)=\mapg\bigl(\alpha^{-1}(\latentZ)\bigr)
  =\mapg(\widetilde{\latentZest})
  \qquad\text{almost surely.}
\]
Thus $(\widetilde{\latentZest},\RVobs)$ is generated by drawing
$\widetilde{\latentZest}$ from the law of~$\latentZest$ and outputting
$\RVobs=\mapg(\widetilde{\latentZest})$---which is \emph{exactly} how the
estimator model generates its latent--observation pair
$(\latentZest,\mapg(\latentZest))$. Hence the two pairs share the same joint
law, so the estimator posterior $P_{\mathrm{est}}(\cdot\mid x)$ (the
conditional law of~$\latentZest$ given its observation) equals the
conditional law of~$\widetilde{\latentZest}$ given $\RVobs=x$, for
$P_X$-almost every~$x$.

It remains to identify the latter. Since
$\widetilde{\latentZest}=\alpha^{-1}(\latentZ)$ is a deterministic function
of~$\latentZ$, for every Borel set $B\subseteq\R^{\ambientdim}$ we have
$\mathbf{1}_{\{\widetilde{\latentZest}\in B\}}
=\mathbf{1}_{\{\latentZ\in\alpha(B)\}}$ almost surely. Taking conditional
expectations given~$\RVobs$ (with measurable versions of the regular
conditional laws),
\[
  \underbrace{P(\widetilde{\latentZest}\in B\mid \RVobs=x)}_{=\;P_{\mathrm{est}}(\latentZest\in B\mid x)}
  =
  \underbrace{P(\latentZ\in\alpha(B)\mid \RVobs=x)}_{=\;P_{\mathrm{true}}(\latentZ\in\alpha(B)\mid x)}
\]
for $P_X$-almost every~$x$, which is exactly~\eqref{eq:posterior-id}.
\end{proof}

\begin{remark}
The only measure-theoretic subtlety is that posteriors are conditional
probability kernels, hence defined only up to $P_X$-null sets. The proof
above avoids formal disintegration language, but the underlying reason it
works is simple: once two latent--observation pairs have the same joint
distribution, they have the same posterior kernels, up to $P_X$-almost-everywhere
equality.
\end{remark}

\subsection*{Proposition~\ref{prop:pointwise-id} (Pointwise identifiability)}

\begin{proof}
Injectivity of~$\mapf$ implies that for every
$x$ in its image, the fiber
$\preimage{\mapf}{x}$ is a singleton~$\{z\}$.
Since
\[
  \mapf=\mapg\circ\alpha^{-1}
  \qquad\Longleftrightarrow\qquad
  \mapg=\mapf\circ\alpha,
\]
and $\alpha$ is a bijection, the map~$\mapg$ is
also injective.
Hence
\[
  \preimage{\mapg}{x}
  =
  \{\alpha^{-1}(z)\}.
\]
So if $\hat z:=\mapg^{-1}(x)$, then
\[
  \hat z=\alpha^{-1}(z),
\]
equivalently
\[
  z=\alpha(\hat z).
\]
\end{proof}

\subsection*{Corollary~\ref{cor:pwa-ica} (PWA-ICA identifiability)}

See Appendix~\ref{app:ica-conditions} for the full proof.

\section{Terminology}
\label{app:terminology}

\begin{table}[h]
\caption{Compact terminology mapping (intuitive $\leftrightarrow$ formal).
  Used consistently throughout the paper.}
\label{tab:terminology}
\centering\small
\begin{tabular}{@{}l l p{6.0cm}@{}}
\toprule
\textbf{Intuitive term} & \textbf{Formal object} & \textbf{Notes} \\
\midrule
law       & probability measure, e.g.\ $p_{\latentZ}$
          & \emph{latent law} for $p_{\latentZ}$;
            pushforward law for $\pushfwd{\mapf}{p_{\latentZ}}$ \\
map       & generator $\mapf$ (reduced PWA throughout)
          & also called decoder / mixing function \\
domain    & mixture component of latent GMM
          & not function domain; related to domain adaptation  \\
mechanism & branch (affine rule) of reduced PWA map
          & each domain is acted on by every mechanism
            whose region it intersects \\
boundary  & branch / chamber boundary
          & locus where the active branch set changes \\
contrast  & algebraic genericity / non-symmetry
          & instances: \MST{} (domain), \USB{} (mechanism),
            \PI{} (interaction) \\
\bottomrule
\end{tabular}
\end{table}

\section{Comparison of Related Work}
\label{app:comparison}

\begin{sidewaystable}
\caption{Comparison of related work. Legend: \YES{} allowed/supported;
  \NO{} ruled out / required not to happen; \PARTIAL{} partial / under
  restrictions; n/a not the object of study.
  Rows ordered most-recent to oldest; ours on top.}
\label{tab:comparison}
\centering\scriptsize
\setlength{\tabcolsep}{3pt}
\renewcommand{\arraystretch}{1.18}
\begin{tabular}{@{}p{2.0cm} p{2.4cm} p{2.6cm} p{2.6cm}
                  p{1.9cm} p{1.5cm} p{2.0cm} p{2.0cm} p{2.0cm}@{}}
\toprule
\textbf{Paper}
  & \textbf{Model class $(\mapf,\latentZ)$}
  & \textbf{Constraint location}
  & \textbf{Proof flavour}
  & \textbf{Multi-Mechanism}
  & \textbf{Discont.\ $\mapf$}
  & \textbf{Non-inj.\ $\mapf$}
  & \textbf{Non-indep.\ latent}
  & \textbf{Unsup.\ domains} \\
\midrule
\textbf{Ours}
  & reduced PWA + irredundant GMM
  & Map (\USB{}); Law (\MST{}); Interaction (\PI{})
  & algebraic + analytic continuation on chambers; symmetry-group collapse
  & \YES{} (branches of one $\mapf$)
  & \YES{}
  & \YES{} (except pointwise)
  & \YES{} (affine; ICA form under \DD{})
  & \YES{} (mixture labels latent) \\
\addlinespace[2pt]
\citet{xu2026identifiability}
  & inj.\ continuous PWA + pdGMM
  & Map (inj.\ cont.\ PWA); Law (pdGMM genericity, shared basis); encoder sparsity
  & analytic + sparsity (extends Kivva past degeneracy)
  & \NO{} & \NO{} & \NO{}
  & \PARTIAL{} (low-rank / dependent latents via degeneracy) & \YES{} \\
\addlinespace[2pt]
\citet{matthes2025mechanistic}
  & generic nonlinear, possibly non-invertible
  & Interaction (support / sparsity / higher-order on $\mapf$); law free
  & algebraic / graph-theoretic taxonomy unifying Type O/D/S/H
  & \NO{} & \NO{} (smooth) & \PARTIAL{} (local inj.)
  & \YES{} & \YES{} \\
\addlinespace[2pt]
\citet{brady2024interaction} (Type H)
  & $C^{n+1}$ generator + block latents
  & Interaction (block-diag.\ on $(n{+}1)$-th derivatives)
  & differential (higher-order Jacobian; autoencoder regulariser)
  & \NO{} & \NO{} & \NO{} (smooth autoencoder)
  & \YES{} (within-block dep.) & \YES{} \\
\addlinespace[2pt]
\citet{lopez2024toward}
  & PWA inj.\ a.e.\ + structured GMM
  & Law (block structure across two conditions)
  & reduces to Kivva Thm.~D.4; block-wise affine ID
  & \NO{} (two conditions sharing background) & \NO{} & \NO{} (\PARTIAL{} a.e.\ inj.)
  & \PARTIAL{} (within-block dep.; indep.\ across split)
  & \NO{} (observed condition) \\
\addlinespace[2pt]
\citet{lachapelle2023additive} (Type H)
  & $C^2$-diffeo additive decoder + block latents
  & Map (additivity + sufficient nonlinearity)
  & differential (block-wise)
  & \NO{} & \NO{} & \NO{} ($C^2$ diffeo)
  & \YES{} (``almost arbitrary support'') & \YES{} \\
\addlinespace[2pt]
\citet{brady2023provably} (Type D/S)
  & diffeo compositional decoder + slot latents
  & Interaction (compositionality + irreducibility on $J_{\mapf}$)
  & differential (Jacobian sparsity)
  & \NO{} (slot blocks) & \NO{} & \NO{}
  & \YES{} & \YES{} \\
\addlinespace[2pt]
\citet{zheng2023generalizing} (Type M)
  & smooth, possibly non-bijective + grouped latents
  & Map (structural / partial sparsity); Law (partial dep.)
  & differential + sparsity
  & \NO{} & \NO{} & \NO{} (undercomplete, still inj.)
  & \YES{} (explicit) & \YES{} \\
\addlinespace[2pt]
\citet{xi2023indeterminacy}
  & injective $\mapf$ + unspecified prior family
  & Taxonomic: $\mathcal{A}(\mathcal{F})\cap\mathcal{A}(\mathcal{P}_z)$
  & algebraic taxonomy; measure-theoretic intersection
  & \NO{} & n/a & \NO{} (Asn.~2)
  & n/a & \NO{} (multi-env.\ needs labels) \\
\addlinespace[2pt]
\citet{buchholz2022function}
  & conformal / smooth $\mapf$ + indep.\ latents
  & Map (function-class restriction)
  & differential (function-class restriction)
  & \NO{} & \NO{} & \NO{} & \NO{} & \YES{} \\
\addlinespace[2pt]
\citet{kivva2022identifiability}
  & continuous PWA + non-deg.\ GMM
  & Map (continuity + inj.\ on open $\mapf$-image set / global inj.); Law (GMM)
  & analytic (continuation, GMM matching)
  & \NO{}
  & \PARTIAL{} (cont.\ for MID; LID allows discont.)
  & \PARTIAL{} (open-set inj.\ for LID; global inj.\ for MID)
  & \PARTIAL{} (no separation of MID from ICA-amb.) & \YES{} \\
\addlinespace[2pt]
\citet{ahuja2022properties}
  & diffeo decoder + \emph{known} dynamics $m_t$
  & Map (diffeo); known $m_t$ / hypothesis class
  & group / equivariance (commutation)
  & \NO{} (iterated single $\mapg$) & \NO{} & \NO{}
  & \YES{} & \NO{} (time observed; mechanism known) \\
\addlinespace[2pt]
\citet{roeder2021linear}
  & broad family of discriminative softmax-form models
  & distributional matching of conditionals + diversity
  & algebraic (linear ID in function space)
  & \NO{} & n/a & \YES{} (no inj.) & \YES{} & n/a \\
\addlinespace[2pt]
\citet{gresele2021independent} (IMA, Type O)
  & smooth invertible $\mapf$ + indep.\ latents
  & Interaction (Jacobian columns $\perp$)
  & differential (Jacobian-based; not a full ID theorem)
  & \NO{} & \NO{} & \NO{} & \NO{} & \YES{} \\
\addlinespace[2pt]
\citet{khemakhem2020variational} (iVAE)
  & smooth invertible $\mapf$ + EF prior cond.\ on aux
  & Map (smooth/inv.); Law (EF + aux variability)
  & differential (Jacobian + sufficient stats)
  & \NO{} & \NO{} & \NO{}
  & \NO{} (cond.\ indep.\ given aux) & \NO{} (aux observed) \\
\addlinespace[2pt]
\citet{gresele2019incomplete} (Multi-View NICA)
  & smooth invertible views + indep.\ noise
  & Law (indep.\ + SDV on cond.\ log-density); Map (smooth/inv.)
  & differential (log-density factorisation, contrastive)
  & \PARTIAL{} (multi-view) & \NO{} & \NO{} & \NO{}
  & \PARTIAL{} (no env.\ labels but paired views) \\
\addlinespace[2pt]
\citet{higgins2018towards}
  & encoder $\mapf\colon W\to Z$; group acts on $W$
  & direct-product decomp.\ \emph{given}; latent law plays no role
  & definitional, group / representation theory
  & n/a & n/a & hand-waved
  & \PARTIAL{} (factor-as-subgroup) & n/a \\
\addlinespace[2pt]
\citet{hyvarinen2016unsupervised} (TCL)
  & smooth invertible $\mapf$ + non-stationary indep.\ latents
  & Map (smooth/inv.); Law (segment variability)
  & differential (Jacobian / log-density, contrastive)
  & \NO{} & \NO{} & \NO{}
  & \NO{} (indep.\ required) & \NO{} (segment labels) \\
\bottomrule
\end{tabular}
\end{sidewaystable}

\section{Linear independence of Gaussian densities}
\label{app:gaussian-indep}

\begin{lemma}[Linear independence of Gaussian densities \citep{teicher1963identifiability}]
\label{lem:gaussian-independence}
Let $\{\theta_r\}_{r=1}^M$ be pairwise distinct parameters
$\theta_r=(m_r,S_r)$ with $S_r$ positive definite. If
$\sum_{r=1}^M \alpha_r\,\NormalDensityParam{\theta_r}(x) = 0$ for all
$x\in\R^{\ambientdim}$, then $\alpha_r=0$ for all~$r$.
\end{lemma}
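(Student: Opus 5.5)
The plan is to reduce the multivariate claim to a univariate one by projecting onto a generic direction, and then to separate the one-dimensional Gaussians by their growth at infinity.

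\textbf{Reduction to one dimension.} Suppose $\sum_r \alpha_r\NormalDensityParam{\theta_r}\equiv 0$ with the $\theta_r=(m_r,S_r)$ pairwise distinct. For a unit vector $u$, restrict the identity to the line $x=tu$, $t\in\R$. Each term becomes
\[
c_r\exp\!\bigl(-\tfrac12 a_r t^2+\beta_r t\bigr),\qquad a_r=u^\top S_r^{-1}u>0,\quad \beta_r=u^\top S_r^{-1}m_r,
\]
with $c_r>0$ a constant depending on $r$ and $u$.

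Two terms $r\neq s$ give the same pair $(a,\beta)$ only if $u$ satisfies
\[
u^\top(S_r^{-1}-S_s^{-1})u=0 \quad\text{and}\quad u^\top(S_r^{-1}m_r-S_s^{-1}m_s)=0 .
\]
The map $\theta\mapsto(S^{-1},S^{-1}m)$ is injective, so for $\theta_r\neq\theta_s$ at least one of $S_r^{-1}-S_s^{-1}$ (a nonzero quadratic form) or $S_r^{-1}m_r-S_s^{-1}m_s$ (a nonzero linear form) is nonzero. Each of these conditions therefore defines a proper algebraic subset of the sphere. A finite union of such sets cannot cover the sphere, so some $u$ keeps all $M$ exponents $(a_r,\beta_r)$ pairwise distinct.

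\textbf{One-dimensional step.} We now have $\sum_r \alpha_r c_r e^{-a_r t^2/2+\beta_r t}=0$ for all real $t$, with distinct exponent pairs. Order the pairs lexicographically by smallest $a$ first, then largest $\beta$. Let $r_0$ be the minimal index in this order.

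Divide the identity by $e^{-a_{r_0}t^2/2+\beta_{r_0}t}$ and let $t\to+\infty$. Every other term tends to $0$:
\begin{itemize}
\item if $a_r>a_{r_0}$, the ratio decays like $e^{-(a_r-a_{r_0})t^2/2+O(t)}$;
\item if $a_r=a_{r_0}$ and $\beta_r<\beta_{r_0}$, the ratio decays like $e^{-(\beta_{r_0}-\beta_r)t}$.
\end{itemize}
Hence $\alpha_{r_0}c_{r_0}=0$, and since $c_{r_0}>0$ this gives $\alpha_{r_0}=0$. Removing that term and inducting on $M$ kills all coefficients.

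An equivalent alternative is to extend to complex $t$ and use that the exponentials $e^{q_r(t)}$, for distinct polynomials $q_r$, are linearly independent.

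\textbf{Main obstacle.} The only delicate point is the genericity of the direction $u$: a poor choice can merge two distinct Gaussians into the same one-dimensional marginal. This is exactly why the argument works with the pair $(S^{-1},S^{-1}m)$ rather than with the marginal variances alone. Once distinctness along $u$ is secured, the asymptotic comparison is routine.
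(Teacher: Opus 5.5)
Your proof is correct, but it follows a different route from the paper's sketch. The paper passes to Fourier transforms, where each Gaussian becomes $\exp(i\,t^\top m_r-\tfrac12 t^\top S_r t)$. It notes that distinct $(m_r,S_r)$ give distinct exponential-quadratic forms, then asserts without proof that such forms are linearly independent, deferring to the classical literature.

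You skip the transform altogether. The densities are already positive constants times exponentials of quadratic polynomials, and you get their distinctness from the injective map $(m,S)\mapsto(S^{-1},S^{-1}m)$ to natural parameters. You then prove the independence step the paper leaves unproved: restrict to a generic line, then run an explicit tail-dominance induction. The generic $u$ is easiest to obtain by noting that a product of finitely many nonzero polynomials is nonzero, so you need not normalize $u$ at all.

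What your route buys is a self-contained, elementary proof. Working with real exponents genuinely matters here. The same dominance argument along a line in Fourier space stalls whenever two terms share $u^\top S_r u$, which is unavoidable when $S_r=S_s$. Their ratio is then the pure phase $e^{is\,u^\top(m_r-m_s)}$, which does not decay, so you would need an extra averaging or analytic-continuation step.

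What the Fourier route buys is that distinctness is visible directly in the original parameters $(m,S)$. It also matches the characteristic-function viewpoint the paper invokes elsewhere, e.g.\ for deconvolving additive noise. As written, though, it is only a sketch whose key step is exactly the one you supply.
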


\begin{proof}[Proof sketch]
Take Fourier transforms. The Fourier transform of a non-degenerate
Gaussian is again a non-vanishing Gaussian; distinct mean--covariance
pairs yield distinct exponential-quadratic forms. These are linearly
independent as functions, so all coefficients vanish. A detailed proof
can be found in classical GMM identifiability references.
\end{proof}

\section{Global vs.\ chamber-wise (PI)}
\label{app:global-pi}

\begin{remark}[Global vs.\ chamber-wise]
\label{rem:local-pi}
Chamber-wise \condref{PI}---injectivity of the parameter map restricted to the active pairs on each individual chamber---is genuinely weaker than the global condition stated above.
The glueing step in Lemma~\ref{lem:global-param-matching} requires the \emph{global} version: uniqueness of the match $(i,k)$ for a given parameter value across \emph{all} branches, not just those active on a single chamber.
A counterexample showing that injectivity of the PWA map does not imply global \condref{PI} (even though it implies chamber-wise \condref{PI}) appears below.
\end{remark}

Chamber-wise \condref{PI} requires only that the parameter map $(i,k)\mapsto\paramf{i}{k}$ be injective when restricted to the active pairs $\Ichamber{\mapf}{\fineC}\times[K]$ on each individual chamber~$\fineC$. Global \condref{PI} requires injectivity over \emph{all} branch-component pairs. The following example shows these are genuinely different.

\begin{proposition}
\label{prop:global-pi-counterexample}
There exists an injective PWA map $\mapf$ and a GMM $\latentZ$ such that $(\mapf,\latentZ)$ satisfies chamber-wise \condref{PI} but not global \condref{PI}.
\end{proposition}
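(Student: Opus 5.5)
We construct an explicit example in dimension $\ambientdim=1$. The idea is to give $\mapf$ two branches whose images are disjoint, so that no chamber has both branches active, but whose affine rules differ by a latent symmetry of the GMM. Then a cross-branch parameter collision exists, yet it never shows up on a common chamber.

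Take $\latentZ\sim\tfrac12\Gauss(-1,1)+\tfrac12\Gauss(1,1)$. This is an irredundant GMM, and it is invariant under the reflection $z\mapsto -z$. Define the two-branch map
\[
\mapf(z)=\begin{cases} z, & z<0,\\ -z+10, & z\ge 0.\end{cases}
\]
The branch domains are $\branchf{1}=(-\infty,0]$ and $\branchf{2}=[0,\infty)$. Both are full-dimensional polyhedra, and the two rules $(1,0)$ and $(-1,10)$ are distinct invertible affine maps, so $\mapf$ is a reduced PWA map.

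\emph{Step 1: $\mapf$ is injective.} The images are $\mapf(\branchf{1})=(-\infty,0]$ and $\mapf(\branchf{2})=(-\infty,10]$. These overlap, so this first choice is not injective and has to be adjusted. Replace branch 2 by $z\mapsto -z-10$ on $[0,\infty)$, whose image is $(-\infty,-10]$. That still overlaps $(-\infty,0]$.

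The fix is to split the real line differently: use branch 1 with rule $z\mapsto z$ on $(-\infty,0]$, and branch 2 with rule $z\mapsto -z+c$ on $[0,\infty)$. The second image is $(-\infty,c]$, which always meets $(-\infty,0]$. Orientation reversal makes the two images overlap whenever the half-lines are split at a single point.

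So instead we use three pieces, adding an orientation-preserving middle branch:
\begin{itemize}
\item branch 1: $z\mapsto z$ on $(-\infty,0]$, with image $(-\infty,0]$;
\item branch 2: $z\mapsto z+1$ on $[0,1]$, with image $[1,2]$;
\item branch 3: $z\mapsto -z+4$ on $[1,\infty)$, with image $(-\infty,3]$.
\end{itemize}
Branch 3 still overlaps branch 1. The underlying problem is that a reflection in $\R$ reverses the direction of a half-line. We therefore take the reflected branch on a bounded interval:
\begin{itemize}
\item branch 1: $z\mapsto z$ on $(-\infty,0]\cup[1,\infty)$, with image $(-\infty,0]\cup[1,\infty)$;
\item branch 2: $z\mapsto -z+1$ on $[0,1]$, with image $[0,1]$.
\end{itemize}
The two images tile $\R$ with disjoint interiors, so $\mapf$ is injective off a null set. (It is in fact a bijection up to boundary conventions.) Branch 1 is a finite union of polyhedra, and the two rules are distinct, so the map is still reduced.

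\emph{Step 2: chamber-wise \condref{PI} holds.} The chambers are $(-\infty,0)$, $(0,1)$ and $(1,\infty)$. Each chamber has exactly one active branch. Within a single branch, parameter injectivity follows from irredundancy of the GMM and invertibility of the branch rule.

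\emph{Step 3: global \condref{PI} fails.} Let $\latentZ$ be the symmetric mixture above, recentred at $\tfrac12$: $\latentZ\sim\tfrac12\Gauss(\tfrac12-a,1)+\tfrac12\Gauss(\tfrac12+a,1)$ with $a>0$. The rule $z\mapsto -z+1$ swaps the two components. Hence branch 2 applied to component 1 gives parameter $(\tfrac12+a,1)$, which equals branch 1 applied to component 2. This is a cross-component collision, so global \condref{PI} fails.

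The main obstacle is the one encountered in Step 1: reconciling injectivity with a reflection-type symmetry. It is resolved by confining the symmetric branch to a bounded interval whose image is the same interval, so that the identity branch covers the complement.
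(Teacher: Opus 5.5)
Your final construction is correct, but it reaches the conclusion by a different mechanism from the paper's. The paper takes complementary half-lines with rules $z\mapsto z$ and $z\mapsto z+10$ and places the two components exactly $10$ apart. The branch-difference map is then a translation that \emph{transports} one component onto the other. Because a translation preserves orientation, the two half-line images are disjoint for free, so injectivity is immediate. You instead make the branch-difference map a reflection that \emph{swaps} the two components. As your Step~1 discovers, this forces the reflected branch onto a bounded interval mapped to itself, with the identity on the complement and the components placed symmetrically about the interval's midpoint. This is the ``Example~101'' pattern of Example~3, recentred.

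Your route buys something extra. With equal weights the reflection lies in $\Gmix{p_{\latentZ}}$, so $\mapf(\latentZ)\deq\latentZ$. You therefore have an injective, strictly PWA self-map of a GMM that satisfies chamber-wise \condref{PI}. This shows directly that Theorem~\ref{thm:global-affine} fails if global \condref{PI} is weakened to its chamber-wise form, which the translation example cannot show. The price is that \condref{MST} fails for your latent. Since \condref{PI} does not involve the weights, taking $w_1\neq w_2$ in either construction restores \condref{MST}, at the cost of the self-map property.

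For a clean write-up, drop the false starts of Step~1. Also drop the initial mixture $\tfrac12\Gauss(-1,1)+\tfrac12\Gauss(1,1)$, which yields no collision with the final map. Finally, fix the boundary convention, e.g.\ $\branchf{2}=(0,1)$ and $\branchf{1}=(-\infty,0]\cup[1,\infty)$, so that $\mapf$ is an honest bijection of $\R$.
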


\begin{proof}
Take $d=1$, two branches, two mixture components.
Define
$$
f_1(z)=z \text{ on } P_1=(-\infty,0),\qquad f_2(z)=z+10 \text{ on } P_2=(0,\infty).
$$
This $f$ is injective and reduced (distinct affine rules). Let
$$
Z\sim w_1\,\mathcal{N}(\mu_1,\sigma^2)+w_2\,\mathcal{N}(\mu_2,\sigma^2),
$$
with $\mu_2=\mu_1-10$. Then
$$
\paramf{1}{1} = (\mu_1,\sigma^2) = (\mu_2+10,\sigma^2) = \paramf{2}{2},
$$
so global \condref{PI} fails.

On the other hand, since $f$ is injective, each nonempty chamber has exactly one active branch. Hence the active pairs on any chamber form $\{i\}\times[K]$ for a single~$i$, and the restriction of the parameter map to that set is injective by invertibility of the branch rule and irredundancy of the GMM. So chamber-wise \condref{PI} holds.
\end{proof}

\textbf{Where global (PI) enters.}
The glueing step in Lemma~\ref{lem:global-param-matching} uses the statement: for fixed $(j,\ell)$, there is a \emph{unique} $(i,k)$ with $\paramf{i}{k}=\paramEst{j}{\ell}$ across all branches, not just those active on a given chamber. Chamber-wise \condref{PI} alone allows a ``hidden duplicate parameter in a different branch'' phenomenon: on chamber~$\fineC$ with active set $\{i\}$, the match gives $\Psi_{\fineC}(j,\ell)=(i,k)$; but if $\paramf{i}{k}=\paramf{i'}{k'}$ for some $i'\neq i$ active on a different chamber~$\fineC'$, then $\Psi_{\fineC'}(j,\ell)$ could yield $(i',k')$, breaking the global consistency of~$\Psi$. This inconsistency propagates: the boundary-counting argument in Lemma~\ref{lem:boundary-counting} and the ``leaving blocks'' tracking in subsequent results all rely on the global bijection being well-defined.

\section{(PI) hierarchy, dimension counting, and information view}
\label{app:pi-genericity}

Let $D := \ambientdim(\ambientdim-1)/2$
denote the dimension of the orthogonal
symmetries of a covariance matrix
(the maximal dimension of the component
symmetry group; see
Appendix~\ref{app:sym-groups}).

\condref{PI} decomposes naturally into two sub-conditions,
which together with the orbit-based conditions
\condref{OD} and \condref{TOD}
form a hierarchy of increasingly strong
genericity requirements on the map--latent pair.

\paragraph{\CrossCompParamInj{} (cross-component).}
A violation
$\paramImg{j}{\ell}=\paramImg{m}{n}$
with $\ell\ne n$ forces
$\mapgeneric_m^{-1}\circ\mapgeneric_j$
to \emph{transport} $\Gauss_\ell$ onto
$\Gauss_n$. The corresponding bad set is a
coset of $O(\Sigmageneric_\ell)$ and
has dimension exactly~$D$.

\paragraph{\SameCompParamInj{} (same-component).}
A violation
$\paramImg{j}{\ell}=\paramImg{m}{\ell}$
with $j\neq m$ means the difference map
$\mapgeneric_m^{-1}\circ\mapgeneric_j$
preserves~$\Gauss_\ell$, i.e., it is a
\emph{symmetry} of that component.
The ``bad'' set of such affine maps has
dimension exactly~$D$.

\paragraph{\condref{OD} (\ODname).}
For distinct branches $j\neq m$ and distinct
components $\ell\neq n$,
$\orbit{\mapgeneric}{j}{\ell}
\cap\orbit{\mapgeneric}{m}{n}=\varnothing$.
A violation means
the difference map
$\mapgeneric_m^{-1}\circ\mapgeneric_j$
lies in the product
$\SymComp{n}\cdot\SymComp{\ell}^{-1}$,
a set of dimension at most~$2D$
inside~$\Aff(\R^{\ambientdim})$
(dimension $\ambientdim^2+\ambientdim$).
The ``bad set'' has codimension at
least~$2\ambientdim$, hence measure zero.

\paragraph{\condref{TOD} (\TODname).}
For all distinct branches $i\neq i'$ and all
component quadruples $(k,\ell,k',\ell')$,
$\bigl(\mapgeneric_i\circ
  \Transporter{k}{\ell}\bigr)
\;\cap\;
\bigl(\mapgeneric_{i'}\circ
  \Transporter{k'}{\ell'}\bigr)
= \varnothing$.
Setting $k=\ell$ and $k'=\ell'$
reduces
$\Transporter{k}{k}=\SymComp{k}$,
so \condref{TOD} implies \condref{OD}
(further restricted to $k\neq k'$)
and \SameCompParamInj{}
(further restricted to $k=k'$).
\condref{TOD} is thus strictly stronger
than both.

\paragraph{Dimension counting.}
Both \CrossCompParamInj{} and \SameCompParamInj{} bad sets sit
inside~$\Aff(\R^{\ambientdim})$
(dimension $\ambientdim^2+\ambientdim$) with
positive codimension, hence measure zero.
While their dimensions match,
\CrossCompParamInj{} and
\SameCompParamInj{} differ in what the
difference map must ``know'':
cross-component collisions require
the difference between two branches to encode
a relationship between \emph{two} independent
Gaussian components, while same-component
collisions require it to encode the
\emph{self-symmetry} of a single component.

\paragraph{Algorithmic-information viewpoint.}
For \condref{OD}, the dimension doubles because a joint orbit
intersection involves two independent degrees
of freedom, one for each component's symmetry
group.
Crucially, the difference map need \emph{not}
fix either component individually; it only
needs to realize the joint constraint coming
from both orbits. This makes \condref{OD}
logically independent of
\SameCompParamInj{}.
For \condref{TOD}, a violation requires
the difference map
$\mapgeneric_{i'}^{-1}\circ\mapgeneric_i$
to lie in
$\Transporter{k'}{\ell'}^{-1}
  \circ\Transporter{k}{\ell}$,
a set of dimension at most~$2D$
(two independent orthogonal degrees of
freedom, one from each covariance).
The full affine group has
dimension~$\ambientdim^2+\ambientdim$,
so the bad set has codimension at
least~$2\ambientdim$---the same
bound as for~\condref{OD}.

\section{Component symmetries and alternative conditions}
\label{app:symmetry-conditions}

\subsection{Component symmetry groups and orbits}
\label{app:sym-groups}

For each component~$\ell$ of a GMM, the \emph{component
symmetry group} is
\begin{equation}
  \label{eq:sym-comp}
  \SymComp{\ell}
  := \bigl\{T\in\Aff(\R^{\ambientdim}) :\; \pushfwd{T}{\Gauss_\ell}=\Gauss_\ell\bigr\}.
\end{equation}
Concretely, $T(x)=Ax+b\in\SymComp{\ell}$ if and only if
$A\Sigmageneric_\ell A^\top = \Sigmageneric_\ell$ and $b=(I-A)\mugeneric_\ell$. Thus
$\SymComp{\ell}$ is isomorphic to $O(\Sigmageneric_\ell)$, and has dimension $D:=\ambientdim(\ambientdim-1)/2$.

The \emph{orbit} of branch~$j$ under component~$\ell$ is
$\orbit{\mapgeneric}{j}{\ell}
  := \bigl\{\mapgeneric_j\circ T :\; T\in\SymComp{\ell}\bigr\}$.

Two branches have matching pushforward
parameters on component~$\ell$ if and only
if they belong to the same orbit:

\begin{proposition}[\SameCompParamInj{} $\iff$ orbit disjointness]
\label{prop:same-comp-equiv}
$(\mapgeneric,\GMMgeneric)$ satisfies \SameCompParamInj{} if and only
if $\orbit{\mapgeneric}{j}{\ell}\cap\orbit{\mapgeneric}{m}{\ell}=\varnothing$
for all distinct branches $j\neq m$ and any component~$\ell$.
\end{proposition}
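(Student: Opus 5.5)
The plan is to reduce both directions to a single equivalence, proved for one fixed pair of distinct branches $j\neq m$ and one component~$\ell$:
\[
  \paramImg{j}{\ell}=\paramImg{m}{\ell}
  \iff
  \orbit{\mapgeneric}{j}{\ell}\cap\orbit{\mapgeneric}{m}{\ell}\neq\varnothing .
\]
Negating this and quantifying over all $j\neq m$ and all~$\ell$ gives the proposition, since \SameCompParamInj{} is, by definition, the absence of collisions $\paramImg{j}{\ell}=\paramImg{m}{\ell}$ with $j\neq m$.

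The first step is to recast a parameter collision as a pushforward statement. By definition, $\paramImg{j}{\ell}$ is the mean–covariance pair of $\pushfwd{(\mapgeneric_j)}{\Gauss_\ell}$, where $\mapgeneric_j(z)=\lingeneric_j z+\biasgeneric_j$. A nondegenerate Gaussian is determined by its mean and covariance, so $\paramImg{j}{\ell}=\paramImg{m}{\ell}$ holds exactly when $\pushfwd{(\mapgeneric_j)}{\Gauss_\ell}=\pushfwd{(\mapgeneric_m)}{\Gauss_\ell}$. Since $\mapgeneric_m$ is an affine bijection, this is equivalent to $\pushfwd{(\mapgeneric_m^{-1}\circ\mapgeneric_j)}{\Gauss_\ell}=\Gauss_\ell$, that is, $S:=\mapgeneric_m^{-1}\circ\mapgeneric_j\in\SymComp{\ell}$ by \eqref{eq:sym-comp}.

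The second step is to recast orbit intersection as group membership. An element of $\orbit{\mapgeneric}{j}{\ell}\cap\orbit{\mapgeneric}{m}{\ell}$ is an affine map $\mapgeneric_j\circ T=\mapgeneric_m\circ T'$ with $T,T'\in\SymComp{\ell}$. This rearranges to $\mapgeneric_m^{-1}\circ\mapgeneric_j=T'\circ T^{-1}$, and the right-hand side lies in $\SymComp{\ell}$ because $\SymComp{\ell}$ is a group (it is a stabilizer under pushforward). Conversely, if $S\in\SymComp{\ell}$, take $T=\Id$ and $T'=S$. Then $\mapgeneric_j=\mapgeneric_m\circ S$ lies in both orbits, so they intersect. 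Combining the two steps proves the equivalence.

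No step is a real obstacle. The only points to verify are that $\SymComp{\ell}$ is closed under composition and inversion, and that pushforward commutes with composition. The explicit description $A\Sigmageneric_\ell A^\top=\Sigmageneric_\ell$ and $b=(I-A)\mugeneric_\ell$ is not needed, though it confirms group closure directly. Orbits are either equal or disjoint, so the argument in fact shows the orbits coincide whenever they meet.
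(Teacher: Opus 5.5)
Your proof is correct and follows the paper's argument. Both reduce a collision $\theta_{j,\ell}=\theta_{m,\ell}$ to the membership $h_m^{-1}\circ h_j\in\mathrm{Sym}_{\ell}$ and then read this as $h_j$ lying in the coset $h_m\circ\mathrm{Sym}_{\ell}$. You spell out both directions of the coset step, using $\mathrm{Id}\in\mathrm{Sym}_{\ell}$ and closure of $\mathrm{Sym}_{\ell}$ under composition and inversion, which the paper compresses into a single ``rearranging.''
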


\begin{proof}
The pushforward parameters match, $\paramImg{j}{\ell}=\paramImg{m}{\ell}$,
iff $\mapgeneric_m^{-1}\circ\mapgeneric_j\in\SymComp{\ell}$.
Rearranging: $\mapgeneric_j\in\mapgeneric_m\circ\SymComp{\ell}
= \orbit{\mapgeneric}{m}{\ell}$.
\end{proof}

\subsection{Orbit disjointness}
\label{app:OD}

Proposition~\ref{prop:same-comp-equiv}
handles collisions within a fixed component.
The following condition extends orbit
disjointness to \emph{cross-component}
collisions, where a single branch matches
different truth branches on different
components.

\textbf{Condition \condref{OD} (\ODname).}
\condlabel{OD}
A pair $(\mapgeneric,\GMMgeneric)$ satisfies \condref{OD} if for
distinct branches $j\neq m$ and distinct components $\ell\neq n$,
$\orbit{\mapgeneric}{j}{\ell}\cap\orbit{\mapgeneric}{m}{n}=\varnothing$.

A violation means the difference map
$\mapgeneric_m^{-1}\circ\mapgeneric_j$ lies in
$\SymComp{n}\cdot\SymComp{\ell}^{-1}$, a set of dimension at
most~$2D$ inside~$\Aff(\R^{\ambientdim})$, hence codimension at
least~$2\ambientdim$.

See the algorithmic-information viewpoint discussion in Appendix~\ref{app:pi-genericity} for a unified dimension-counting analysis of \condref{OD} and \condref{TOD}.

\subsection{Distinct weights and joint symmetry triviality}
\label{app:dw-jst}

\textbf{Condition \condref{DW} (\DWname).}
\condlabel{DW}
A GMM has \emph{distinct weights} if $\wgeneric_k\neq\wgeneric_\ell$
for all $k\neq\ell$.

Under \condref{DW}, each weight
$\wgeneric_k$ acts as a unique scalar ``tag''
that sticks to component~$k$ through
pushforward.
In the global bijection~$\Psi$ from
Lemma~\ref{lem:global-param-matching},
coefficient matching forces
$\wtrue{k}=\west{\ell}$;
\condref{DW} then forces $k=\ell$,
making the bijection
\emph{component-preserving}.

\textbf{Condition \condref{JST} (\JSTname).}
\condlabel{JST}
A GMM satisfies \condref{JST} if
$\bigcap_{k=1}^{K}\SymComp{k}=\{\Id\}$.

\condref{JST} forbids nontrivial affine maps that fix every component
individually. But crucially, it does \emph{not} forbid maps that
\emph{permute} components---a key distinction from~\condref{MST}.

\begin{proposition}
\label{prop:mst-jst}
The following implications hold:
(1)~$\condref{DW}+\condref{JST}\Longrightarrow\condref{MST}$;
(2)~$\condref{MST}\Longrightarrow\condref{JST}$;
(3)~under $\condref{DW}$, $\condref{JST}\iff\condref{MST}$.
\end{proposition}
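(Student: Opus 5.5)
The three implications are proved directly from the definitions of $\Gmix{p}$, $\SymComp{k}$, \condref{DW} and \condref{JST}. The key preliminary fact is that any $T\in\Gmix{p}$ permutes the components. If $T$ is an affine bijection with $\pushfwd{T}{p}=p$, then $\pushfwd{T}{p}=\sum_k \wgeneric_k\,\pushfwd{T}{\Gauss_k}$. Each $\pushfwd{T}{\Gauss_k}$ is again a nondegenerate Gaussian, and these are pairwise distinct because $T$ is a bijection and the GMM is irredundant. We then have two expansions of the same density in pairwise distinct Gaussians with positive coefficients. By the Gaussian-independence lemma (Lemma~\ref{lem:gaussian-independence}), the two parameter sets coincide and matched coefficients agree. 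Hence there is a permutation $\tau$ of $[K]$ with $\pushfwd{T}{\Gauss_k}=\Gauss_{\tau(k)}$ and $\wgeneric_k=\wgeneric_{\tau(k)}$ for all $k$.

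For (1), assume \condref{DW} and \condref{JST}, and let $T\in\Gmix{p}$. The weight identity $\wgeneric_k=\wgeneric_{\tau(k)}$ together with distinct weights forces $\tau=\Id$. Thus $\pushfwd{T}{\Gauss_k}=\Gauss_k$ for every $k$, i.e.\ $T\in\bigcap_k\SymComp{k}$. \condref{JST} then gives $T=\Id$, so $\Gmix{p}=\{\Id\}$.

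For (2), observe that $\bigcap_k\SymComp{k}\subseteq\Gmix{p}$ trivially. Indeed, if $T$ fixes every component, then $\pushfwd{T}{p}=\sum_k\wgeneric_k\Gauss_k=p$ by linearity of pushforward. So \condref{MST} forces the intersection to be $\{\Id\}$, which is \condref{JST}.

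For (3), the direction \condref{JST}$\Rightarrow$\condref{MST} under \condref{DW} is (1), and the converse is (2), which needs no \condref{DW}. The only step with content is the permutation lemma in the first paragraph. There one must check carefully that $\pushfwd{T}{\Gauss_k}$ are pairwise distinct, which follows by applying $T^{-1}$ and using irredundancy. One must also check that positivity of the weights makes the matching a bijection rather than merely an inclusion of parameter sets. This mirrors Step~1 of Lemma~\ref{lem:global-param-matching}.
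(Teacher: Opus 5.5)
Your proof is correct and follows the paper's argument. For (1), any $T\in\Gmix{p}$ induces a weight-preserving permutation of the components; \condref{DW} forces that permutation to be trivial, so \condref{JST} applies. For (2), you use $\bigcap_k\SymComp{k}\subseteq\Gmix{p}$, and (3) combines the two. The only difference is that you justify, via irredundancy and the Gaussian-independence lemma, why $T$ permutes the components, whereas the paper simply asserts this step.
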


\begin{proof}
\textbf{(1)} Let $T_\sharp p = p$. Since $T$ preserves the mixture
law, it induces a permutation $\sigma$ with $T_\sharp\Gauss_k =
\Gauss_{\sigma(k)}$ and $w_k = w_{\sigma(k)}$ for all~$k$. Under
\condref{DW}, distinct weights force $\sigma=\mathrm{id}$, so
$T\in\bigcap_k\SymComp{k}$, and \condref{JST} gives $T=\Id$.
\textbf{(2)} If $T\in\bigcap_k\SymComp{k}$, then $T_\sharp p = p$;
\condref{MST} gives $T=\Id$.
\textbf{(3)} Immediate from (1) and (2).
\end{proof}

\begin{remark}[The (JST)--(MST) gap]
\label{rem:jst-mst-gap}
Without \condref{DW}, \condref{JST} and \condref{MST} can diverge.
Consider $\Gauss_1=\Gauss(\mutrue{1},\sigma^2)$,
$\Gauss_2=\Gauss(\mutrue{2},\sigma^2)$, with $\mutrue{1}\ne\mutrue{2}$
and equal weights $w_1=w_2=\tfrac12$. Then
$\SymComp{1}\cap\SymComp{2}=\{\Id\}$: \condref{JST} holds. But the
reflection $T(x)=-x+(\mutrue{1}+\mutrue{2})$ swaps the two components;
with equal weights, $T_\sharp p=p$, so \condref{MST} fails.
If a decoder branch is post-composed with~$T$, the output distribution
is unchanged but the map has been modified. \condref{JST} cannot detect
this switcheroo; \condref{MST} can. This is why our main-text MID
result uses \condref{MST} rather than~\condref{JST}.
A two-dimensional extension of this example appears in Example~5 (Appendix~\ref{app:examples}).
\end{remark}

\section{Cycle determinant lemma}
\label{app:cycle-det}

\begin{lemma}[Cycle determinant]
\label{lem:cycle-det}
Let $p=\sum_{k=1}^K w_k\,\Gauss(\mu_k,\Sigma_k)$ be an irredundant GMM, and let $T(x)=Ax+b$ be an affine symmetry of $p$, inducing a permutation $\sigma\in S_K$ of the components. Then on each cycle of $\sigma$, the covariance determinants are constant.

In particular, if the numbers $\det\Sigma_1,\ldots,\det\Sigma_K$ are pairwise distinct, then $\sigma=\mathrm{id}$.
\end{lemma}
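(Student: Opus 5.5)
The plan is to first make precise how $T$ acts on the components, and then track determinants around each cycle.

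\textbf{Step 1: $T$ permutes the components.} Since $T_\sharp p=p$ and $T$ is an affine bijection, we have
\[
\sum_k w_k\,T_\sharp\Gauss(\mu_k,\Sigma_k)=\sum_k w_k\,\Gauss(\mu_k,\Sigma_k),
\]
where $T_\sharp\Gauss(\mu_k,\Sigma_k)=\Gauss(A\mu_k+b,\,A\Sigma_kA^\top)$. The parameters $(A\mu_k+b,A\Sigma_kA^\top)$ are pairwise distinct, because $T$ is injective on parameters. By the Gaussian-independence lemma (Lemma~\ref{lem:gaussian-independence}) and irredundancy with positive weights, each left-hand parameter must equal some right-hand parameter. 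This gives a permutation $\sigma$ with $A\Sigma_kA^\top=\Sigma_{\sigma(k)}$ and $w_k=w_{\sigma(k)}$; this is the permutation referred to in the statement.

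\textbf{Step 2: take determinants.} Taking determinants in $A\Sigma_kA^\top=\Sigma_{\sigma(k)}$ gives
\[
\det\Sigma_{\sigma(k)}=c\,\det\Sigma_k,\qquad c:=(\det A)^2>0.
\]

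\textbf{Step 3: the key observation, $c=1$.} Run around any cycle $(k,\sigma(k),\dots,\sigma^{r-1}(k))$ of length $r$. Iterating Step 2 and using $\sigma^r(k)=k$ yields $\det\Sigma_k=c^r\det\Sigma_k$. Since $\det\Sigma_k>0$, this forces $c^r=1$, and because $c>0$ we get $c=1$. The cycle can be any cycle, including a fixed point where $r=1$, and the same constant $c$ applies to every cycle, so $c=1$ holds globally.

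\textbf{Step 4: conclude.} With $c=1$, Step 2 reads $\det\Sigma_{\sigma(k)}=\det\Sigma_k$ for every $k$, so the determinants are constant along each cycle. For the ``in particular'' clause: if the determinants are pairwise distinct, every cycle must be a singleton, so $\sigma=\mathrm{id}$.

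The only mild obstacle is Step 1, namely justifying that $\sigma$ is a genuine permutation. This follows from injectivity of $T$ on parameters combined with the Gaussian-independence lemma, exactly as in the proof of Proposition~\ref{prop:mst-jst}(1). The rest is elementary.
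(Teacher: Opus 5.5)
Your proposal is correct and follows essentially the same route as the paper: take determinants of $A\Sigma_kA^\top=\Sigma_{\sigma(k)}$, go around a cycle to get $(\det A)^{2r}=1$, conclude $(\det A)^2=1$ by positivity, and then the determinants are constant on cycles. The one difference is that you also justify in Step~1 that $T$ induces a permutation via the Gaussian-independence lemma, which the paper simply takes as given in the hypothesis.
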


\begin{proof}
If $T$ sends component $k$ to component $\sigma(k)$, then $A\Sigma_k A^\top = \Sigma_{\sigma(k)}$. Hence
$$
\det\Sigma_{\sigma(k)} = (\det A)^2 \det\Sigma_k.
$$
Now follow a cycle $k\to\sigma(k)\to\cdots\to\sigma^{\ell-1}(k)\to k$. Multiplying the determinant relations gives
$$
\det\Sigma_k = (\det A)^{2\ell}\det\Sigma_k.
$$
Since $\det\Sigma_k>0$, this implies $(\det A)^{2\ell}=1$, hence $(\det A)^2=1$. Therefore
$$
\det\Sigma_{\sigma(k)} = \det\Sigma_k
$$
for every $k$, and determinants are constant along each cycle.
\end{proof}

\section{Sufficient conditions for symmetry triviality}
\label{app:suff-cond}

We present four sets of sufficient conditions for \condref{MST}, arising from the combinations $\{\condref{DW},\;\text{distinct }\det(\Sigma)\}\times\{\text{affinely independent means},\;\text{2-comp generic position}\}$.

\subsection{Reducing (MST) to (JST)}

Two sufficient conditions reduce \condref{MST} to \condref{JST}:

\begin{enumerate}[leftmargin=*]
\item \textbf{\condref{DW} (distinct weights).} Under \condref{DW}, any affine symmetry must fix every component individually (Proposition~\ref{prop:mst-jst}), so \condref{MST} reduces to \condref{JST}.

\item \textbf{Distinct covariance determinants.} By Lemma~\ref{lem:cycle-det}, if $\det\Sigma_1,\ldots,\det\Sigma_K$ are pairwise distinct, any affine symmetry induces $\sigma=\mathrm{id}$, again reducing \condref{MST} to \condref{JST}.
\end{enumerate}

\subsection{Sufficient conditions for (JST)}

Two sufficient conditions for \condref{JST}:

\begin{enumerate}[leftmargin=*]
\item \textbf{Affinely independent means.}

\begin{lemma}[Affinely independent means $\Rightarrow$ (JST)]
\label{lem:aff-indep-jst}
If the means $\{\mugeneric_k\}_{k=1}^{K}$ are in affinely
general position (i.e., the direction space
$U:=\mathrm{span}\{\mugeneric_k-\mugeneric_1 : k=2,\ldots,K\}$ has
dimension~$\ambientdim$), then the GMM satisfies \condref{JST}.
\end{lemma}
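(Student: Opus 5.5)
The plan is to take an arbitrary $T(x)=Ax+b$ in $\bigcap_{k}\SymComp{k}$ and show $T=\Id$. By the explicit description of $\SymComp{k}$ in Appendix~\ref{app:sym-groups}, membership in $\SymComp{k}$ means $A\Sigmageneric_k A^\top=\Sigmageneric_k$ and $b=(I-A)\mugeneric_k$. Equivalently, $T(\mugeneric_k)=\mugeneric_k$, since an affine map pushing $\Gauss(\mugeneric_k,\Sigmageneric_k)$ to itself must fix its mean. So $T$ fixes every mean $\mugeneric_1,\dots,\mugeneric_K$, and it suffices to use only this mean-fixing property; the covariance constraints are not needed.

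The key step is to subtract the fixed-point equations. From $A\mugeneric_k+b=\mugeneric_k$ and $A\mugeneric_1+b=\mugeneric_1$ we get
\[
  A(\mugeneric_k-\mugeneric_1)=\mugeneric_k-\mugeneric_1,\qquad k=2,\dots,K.
\]
Thus $A$ acts as the identity on every generator of the direction space $U$, and by linearity $A|_U=\Id_U$. The hypothesis $\dim U=\ambientdim$ gives $U=\R^{\ambientdim}$, so $A=I$. Then $b=(I-A)\mugeneric_1=0$, hence $T=\Id$, and \condref{JST} holds.

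There is no real obstacle; the only point requiring care is the characterization of $\SymComp{k}$. If one does not want to rely on the stated description, it is justified directly: the pushforward of $\Gauss(\mugeneric_k,\Sigmageneric_k)$ under $T$ is $\Gauss(A\mugeneric_k+b,\,A\Sigmageneric_kA^\top)$, and Gaussians are determined by their mean and covariance. Note also that the hypothesis forces $K\ge\ambientdim+1$, consistent with the remark in the main text that \condref{MST} can hold with as few as $\ambientdim+1$ components.
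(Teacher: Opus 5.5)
Your proof is correct and follows the paper's argument exactly. Both use only the mean conditions $b=(I-A)\mu_k$, subtract them to get $A|_U=\Id_U$, and conclude $A=I$ and $b=0$ from $\dim U=d$.
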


\begin{proof}
Let $T(x)=Ax+b\in\bigcap_k\SymComp{k}$. Then $T$ fixes each
component, so $A\Sigmageneric_k A^\top = \Sigmageneric_k$ for
every~$k$ and $b = (I-A)\mugeneric_k$ for every~$k$. The second
condition gives $(I-A)(\mugeneric_k-\mugeneric_j)=0$ for all~$j,k$,
implying $A|_U=I|_U$. If $\dim U = \ambientdim$, then $A=I$ and $b=0$.
\end{proof}

Note that this condition requires $K\ge d+1$ components.

\item \textbf{2-component generic position.}

\begin{theorem}[Generic affine MST for GMMs]
\label{thm:generic-affine-mst}
Let $M=\sum_{i=1}^K w_i\,\Gauss(\mu_i,\Sigma_i)$, $K\ge 2$, be an ordered irredundant GMM on $\R^d$. Suppose:
\begin{enumerate}
\item the matrix $H:=\Sigma_1^{-1/2}\Sigma_2\Sigma_1^{-1/2}$ has simple spectrum;
\item in an eigenbasis of $H$, the vector $v:=\Sigma_1^{-1/2}(\mu_2-\mu_1)$ has all coordinates nonzero.
\end{enumerate}
Then every element of $\bigcap_{k=1}^K\SymComp{k}$ is the identity. Therefore \condref{JST} holds on an open dense full-measure subset of GMM parameter space.
\end{theorem}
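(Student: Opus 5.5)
The approach is to whiten by the first component and reduce the joint stabilizer to a commuting-matrix problem. Let $T(x)=Ax+b\in\bigcap_k\SymComp{k}$. By the description of component symmetry groups in Appendix~\ref{app:sym-groups}, $T$ fixes components $1$ and $2$ exactly when
\[
A\Sigma_1A^\top=\Sigma_1,\qquad A\Sigma_2A^\top=\Sigma_2,\qquad b=(I-A)\mu_1=(I-A)\mu_2 .
\]
We will use only components $1$ and $2$. Set $\tilde A:=\Sigma_1^{-1/2}A\Sigma_1^{1/2}$. The first identity says $\tilde A\tilde A^\top=I$, so $\tilde A$ is orthogonal. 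Conjugating the second identity by $\Sigma_1^{-1/2}$ gives $\tilde A H\tilde A^\top=H$, and by orthogonality this is $\tilde A H=H\tilde A$.

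Next we exploit the simple spectrum of $H$. Since $H$ is symmetric positive definite with distinct eigenvalues, any matrix commuting with it preserves each one-dimensional eigenspace. Hence, in an orthonormal eigenbasis of $H$, $\tilde A$ is diagonal. Being also orthogonal, it has entries $\pm1$, i.e.\ $\tilde A=\mathrm{diag}(\varepsilon_1,\dots,\varepsilon_d)$ with $\varepsilon_i\in\{\pm1\}$.

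Now we bring in the means. The two expressions for $b$ give $(I-A)(\mu_2-\mu_1)=0$. Multiplying by $\Sigma_1^{-1/2}$ turns this into $(I-\tilde A)v=0$. Coordinatewise in the eigenbasis, $(1-\varepsilon_i)v_i=0$, and since every $v_i\neq0$ we get $\varepsilon_i=1$ for all $i$. Thus $\tilde A=I$, so $A=I$, and then $b=(I-A)\mu_1=0$, giving $T=\Id$. This shows that conditions (1)--(2) imply \condref{JST}.

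For the genericity claim, we show that the parameters satisfying (1)--(2) form an open dense full-measure set. Simple spectrum of $H$ fails exactly on the zero set of the discriminant of the characteristic polynomial of $H$. That discriminant is a polynomial in the entries of $H$, hence real-analytic in $(\Sigma_1,\Sigma_2)$, and it is not identically zero (take $\Sigma_1=I$ and $\Sigma_2$ diagonal with distinct entries). Its zero set is therefore closed and null. On the complement, the eigenvectors can be chosen locally analytically, so the coordinates $\langle e_i,v\rangle$ are locally analytic and nonvanishing for generic $\mu_2-\mu_1$. Their vanishing locus is thus closed and null, locally a union of hyperplanes in $\mu_2-\mu_1$. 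Intersecting these conditions gives an open set with null complement, which is automatically dense.

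The main obstacle is this genericity bookkeeping, specifically the dependence of the eigenbasis on the parameters. Condition (2) is invariant under the sign and order ambiguity of eigenvectors, which makes it well defined, but the local-analyticity argument still has to be checked. The algebraic core above is routine.
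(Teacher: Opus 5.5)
Your proposal is correct and follows the paper's proof essentially step for step. Like the paper, you conjugate $A$ by $\Sigma_1^{1/2}$ to get an orthogonal matrix commuting with the simple-spectrum $H$, conclude it is a diagonal sign matrix in the eigenbasis, and use $(I-A)(\mu_2-\mu_1)=0$ together with the nonzero coordinates of $v$ to force every sign to be $+1$. Your genericity argument is a more careful version of the paper's one-line remark (discriminant hypersurface plus finitely many hyperplanes in $v$-space). The eigenbasis-dependence you flag as the main obstacle is harmless: for fixed $(\Sigma_1,\Sigma_2)$ the bad set of $\mu_2-\mu_1$ is a union of $d$ hyperplanes, and closedness follows from continuity of the eigenprojections on the simple-spectrum locus.
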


\begin{proof}
Let $x\mapsto Ax+b$ fix every component individually. In particular it fixes components $1$ and $2$. Set $O:=\Sigma_1^{-1/2}A\Sigma_1^{1/2}$. Then $O\in O(d)$, and $b=(I-A)\mu_1$. Fixing the second component gives $OHO^\top = H$ and $Ov=v$. Since $O^{-1}=O^\top$, the first identity implies $OH=HO$. Because $H$ has simple spectrum, every orthogonal matrix commuting with $H$ is diagonal with entries $\pm 1$ in an eigenbasis of $H$. In that basis, $v$ has no zero coordinate; therefore $Ov=v$ forces every sign to be $+1$. Hence $O=I$, so $A=I$ and $b=0$.

The two conditions define an open dense full-measure subset: simple spectrum excludes the discriminant-zero hypersurface; the nonzero-coordinate condition excludes finitely many hyperplanes in $v$-space on the simple-spectrum locus.
\end{proof}
\end{enumerate}

\subsection{Summary of four combinations}
\label{app:sum-suff-cond}
Combining the two reduction routes with the two \condref{JST} conditions gives four sets of sufficient conditions for \condref{MST}:

\begin{center}
\small
\begin{tabular}{@{}lll@{}}
\toprule
\textbf{Reduce to JST via} & \textbf{JST via} & \textbf{Practicality} \\
\midrule
\condref{DW} & affinely indep.\ means & requires $K>d$; DW often violated \\
\condref{DW} & 2-comp generic position & DW often violated \\
distinct $\det(\Sigma)$ & affinely indep.\ means & requires $K>d$ \\
distinct $\det(\Sigma)$ & 2-comp generic position & \textbf{most applicable} \\
\bottomrule
\end{tabular}
\end{center}
The first row is simple and practical: distinct weights are almost-sure under
continuous weight priors, and affinely independent means hold once
$K\ge\ambientdim+1$ with generic means.
The combination ``distinct $\det(\Sigma)$ + 2-component generic position'' is the most broadly applicable: \condref{DW} is sometimes violated in statistical practice (equal-weight mixtures), while affinely independent means requires $K>d$. All four sufficient conditions are \emph{generic}: each is violated only on a measure-zero algebraic subset of the parameter space. 

\begin{remark}[Sufficient conditions for (MST)]
\label{rem:mst-suff}
Beyond the four explicit combinations above, sufficient conditions for
\condref{MST} include: (a)~\condref{DW} plus \condref{JST}
(Proposition~\ref{prop:mst-jst}); 
(b)~general group-theoretic conditions extending to other exponential families; omitted here for scope.
Condition~(b) is the most general but least constructive.
\end{remark}

\section{ICA ambiguity conditions}
\label{app:ica-conditions}

\subsection{The (DD) condition}

\textbf{``Diagonal component covariances'' $\neq$ ``independent latent coordinates.''}
These are different notions because mixtures couple coordinates through the shared component index.
``Diagonal component covariances'' means: for each mixture component $C=k$,
$Z\mid (C=k)\sim \Gauss(\mu_k,\Sigma_k)$ with $\Sigma_k$ diagonal,
so coordinates are independent \emph{conditional on~$C$}.
``Independent latent coordinates'' means the \emph{unconditional} law factorizes:
$p_Z(z)=\prod_{t=1}^{\ambientdim}p_t(z_t)$.
A generic GMM with diagonal $\Sigma_k$ does \emph{not} factorize like this.
A clean diagnostic is the law of total covariance: for $s\neq t$,
$\mathrm{Cov}(Z_s,Z_t)
=\mathrm{Cov}\bigl(\mathbb{E}[Z_s\mid C],\mathbb{E}[Z_t\mid C]\bigr)
+\mathbb{E}\bigl[\mathrm{Cov}(Z_s,Z_t\mid C)\bigr]$.
If every $\Sigma_k$ is diagonal, then $\mathrm{Cov}(Z_s,Z_t\mid C)=0$, but the first term is generally nonzero.
For example, in $\R^2$,
$Z\sim \tfrac12\,\Gauss\bigl((0,0),I\bigr)+\tfrac12\,\Gauss\bigl((3,3),I\bigr)$
has diagonal $\Sigma_k$, yet $\mathrm{Cov}(Z_1,Z_2)=\tfrac{9}{4}>0$,
so $(Z_1,Z_2)$ are not independent.
Thus diagonal $\Sigma_k$ is a \emph{within-component} statement; coordinate-independence is a \emph{mixture-level} statement.
Corollary~\ref{cor:pwa-ica} proves an ICA-form affine ambiguity under diagonal component covariances; full classical ICA (independent coordinates) is a further specialization.

\medskip

The following condition is borrowed from \citet{kivva2022identifiability}.

\textbf{Condition \condref{DD} (\DDname).}
\condlabel{DD}
A GMM~$\GMMgeneric$ with parameters
$\{(\mugeneric_k,\Sigmageneric_k)\}_k$ satisfies \condref{DD} if
(i)~every covariance~$\Sigmageneric_k$ is diagonal, and
(ii)~there exist indices $k_1,k_2$ such that the spectral ratios
\[
  \Bigl(
    \frac{(\Sigmageneric_{k_1})_{tt}}{(\Sigmageneric_{k_2})_{tt}}
  \Bigr)_{t=1}^{\ambientdim}
\]
are pairwise distinct across dimensions $t\in\{1,\ldots,\ambientdim\}$.

\subsection{Proof of Corollary~\ref{cor:pwa-ica}}

\begin{proof}
By Theorem~\ref{thm:lid} (resp.\ Theorem~\ref{thm:mid}), there exists an affine bijection
\[
  \alpha(z)=Az+c,
  \qquad
  A\in \mathrm{GL}(d).
\]
By the component matching in Theorem~\ref{thm:lid}, there is a permutation
$\sigma$ of $[K]$ such that
\[
  A\Sigma'_\ell A^\top=\Sigma_{\sigma(\ell)}
\]
for all $\ell\in[K]$.

Choose truth indices $r,s$ given by condition~(DD), so that the diagonal
entries of
\[
  D:=\Sigma_r\Sigma_s^{-1}
\]
are pairwise distinct. Let
\[
  r':=\sigma^{-1}(r),
  \qquad
  s':=\sigma^{-1}(s),
  \qquad
  D':=\Sigma'_{r'}(\Sigma'_{s'})^{-1}.
\]
Since all covariance matrices are diagonal, both $D$ and $D'$ are diagonal, and
\[
  D
  =
  A D' A^{-1}.
\]

Let $e_j$ be the $j$-th coordinate vector. Since $D'$ is diagonal,
\[
  D'e_j=(D')_{jj}e_j,
\]
hence
\[
  D(Ae_j)=AD'e_j=(D')_{jj}Ae_j.
\]
So each column $Ae_j$ is an eigenvector of $D$. Because $D$ is diagonal with
pairwise distinct diagonal entries, its eigenspaces are exactly the coordinate
axes. Therefore each column of $A$ has exactly one nonzero entry. Since $A$ is
invertible, these nonzero entries occur in distinct rows, so
\[
  A=P\Lambda
\]
for some permutation matrix $P$ and some diagonal invertible matrix
$\Lambda$. Thus
\[
  \alpha(z)=P\Lambda z+c.
\]
\end{proof}

\begin{remark}[Comparison with linear ICA]
\label{rem:linear-ica}
In classical linear ICA
($\mapf(z)=Mz$ with mixing matrix~$M$),
identifiability states that~$M$
is recoverable up to right-multiplication
by a signed permutation matrix.
Corollary~\ref{cor:pwa-ica} extends this
to nonlinear PWA decoders and GMM priors:
the ``mixing map'' is a PWA function;
the latent distribution is a GMM;
and the identifiability takes the same
algebraic form
$\alpha(z) = P\Lambda\, z + c$.
The structural conditions
(\condref{PI}, \condref{USB}, \condref{MST})
replace the non-Gaussianity assumption
of classical ICA.
\end{remark}


\subsection{Alternative route: iVAE assumption~(iv) implies ICA ambiguity}

iVAE's assumption~(iv) with $k=2$ (Gaussian latents with modulated mean and variance) forces the diagonal \emph{precisions} of the truth components to span all diagonal directions: by the block-rank argument in the proof of Proposition~\ref{prop:ivae-mst}, the $2\ambientdim$ natural-parameter differences of~(iv) already make the variance block span $\R^{\ambientdim}$, with no independent generation of parameters. In particular $\ambientdim+1$ components have linearly independent diagonal precision differences, which we now show implies the ICA-form ambiguity. (A ``component'' in the ICA sense---a latent dimension---is distinct from a mixture component, i.e.\ a domain.)

\begin{proposition}[Affinely independent diagonal covariances or precisions $\Rightarrow$ ICA ambiguity]
\label{prop:aff-indep-diag-ica}
Let $\latentZ$ be an irredundant GMM with $K\ge 2$ components, all covariances diagonal. Suppose there exist $\ambientdim+1$ components $k_0,k_1,\ldots,k_\ambientdim$ such that \emph{either} the covariance differences $\mathrm{diag}(\Sigma_{k_j}) - \mathrm{diag}(\Sigma_{k_0})$ \emph{or} the precision differences $\mathrm{diag}(\Sigma_{k_j}^{-1}) - \mathrm{diag}(\Sigma_{k_0}^{-1})$ ($j=1,\dots,\ambientdim$) are linearly independent in $\R^\ambientdim$. Then any affine bijection $\alpha(z)=Az+c$ satisfying $\alpha(\latentZest)\deq\latentZ$ (with $\latentZest$ also having diagonal covariances) must have $A=P\Lambda$ for a permutation matrix~$P$ and diagonal invertible~$\Lambda$.
\end{proposition}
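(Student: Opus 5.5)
The plan is to turn the distributional identity $\alpha(\latentZest)\deq\latentZ$ into a congruence relation between the two families of diagonal covariances. Then I use the spanning hypothesis to show that $A$ conjugates rank-one diagonal matrices to diagonal matrices, which forces $A$ to be monomial.

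\textbf{Step 1: component matching.} Write $\latentZest\sim\sum_\ell \west{\ell}\Gauss(\muest{\ell},\Sigmaest{\ell})$. Then $\alpha(\latentZest)$ has density $\sum_\ell \west{\ell}\,\NormalDensityParam{(A\muest{\ell}+c,\,A\Sigmaest{\ell}A^\top)}$. Since $\alpha$ is a bijection, irredundancy of $\latentZest$ makes these pushed-forward parameters pairwise distinct. Equating this mixture with the density of $\latentZ$ and applying the Gaussian-independence lemma (Lemma~\ref{lem:gaussian-independence}) with positive coefficients gives two things: $K'=K$, and a permutation $\sigma$ with $A\Sigmaest{\ell}A^\top=\Sigmatrue{\sigma(\ell)}$ for all $\ell$. (If $\alpha$ comes from Theorem~\ref{thm:lid}, this $\sigma$ is already supplied.) Consequently, for every truth component $k$,
\[
  \Sigmatrue{k}=A\,D_k\,A^\top,\qquad D_k:=\Sigmaest{\sigma^{-1}(k)}\ \text{diagonal},
\]
and also $\Sigmatrue{k}^{-1}=A^{-\top}D_k^{-1}A^{-1}$ with $D_k^{-1}$ diagonal.

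\textbf{Step 2: spanning (covariance case).} By hypothesis, the $\ambientdim$ vectors $\mathrm{diag}(\Sigmatrue{k_j})-\mathrm{diag}(\Sigmatrue{k_0})$ are linearly independent. Hence the matrices $\Sigmatrue{k_j}-\Sigmatrue{k_0}$, which are diagonal, span the whole space of diagonal matrices. Each such difference equals $A(D_{k_j}-D_{k_0})A^\top$, so by linearity every diagonal $E$ can be written as $E=AFA^\top$ for some diagonal $F$. Now take $E=e_te_t^\top$. Then
\[
  F=A^{-1}e_te_t^\top A^{-\top}=(A^{-1}e_t)(A^{-1}e_t)^\top
\]
is diagonal and of rank one. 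A diagonal rank-one matrix has a single nonzero entry, so $A^{-1}e_t$ is a nonzero multiple of some coordinate vector $e_{\pi(t)}$. Since $A^{-1}$ is invertible, $\pi$ is a permutation, so $A^{-1}$ is monomial and hence $A=P\Lambda$.

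\textbf{Step 3: spanning (precision case).} This is identical, using $\Sigmatrue{k}^{-1}=A^{-\top}D_k^{-1}A^{-1}$. Every diagonal $E$ can be written as $A^{-\top}FA^{-1}$ with $F$ diagonal. Taking $E=e_te_t^\top$ gives that $F=(A^\top e_t)(A^\top e_t)^\top$ is diagonal of rank one. So $A^\top$ is monomial, and therefore so is $A$.

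The only step needing care is Step 1. There I must make sure the pushed-forward estimator components remain pairwise distinct, which holds because $\alpha$ is bijective and $\latentZest$ is irredundant, so that Gaussian independence yields a genuine permutation of covariances rather than a merged matching. Steps 2 and 3 are routine linear algebra once the span of the diagonal differences is identified with all diagonal matrices.
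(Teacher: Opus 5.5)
Your proof is correct and follows the paper's argument. You match components via Gaussian independence, use the spanning hypothesis to force each $e_te_t^\top$ to correspond under congruence to a rank-one diagonal matrix, and handle the precision case through $A^\top(\cdot)A$, exactly as the paper does; the only cosmetic difference is that in the covariance case you pull the truth basis back to show $A^{-1}$ is monomial, whereas the paper pushes the estimator differences forward (still linearly independent under the invertible congruence) to show directly that each column of $A$ has a single nonzero entry.
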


\begin{proof}
Because $\alpha$ is invertible affine and both mixtures are irredundant, standard mixture identifiability gives a permutation $\sigma\in S_K$ such that for each component index~$\ell$,
$$
A\Sigma'_{\ell}A^\top = \Sigma_{\sigma(\ell)}.
$$
Let $\ell_j:=\sigma^{-1}(k_j)$ for $j=0,1,\dots,\ambientdim$, and define diagonal matrices
$$
D'_j := \Sigma'_{\ell_j}-\Sigma'_{\ell_0},\qquad j=1,\dots,\ambientdim.
$$
Then
$$
AD'_jA^\top
= A(\Sigma'_{\ell_j}-\Sigma'_{\ell_0})A^\top
= \Sigma_{k_j}-\Sigma_{k_0},
$$
which is diagonal for each~$j$. The matrices $\Sigma_{k_j}-\Sigma_{k_0}$ are linearly independent in the $\ambientdim$-dimensional space of diagonal matrices (by assumption), hence the $D'_j$ are also linearly independent because $M\mapsto AMA^\top$ is an invertible linear map on matrices. Therefore $\{D'_j\}_{j=1}^\ambientdim$ is a basis of the diagonal subspace.

By linearity, for every diagonal matrix~$D$ we can write $D=\sum_{j=1}^\ambientdim a_j D'_j$, hence
$$
ADA^\top = \sum_{j=1}^\ambientdim a_j(AD'_jA^\top)
$$
is diagonal. In particular, for each standard basis diagonal matrix $E_t:=\mathrm{diag}(e_t)$,
$$
AE_tA^\top\ \text{is diagonal}.
$$
But $AE_tA^\top$ equals $a_t a_t^\top$, where $a_t$ is the $t$-th column of~$A$. A rank-one matrix $a_t a_t^\top$ is diagonal iff $a_t$ has at most one nonzero coordinate. Hence each column of~$A$ has at most one nonzero entry. Since $A$ is invertible, each column has exactly one nonzero entry, and these nonzero entries occur in distinct rows. Thus $A=P\Lambda$ for a permutation matrix~$P$ and an invertible diagonal~$\Lambda$.

This settles the covariance hypothesis. The precision hypothesis follows by the \emph{same argument with $\Sigma\mapsto\Sigma^{-1}$}: inverting $A\Sigma'_{\ell}A^\top=\Sigma_{\sigma(\ell)}$ gives $\Sigma'^{-1}_{\ell}=A^\top\Sigma_{\sigma(\ell)}^{-1}A$, so with $\widetilde D_j:=\Sigma_{k_j}^{-1}-\Sigma_{k_0}^{-1}$ one has $A^\top\widetilde D_j A=\Sigma'^{-1}_{\ell_j}-\Sigma'^{-1}_{\ell_0}$, again diagonal. Replacing $A(\cdot)A^\top$ by $A^\top(\cdot)A$ throughout, $A^\top E_t A=(A^\top e_t)(A^\top e_t)^\top$ is the rank-one outer product of the $t$-th \emph{row} of~$A$ with itself; its being diagonal forces each row of~$A$ to have at most one nonzero entry, so again $A=P\Lambda$.
\end{proof}

The chain of implications is: iVAE assumption~(iv) (Gaussian sources, modulated mean and variance, $K>2\ambientdim$ components) $\Rightarrow$ $\ambientdim+1$ components with affinely independent diagonal \emph{precisions} (by the block-rank argument of Proposition~\ref{prop:ivae-mst}) $\Rightarrow$ ICA ambiguity (Proposition~\ref{prop:aff-indep-diag-ica}).

\section{Worked examples}
\label{app:examples}

The five examples below, ordered by complexity, illustrate the distinct
ways in which \condref{PI}, \condref{SB}/\condref{USB}, and \condref{MST}
can fail, and the identifiability consequences that result.  Throughout,
$(\mapf,\latentZ)$ is the \emph{truth pair} and $(\mapg,\latentZest)$
the \emph{estimator pair}; we write $\latentZest\deq\latentZ$ when both
sides share the same latent law.  Example~\ref{ex:fold-sb} recapitulates
Proposition~\ref{prop:fold-example} of Appendix~\ref{app:fold} in the
unified framework; Example~\ref{ex:jst-mst} extends the one-dimensional
case of Remark~\ref{rem:jst-mst-gap} to two dimensions.

\paragraph{Example~1 (Fold-collapse; LID fails catastrophically).}
\label{ex:merge}

\begin{figure}[h]
\centering
\includegraphics[width=0.72\linewidth]{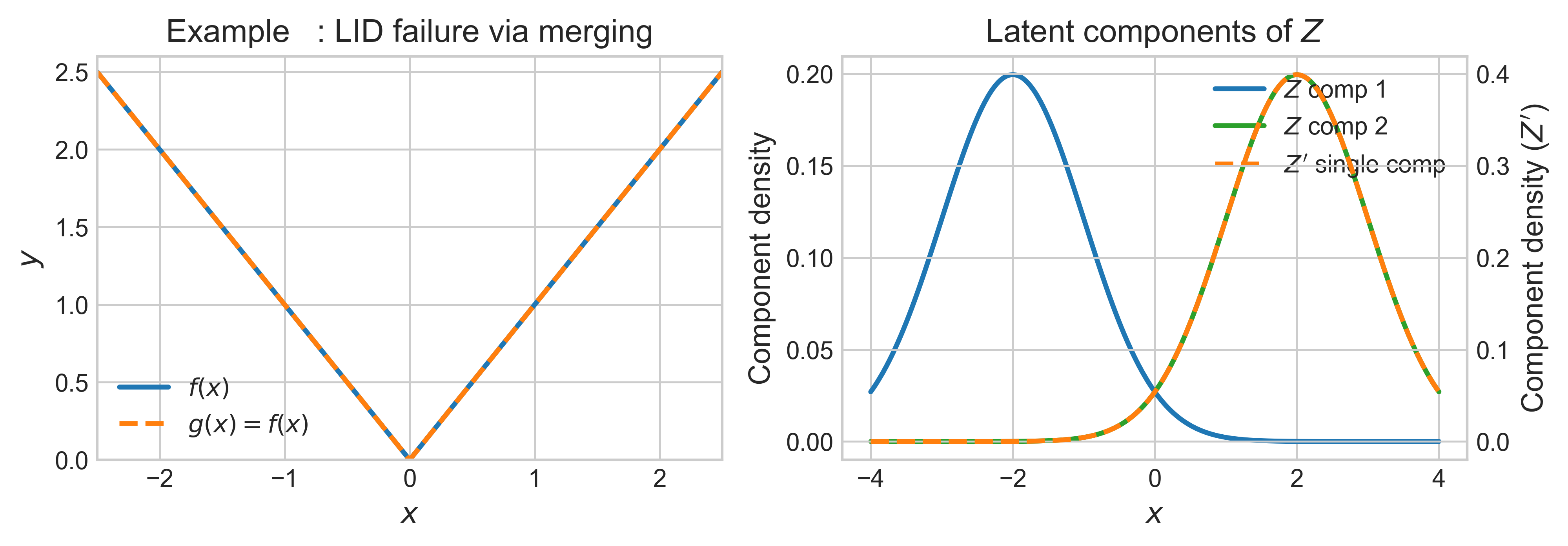}
\caption{Example~1: $\mapf=|x|$ collapses the symmetric two-component
  truth $\latentZ$ and the single Gaussian estimator $\latentZest$ to
  the same observed law.  LID fails; the two latents are not related by
  any affine map.}
\label{fig:ex1}
\end{figure}

\noindent\textbf{Setup.}
Take $\mapf=\mapg=|x|$ with branches $\branchf{1}=[0,\infty)$,
$x\mapsto x$ and $\branchf{2}=(-\infty,0)$, $x\mapsto -x$.
Truth latent: $\latentZ\deq\tfrac12\Gauss(-2,1)+\tfrac12\Gauss(2,1)$.
Estimator latent: $\latentZest\deq\Gauss(2,1)$ (single component).

\textbf{(SB)/(USB) for $\mapf$.}
Both branches map into $(0,\infty)$, so $\Ichamber{\mapf}{y}=\varnothing$
for $y<0$ and $\Ichamber{\mapf}{y}=\{1,2\}$ for $y>0$.  At the only
boundary $y=0$, \emph{both} branches enter simultaneously; no single
branch toggles alone.  Hence $\mapf$ violates \condref{SB} and \condref{USB}.

\textbf{(PI) for $(\mapf,\latentZ)$.}
With slopes $\linf{1}=1$ and $\linf{2}=-1$, the four branch-component
pushforward parameters are
\[
  \paramf{1}{1}=(-2,1),\quad
  \paramf{1}{2}=(2,1),\quad
  \paramf{2}{1}=(2,1),\quad
  \paramf{2}{2}=(-2,1).
\]
Pairs $(1,1)$ and $(2,2)$ collide at $(-2,1)$; pairs $(1,2)$ and $(2,1)$
collide at $(2,1)$.  Hence $(\mapf,\latentZ)$ violates \condref{PI}.
For $(\mapf,\latentZest)$ (single Gaussian), the two parameters
$(-2,1)$ and $(2,1)$ are distinct, so $(\mapf,\latentZest)$ satisfies
\condref{PI}.

\textbf{(MST).}
The reflection $T(x)=-x$ swaps the two components of $\latentZ$ with
equal weights, so $\Gmix{\latentZ}$ is nontrivial: $\latentZ$ fails
\condref{MST}.  Any single Gaussian $\Gauss(\mu,\sigma^2)$ is preserved
by $x\mapsto 2\mu-x$, so $\latentZest$ likewise fails \condref{MST}.

\textbf{Equal pushforwards.}
For any absolutely continuous $X$ with density $p_X$, the density of
$|X|$ satisfies $p_{|X|}(y)=p_X(y)+p_X(-y)$ for $y>0$.  Using
$\NormalDensity(y;\mu,\sigma^2)=\NormalDensity(-y;-\mu,\sigma^2)$
one computes $p_{|\latentZ|}(y)=p_{|\latentZest|}(y)$ for all $y>0$,
so $\mapf(\latentZ)\deq\mapf(\latentZest)$.

\textbf{LID fails.}
Any affine image of the Gaussian $\latentZest$ is again Gaussian, hence
cannot equal the genuine two-component mixture $\latentZ$: no affine
bijection $\alpha$ satisfies $\alpha(\latentZest)\deq\latentZ$.

\smallskip
\noindent\emph{Condition summary for Example~1:}
\begin{center}\small
\begin{tabular}{@{}lcc@{}}
\toprule
& Truth $(\mapf,\latentZ)$ & Estimator $(\mapf,\latentZest)$\\
\midrule
\condref{PI}  & \NO  & \YES\\
\condref{SB}  & \NO  & \NO \\
\condref{USB} & \NO  & \NO \\
\condref{MST} & \NO  & \NO \\
\bottomrule
\end{tabular}
\end{center}

\paragraph{Example~2 (Kivva D.2; LID holds, MID fails).}
\label{ex:kivva}

\begin{figure}[h]
\centering
\includegraphics[width=0.72\linewidth]{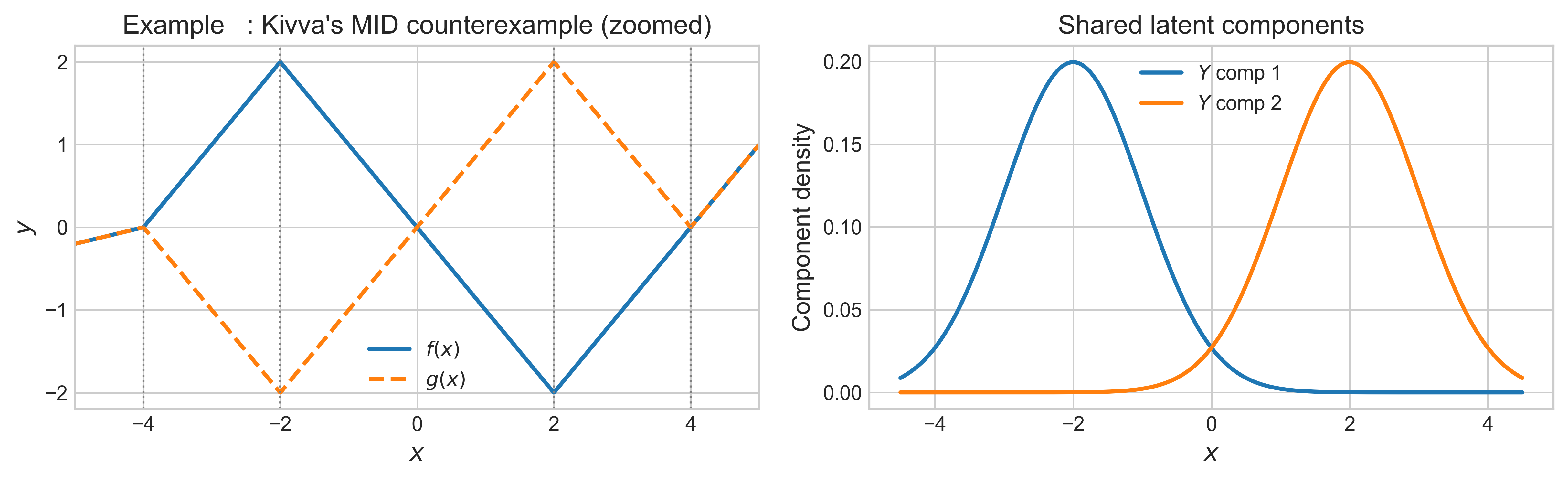}
\caption{Example~2 (Kivva D.2): two different PWA maps $\mapf$ (top)
  and $\mapg$ (bottom) push forward the same shared latent $\latentZ$
  to the same observed distribution.  LID is trivial ($\alpha=\Id$),
  but MID fails because all truth-side conditions are violated.}
\label{fig:ex2}
\end{figure}

\noindent\textbf{Setup.}
This is Example~D.2 of \citet{kivva2022identifiability}.  Let
$\latentZ\deq\tfrac12\Gauss(-2,1)+\tfrac12\Gauss(2,1)$ and
$\latentZest\deq\latentZ$ (shared law).  The PWA maps $\mapf,\mapg\colon\R\to\R$
are defined by
\[
  \mapf(x)=\begin{cases}
    x-4,     & x\ge 2,\\
    -x,      & -2\le x<2,\\
    x+4,     & -4\le x<-2,\\
    (x+4)/5, & x<-4,
  \end{cases}
  \qquad
  \mapg(x)=\begin{cases}
    x-4,    & x\ge 4,\\
    -x+4,   & 2\le x<4,\\
    x,      & -2\le x<2,\\
    -x-4,   & -4\le x<-2,\\
    (x+4)/5,& x<-4.
  \end{cases}
\]

\textbf{(PI).}
Computing branch-component parameters $(A_i\mutrue{k}+b_i,\,A_i^2)$
for the two components ($\mutrue{1}=-2$, $\mutrue{2}=2$, unit variances):
the branch $x\mapsto x+4$ on $[-4,-2)$ gives $(2,1)$ and $(6,1)$, while
the branch $x\mapsto -x$ on $[-2,2)$ gives $(2,1)$ and $(-2,1)$.  The
parameter $(2,1)$ is achieved by two branch-component pairs in $\mapf$:
$(\mapf,\latentZ)$ violates \condref{PI}.  For $(\mapg,\latentZest)$,
the parameter $(-2,1)$ is achieved by three separate branch-component
pairs: \condref{PI} fails even more severely.

\textbf{(SB)/(USB).}
The chambers of both maps partition the observation axis at
$y\in\{-2,0,2\}$.  For $\mapf$: at $y=0$ exactly one branch enters
(the branch mapping $[-4,-2)$, whose image is $(0,2)$); this is a
simple boundary.  At $y=-2$ and $y=2$, sets of two or three branches
toggle together.  Hence $\mapf$ satisfies \condref{SB} but violates
\condref{USB}.  For $\mapg$: no boundary has a unique toggling branch;
$\mapg$ violates both \condref{SB} and \condref{USB}.

\textbf{(MST).}
The same reflection $T(x)=-x$ as in Example~1 is a nontrivial mixture
symmetry of $\latentZ$: \condref{MST} fails.

\textbf{Identifiability.}
LID holds trivially with $\alpha=\Id$ (shared latent law).  MID fails:
no affine $\alpha$ satisfies $\mapf=\mapg\circ\alpha^{-1}$.

\textbf{Relation to our framework.}
Kivva concluded that \emph{global injectivity} of $\mapf$ is necessary
for MID; our conditions show instead that \condref{PI}$+$\condref{USB}$+$\condref{MST}
(latter two on the truth side) suffice---conditions this example violates entirely.
Concretely, the difference map $S:=\mapf^{-1}\circ\mapg$ (defined
branch-wise using local invertibility) satisfies $S(\latentZ)
\deq\latentZ$ but is a genuinely non-affine PWA map that violates
\condref{PI}: the same latent-side non-affine symmetry pattern as in
Example~1, and foreshadowing Example~3.

\smallskip
\noindent\emph{Condition summary for Example~2:}
\begin{center}\small
\begin{tabular}{@{}lcc@{}}
\toprule
& Truth $(\mapf,\latentZ)$ & Estimator $(\mapg,\latentZest)$\\
\midrule
\condref{PI}  & \NO  & \NO \\
\condref{SB}  & \YES & \NO \\
\condref{USB} & \NO  & \NO \\
\condref{MST} & \NO  & \NO \\
\bottomrule
\end{tabular}
\end{center}

\paragraph{Example~3 (``Example~101''; injective maps, MID fails).}
\label{ex:101}

\begin{figure}[h]
\centering
\includegraphics[width=0.72\linewidth]{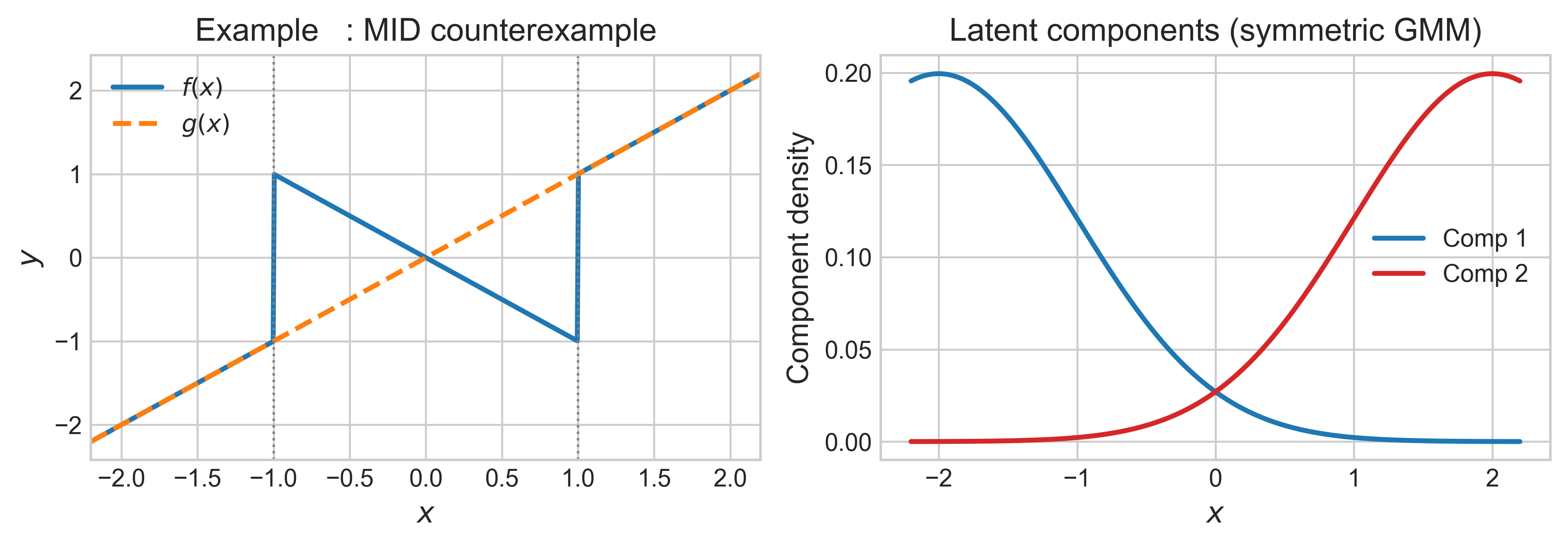}
\caption{Example~3 (``Example~101''): both $\mapf$ and $\mapg=\Id$ are
  injective, and $\mapf$ satisfies \condref{USB}, yet MID fails.
  The symmetric latent $\latentZ$ and the \condref{PI} failure of
  $(\mapf,\latentZ)$ conspire to hide the difference between the two maps.}
\label{fig:ex3}
\end{figure}

\noindent\textbf{Setup.}
Let
\[
  \mapf(x)=\begin{cases}-x,&|x|\le 1,\\ x,&|x|>1,\end{cases}
  \qquad\mapg(x)=x,
  \qquad\latentZ\deq\tfrac12\Gauss(-2,1)+\tfrac12\Gauss(2,1),
\]
with $\latentZest\deq\latentZ$ (shared law).  Under the reduced-PWA
definition (branch domains are finite unions of polyhedra), $\mapf$ has
a \emph{middle branch} $M\colon x\mapsto -x$ on $[-1,1]$ and an
\emph{outer branch} $O\colon x\mapsto x$ on $(-\infty,-1)\cup(1,\infty)$.
Both $\mapf$ and $\mapg=\Id$ are injective.

\textbf{Equal pushforwards.}
Since $\latentZ$ is symmetric under $x\mapsto -x$, the measure of $\latentZ$
on $[-1,1]$ is invariant under the reflection performed by branch $M$.
Combined with the identity action of branch $O$ on the tails,
$\mapf(\latentZ)\deq\latentZ=\mapg(\latentZest)$.

\textbf{(USB) for $\mapf$.}
Branch images: $M\to[-1,1]$, $O\to(-\infty,-1)\cup(1,\infty)$;
three chambers $(-\infty,-1)$, $(-1,1)$, $(1,\infty)$.  At $y=-1$,
$M$ enters and $O$ leaves (one branch each); at $y=1$, $M$ leaves and
$O$ enters.  Both branches enjoy a simple boundary, so $\mapf$ satisfies
\condref{USB}.

\textbf{(PI) for $(\mapf,\latentZ)$.}
Slopes: $\linf{M}=-1$, $\linf{O}=1$.  Branch-component parameters
$(A_i\mutrue{k}+b_i,\,A_i^2)$:
\[
  \paramf{M}{1}=(2,1),\quad
  \paramf{M}{2}=(-2,1),\quad
  \paramf{O}{1}=(-2,1),\quad
  \paramf{O}{2}=(2,1).
\]
Pairs $(M,1)$ and $(O,2)$ both yield $(2,1)$; pairs $(M,2)$ and $(O,1)$
both yield $(-2,1)$.  Hence $(\mapf,\latentZ)$ violates \condref{PI}.
For $(\mapg,\latentZest)=(\Id,\latentZ)$, the parameters $(-2,1)$ and
$(2,1)$ are distinct: \condref{PI} holds.

\textbf{(MST) for $\latentZ$.}
Same as Examples~1--2: $T(x)=-x$ is a nontrivial mixture symmetry,
so $\latentZ$ fails \condref{MST}.

\textbf{MID fails.}
$\mapf\ne\mapg$ yet $\mapf(\latentZ)\deq\mapg(\latentZest)$:
no affine $\alpha$ satisfies $\mapf=\mapg\circ\alpha^{-1}$.  This is a
genuinely injective MID counterexample---injectivity of $\mapf$ does not
rescue identifiability when \condref{PI} and \condref{MST} both fail.

\smallskip
\noindent\emph{Condition summary for Example~3:}
\begin{center}\small
\begin{tabular}{@{}lcc@{}}
\toprule
& Truth $(\mapf,\latentZ)$ & Estimator $(\mapg,\latentZest)$\\
\midrule
\condref{PI}  & \NO  & \YES\\
\condref{SB}  & \YES & \YES\\
\condref{USB} & \YES & \YES\\
\condref{MST} & \NO  & \NO \\
\bottomrule
\end{tabular}
\end{center}

\paragraph{Example~4 (Single-flip fold; \condref{PI} and \condref{MST}
  hold, \condref{SB} fails, LID fails).}
\label{ex:fold-sb}

\begin{figure}[h]
\centering
\includegraphics[width=0.72\linewidth]{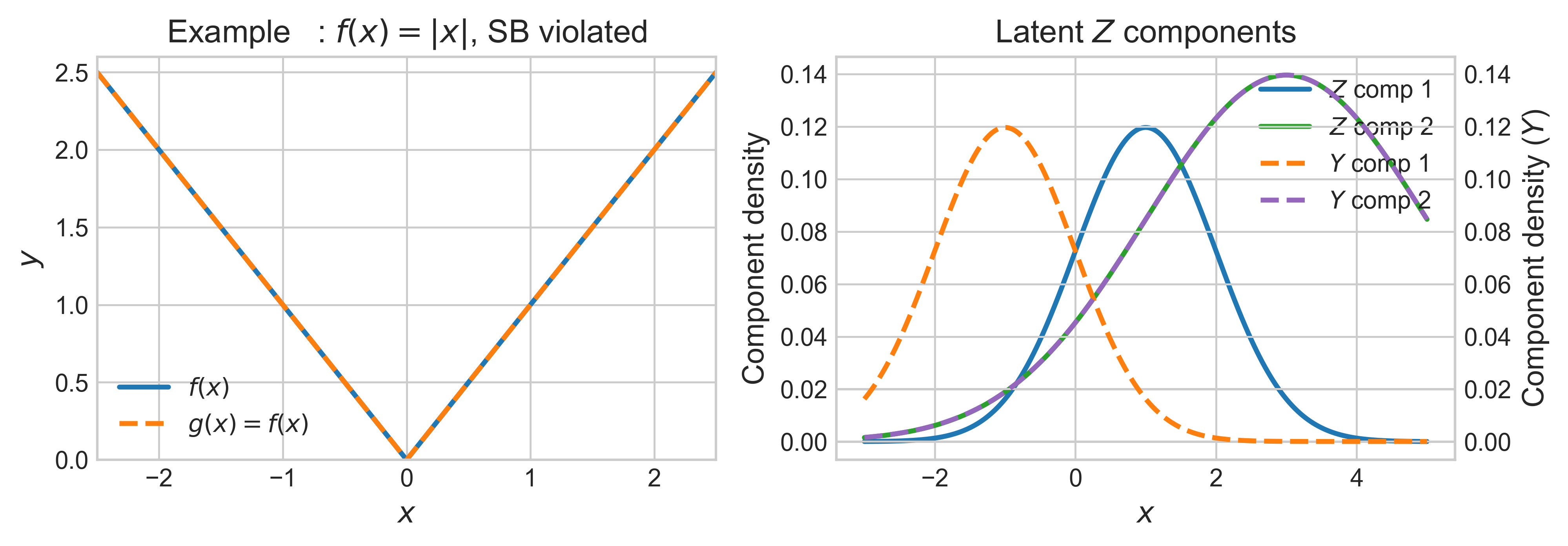}
\caption{Example~4: the fold $\mapf=|x|$ maps the truth $Z$ and the
  one-component-reflected $Y$ to the same observed law.  \condref{PI}
  and \condref{MST} hold for both latents, but $\mapf$ fails \condref{SB}
  and LID fails---showing \condref{SB} cannot be omitted even when the
  other conditions are in force.}
\label{fig:ex4}
\end{figure}

\noindent\textbf{Setup.}
Let $\mapf(x)=|x|$ (same \condref{SB}/\condref{USB} analysis as
Example~1), and define
\[
  Z\deq\tfrac12\Gauss(1,1)+\tfrac12\Gauss(3,4),
  \qquad
  Y\deq\tfrac12\Gauss(-1,1)+\tfrac12\Gauss(3,4).
\]
This is the setting of Proposition~\ref{prop:fold-example}
(Appendix~\ref{app:fold}); we recall the key points here.

\textbf{Results (see Appendix~\ref{app:fold} for proofs).}
\begin{enumerate}[leftmargin=*]
\item $\mapf(Z)\deq\mapf(Y)$:
  the fold identifies $\Gauss(1,1)$ with $\Gauss(-1,1)$ in the
  pushforward, while the shared $\Gauss(3,4)$ component is unchanged.
\item Both $Z$ and $Y$ satisfy \condref{MST}: the distinct variances
  ($1$ vs.\ $4$) forbid component-swapping, and no single reflection
  fixes both components individually.
\item Both pairs $(\mapf,Z)$ and $(\mapf,Y)$ satisfy \condref{PI}:
  the four pushforward parameters $\Gauss(\pm 1,1)$, $\Gauss(\pm 3,4)$
  are pairwise distinct.
\item No affine bijection $\alpha$ satisfies $\alpha(Y)\deq Z$:
  distinct variances ($1$ vs.\ $4$) forbid component-swapping,
  and neither $\alpha(x)=x+2$ nor $\alpha(x)=-x+2$ is consistent
  across both components simultaneously.
\end{enumerate}
Hence LID fails despite \condref{PI} and \condref{MST} holding on both
sides.  The obstruction is entirely on the map side: $\mapf=|x|$ has no
simple boundary (point~(1) of Appendix~\ref{app:fold}'s proof).  This
example belongs to a general \emph{reflection-fold family} described in
the remark following Proposition~\ref{prop:fold-example}.

\smallskip
\noindent\emph{Condition summary for Example~4:}
\begin{center}\small
\begin{tabular}{@{}lcc@{}}
\toprule
& Truth $(\mapf,Z)$ & Estimator $(\mapf,Y)$\\
\midrule
\condref{PI}  & \YES & \YES\\
\condref{SB}  & \NO  & \NO \\
\condref{USB} & \NO  & \NO \\
\condref{MST} & \YES & \YES\\
\bottomrule
\end{tabular}
\end{center}

\paragraph{Example~5 (\condref{JST} holds, \condref{MST} fails;
  1D and 2D).}
\label{ex:jst-mst}

\begin{figure}[h]
\centering
\includegraphics[width=0.72\linewidth]{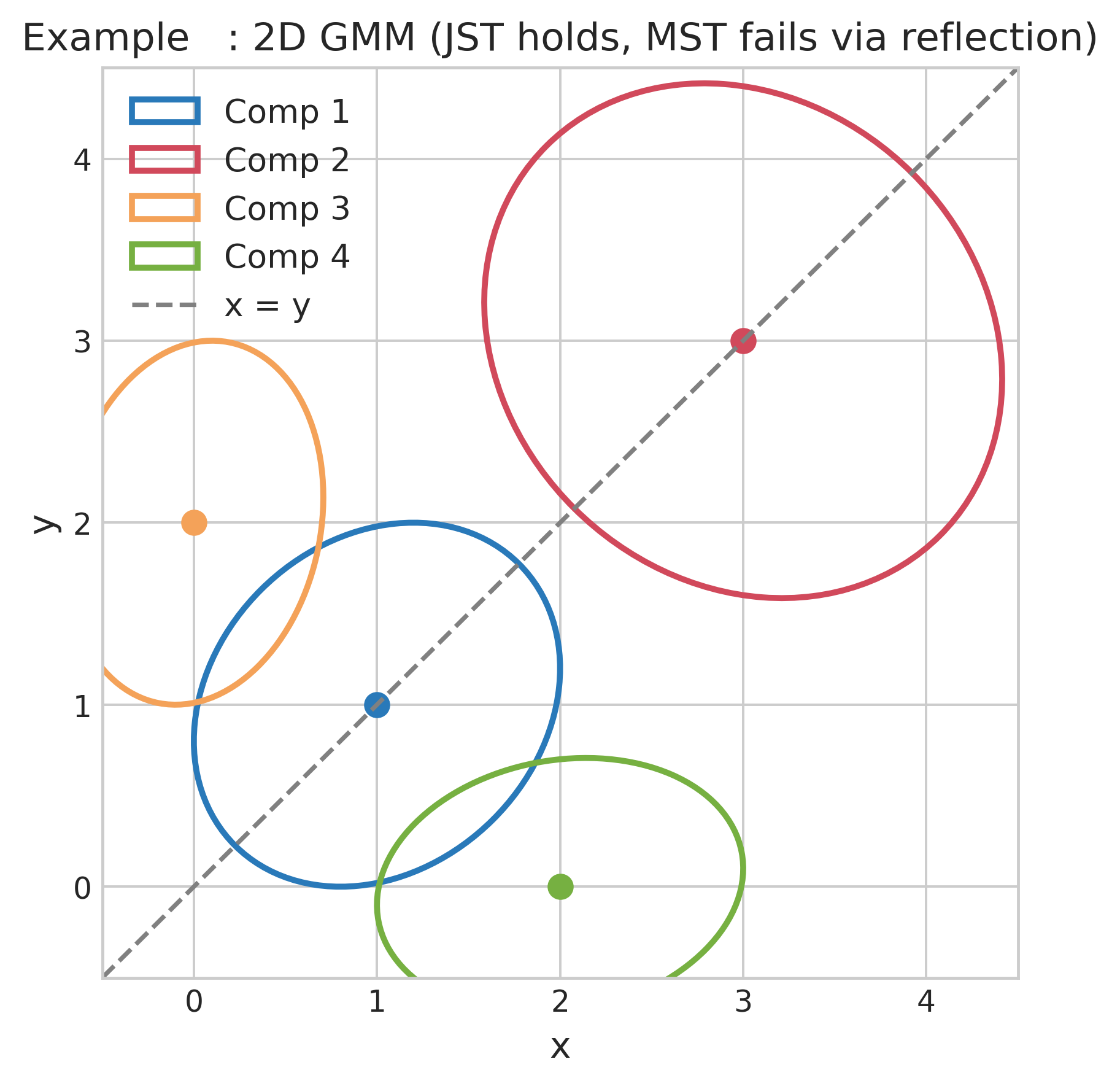}
\caption{Example~5: a 2D four-component GMM satisfying \condref{JST}
  but failing \condref{MST}.  Components~1 and~2 lie on the diagonal
  $x_1=x_2$, fixed individually by the swap $S(x_1,x_2)=(x_2,x_1)$.
  Components~3 and~4 are reflections of each other across this
  diagonal, swapped by $S$ with equal weights $w_3=w_4$.  Equal
  weights cause the \condref{JST}--\condref{MST} gap; under
  \condref{DW} the two conditions coincide
  (Proposition~\ref{prop:mst-jst}).}
\label{fig:ex5}
\end{figure}

\noindent
This example illustrates why the main-text MID results use \condref{MST}
rather than the strictly weaker \condref{JST}
(cf.\ Remark~\ref{rem:jst-mst-gap}).

\textbf{One-dimensional case (Remark~\ref{rem:jst-mst-gap}).}
Let $p=\tfrac12\Gauss(\mu_1,\sigma^2)+\tfrac12\Gauss(\mu_2,\sigma^2)$
with $\mu_1\ne\mu_2$.  No single affine map can simultaneously reflect
about both $\mu_1$ and $\mu_2$, so
$\SymComp{1}\cap\SymComp{2}=\{\Id\}$ and \condref{JST} holds.  However,
the reflection $T(x)=-x+(\mu_1+\mu_2)$ swaps the two components with
equal weights: $T_\sharp p=p$, $\Gmix{p}\ne\{\Id\}$, and \condref{MST}
fails.  A decoder branch post-composed with $T$ is undetectable from
the observed law; \condref{JST} cannot detect this switch, \condref{MST}
can.

\textbf{Two-dimensional extension.}
Work in $\R^2$.  Let $S(x_1,x_2)=(x_2,x_1)$ (the reflection across the
diagonal $x_1=x_2$, with matrix $P=\bigl(\begin{smallmatrix}0&1\\1&0
\end{smallmatrix}\bigr)$).  Define four components:
\begin{align*}
  \mu_1&=\begin{pmatrix}1\\1\end{pmatrix},\;\;
  \Sigma_1=\begin{pmatrix}1&\phantom{-}0.2\\0.2&1\end{pmatrix},
  &
  \mu_2&=\begin{pmatrix}3\\3\end{pmatrix},\;\;
  \Sigma_2=\begin{pmatrix}2&{-0.3}\\{-0.3}&2\end{pmatrix},\\[4pt]
  \mu_3&=\begin{pmatrix}0\\2\end{pmatrix},\;\;
  \Sigma_3=\begin{pmatrix}0.5&0.1\\0.1&1\end{pmatrix},
  &
  \mu_4&=P\mu_3=\begin{pmatrix}2\\0\end{pmatrix},\;\;
  \Sigma_4=P\Sigma_3 P^\top=\begin{pmatrix}1&0.1\\0.1&0.5\end{pmatrix},
\end{align*}
with weights $w_1=0.2$, $w_2=0.3$, $w_3=w_4=0.25$.

\emph{\condref{MST} fails.}
For $k=1,2$: both means satisfy $\mu_{k,1}=\mu_{k,2}$ and both
covariances satisfy $(\Sigma_k)_{11}=(\Sigma_k)_{22}$, so
$S_\sharp\Gauss(\mu_k,\Sigma_k)=\Gauss(\mu_k,\Sigma_k)$.  For $k=3,4$:
by construction $S_\sharp\Gauss(\mu_3,\Sigma_3)=\Gauss(\mu_4,\Sigma_4)$
and vice versa.  Since $w_3=w_4$, we have $S_\sharp p=p$, so
$\Gmix{p}$ contains the nontrivial element $S$ and \condref{MST} fails.

\emph{\condref{JST} holds.}
Suppose $T(x)=Ax+b$ fixes all four components individually.
From components 1 and 2 (means on the diagonal):
$A(3,3)^\top+b=(3,3)^\top$ and $A(1,1)^\top+b=(1,1)^\top$, giving
$A(1,1)^\top=(1,1)^\top$.  From components 3 and 4:
$A(2,0)^\top+b=(2,0)^\top$ and $A(0,2)^\top+b=(0,2)^\top$, giving
$A(1,-1)^\top=(1,-1)^\top$.  Since $(1,1)^\top$ and $(1,-1)^\top$ span
$\R^2$, we obtain $A=I$ and $b=0$: $\SymComp{1}\cap\cdots\cap\SymComp{4}
=\{\Id\}$, so \condref{JST} holds.

\begin{center}\small
\begin{tabular}{@{}lc@{}}
\toprule
Condition & GMM $p$\\
\midrule
\condref{JST} & \YES\\
\condref{MST} & \NO \\
\condref{DW}  & \NO\;($w_3=w_4$)\\
\bottomrule
\end{tabular}
\end{center}

\subsection{A fold example showing necessity of boundary richness}
\label{app:fold}

The next example isolates a map-side obstruction. It shows that latent \condref{MST} and \condref{PI} do not suffice for law identifiability when the decoder has a fold and no \condref{SB}-type boundary witness.

\begin{proposition}
\label{prop:fold-example}
Let
$$
f(x) = |x| = \begin{cases} -x, & x<0,\\ x, & x\ge 0, \end{cases}
$$
and define the two latent laws
$$
Z\sim\tfrac12\,\Gauss(1,1)+\tfrac12\,\Gauss(3,4), \qquad Y\sim\tfrac12\,\Gauss(-1,1)+\tfrac12\,\Gauss(3,4).
$$
Then:
\begin{enumerate}
\item $f(Z)\deq f(Y)$.
\item Both latent mixtures satisfy \condref{MST}.
\item Both pairs $(f,Z)$ and $(f,Y)$ satisfy \condref{PI}.
\item There is no affine bijection $\alpha\in\Aff(\R)$ such that $\alpha(Y)\deq Z$.
\end{enumerate}
Consequently, law identifiability fails although latent \condref{MST} and \condref{PI} both hold.
\end{proposition}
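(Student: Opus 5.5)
The plan is to verify the four claims in order; each is elementary. For (1), use the fold formula $p_{|X|}(y)=p_X(y)+p_X(-y)$ for $y>0$. For $Z$ this gives
\[
\tfrac12\bigl[\NormalDensity(y;1,1)+\NormalDensity(y;-1,1)\bigr]+\tfrac12\bigl[\NormalDensity(y;3,4)+\NormalDensity(y;-3,4)\bigr],
\]
using $\NormalDensity(-y;\mu,\sigma^2)=\NormalDensity(y;-\mu,\sigma^2)$. For $Y$ the first bracket is $\NormalDensity(y;-1,1)+\NormalDensity(y;1,1)$, which is the same expression. So the densities of $f(Z)$ and $f(Y)$ coincide on $(0,\infty)$. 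Both laws put no mass on $(-\infty,0]$, so $f(Z)\deq f(Y)$.

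For (2), I would invoke Lemma~\ref{lem:cycle-det}. The variances $1$ and $4$ are distinct, so any $T\in\Gmix{p}$ induces $\sigma=\mathrm{id}$, and $T$ must then fix both components individually. In one dimension, $T(x)=ax+b$ fixes $\Gauss(\mu,s^2)$ iff $a^2=1$ and $b=(1-a)\mu$. If $a=1$ then $b=0$. If $a=-1$ then $b=2\mu_1=2\mu_2$, which contradicts the distinct means ($1\ne3$ for $Z$, $-1\ne3$ for $Y$). Hence $\Gmix{p}=\{\Id\}$ for both mixtures.

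For (3), list the pushforward parameters. The branch $x\mapsto x$ gives $(\mu_k,\sigma_k^2)$ and the branch $x\mapsto -x$ gives $(-\mu_k,\sigma_k^2)$. For $Z$ these are $(1,1),(3,4),(-1,1),(-3,4)$; for $Y$ they are $(-1,1),(3,4),(1,1),(-3,4)$. In each case the four values are pairwise distinct, so \condref{PI} holds for both pairs.

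For (4), suppose $\alpha(x)=ax+b$ satisfies $\alpha(Y)\deq Z$. By Gaussian independence (Lemma~\ref{lem:gaussian-independence}), $\alpha$ maps components to components with variances scaled by $a^2$. Lemma~\ref{lem:cycle-det}'s determinant argument applies, and the distinct variances force the component matching to preserve variances, i.e.\ $a^2=1$. The matching is then $\Gauss(-1,1)\mapsto\Gauss(1,1)$ and $\Gauss(3,4)\mapsto\Gauss(3,4)$. This requires $-a+b=1$ and $3a+b=3$. The case $a=1$ gives $b=2$ and then $3+2\ne3$, and the case $a=-1$ gives $b=0$ and then $-3\ne3$, so both fail.

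The only mildly delicate point is in (4): justifying that $a^2=1$ rather than a general scaling. Write $a^2$ times the variance set $\{1,4\}$ as equal to $\{1,4\}$ as multisets. This forces $a^2=1$, since $4a^2=1$ together with $a^2=4$ is inconsistent. The final "consequently" then follows directly from (1)--(4).
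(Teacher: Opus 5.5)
Your proposal is correct and follows essentially the same route as the paper: the fold identity for (1), the variance obstruction to swapping plus the impossibility of a reflection about two distinct means for (2), direct enumeration of the four pushforward parameters for (3), and the $a^2=1$ case split for (4). The only difference is that you cite Lemma~\ref{lem:cycle-det} where the paper computes directly. In (4) that lemma, which is stated for self-symmetries of a single mixture, does not literally apply to a map from $Y$ to $Z$, but your multiset argument $\{a^2,4a^2\}=\{1,4\}$ supplies that step on its own.
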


\begin{proof}
\textbf{Step 1: equal pushforwards.}
Because $f(x)=|x|$, $\pushfwd{f}{\Gauss(1,1)} = \pushfwd{f}{\Gauss(-1,1)}$. The second component $\Gauss(3,4)$ is the same in both mixtures, so $f(Z)\deq f(Y)$.

\textbf{Step 2: (SB) fails for $f$.}
The two branch images are both $[0,\infty)$. Hence the active set is $\Ichamber{f}{x}=\varnothing$ for $x<0$ and $\Ichamber{f}{x}=\{-,+\}$ for $x>0$. Across the boundary at $0$, both branches enter simultaneously. So there is no boundary at which exactly one branch enters or leaves, and $f$ fails \condref{SB}.

\textbf{Step 3: latent (MST).}
We check $Z$; the argument for $Y$ is identical. Let $T(x)=ax+b$ satisfy $T(Z)\deq Z$. Because the two component variances are $1$ and $4$, $T$ cannot swap the two components: swapping would require simultaneously $a^2\cdot 1=4$ and $a^2\cdot 4=1$, which is impossible. So $T$ must fix both components individually. A one-dimensional Gaussian $\Gauss(\mu,\sigma^2)$ is fixed only by $x\mapsto x$ or $x\mapsto 2\mu-x$. To fix both $\Gauss(1,1)$ and $\Gauss(3,4)$ individually, $T$ would have to be a reflection about $1$ and about $3$ at the same time, which is impossible. Hence $T=\Id$, and $Z$ satisfies \condref{MST}.

\textbf{Step 4: (PI).}
For $(f,Z)$, the four branch-component pushforward Gaussians are $\Gauss(1,1)$, $\Gauss(-1,1)$, $\Gauss(3,4)$, $\Gauss(-3,4)$, all distinct. So $(f,Z)$ satisfies \condref{PI}. For $(f,Y)$, the four are $\Gauss(-1,1)$, $\Gauss(1,1)$, $\Gauss(3,4)$, $\Gauss(-3,4)$, also all distinct.

\textbf{Step 5: no affine latent reparametrization.}
Suppose $\alpha(x)=ax+b$ satisfies $\alpha(Y)\deq Z$. Because the component variances are $1$ and $4$, $\alpha$ cannot swap the two components. So it must send $\Gauss(-1,1)\mapsto\Gauss(1,1)$ and $\Gauss(3,4)\mapsto\Gauss(3,4)$. Variance preservation forces $a^2=1$.

If $a=1$: the first condition gives $b=2$, while the second gives $b=0$. Contradiction.

If $a=-1$: the first condition gives $b=0$, while the second gives $b=6$. Contradiction.

So no such affine $\alpha$ exists.
\end{proof}

\begin{remark}
This example is the one-dimensional member of a natural reflection-fold family. Let $R$ be reflection across a hyperplane $H$, and define a two-branch map by $f(x)=A_+x+b$ on $H^+$ and $f(x)=A_+Rx+b$ on $H^-$. Then $f\circ R=f$. Replacing any latent Gaussian component $G$ by its reflected image $R_\sharp G$ leaves its pushforward through $f$ unchanged. For generic latent Gaussian parameters, this fold symmetry need not force latent \condref{MST} or \condref{PI} to fail; the obstruction is genuinely on the map side.
\end{remark}

\section{Simple/universal boundaries vs.\ weak injectivity}
\label{app:sb-usb-winj}

This appendix clarifies how our boundary conditions
\condref{SB}/\condref{USB} relate to global injectivity and to
Kivva's \emph{weak injectivity}
\citep{kivva2022identifiability}. All notation follows the main text.

\subsection{Weak injectivity and 1-active chambers}
\label{app:winj-def}

\begin{definition}[Weak injectivity]
\label{def:winj}
A PWA map $\mapgeneric\colon\R^\ambientdim\to\R^\ambientdim$ is
\emph{weakly injective} if there exists a nonempty open set
$U\subset\R^\ambientdim$ on which every $y\in U$ has a unique
preimage.
\end{definition}

\begin{proposition}
\label{prop:winj-1active}
Let $\mapgeneric$ be a reduced PWA map with invertible branch rules.
Then $\mapgeneric$ is weakly injective if and only if some chamber
$\fineC$ satisfies $|\Ichamber{\mapgeneric}{\fineC}|=1$.
\end{proposition}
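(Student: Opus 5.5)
The plan is to translate unique-preimage conditions into active-set cardinalities. The bridge is that for $y$ off the null set $\mathcal B=\bigcup_i\partial U_i$, with $U_i=\mapgeneric_i(\branchgeneric_i)$, the number of preimages equals $|\Ichamber{\mapgeneric}{y}|$. I first show this away from a second null set of branch-domain boundaries, then handle each direction of the equivalence. Throughout I use the measure-zero boundary convention from Definition~\ref{def:reduced-pwa} and the structure established in the proof of Lemma~\ref{lem:single-chamber-affine}.

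\emph{Step 1: preimage count equals active-set size.} Let $N:=\mathcal B\cup\bigcup_i\mapgeneric_i(\partial\branchgeneric_i)$. This is a finite union of images of lower-dimensional polyhedral pieces under affine bijections, so it is a closed null set. Take $y\notin N$. A preimage $z$ of $y$ must lie in some $\branchgeneric_i$ with $\mapgeneric_i(z)=y$. Then $y\in U_i$, so $i$ is active at $y$. Conversely, each active $i$ gives exactly one preimage $\mapgeneric_i^{-1}(y)$, because $\mapgeneric_i$ is a bijection. Distinct active branches give distinct preimages, since the points $\mapgeneric_i^{-1}(y)$ lie in the interiors of the pairwise essentially disjoint domains $\branchgeneric_i$. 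Hence $\#\mapgeneric^{-1}(y)=|\Ichamber{\mapgeneric}{y}|$ for all $y\notin N$.

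\emph{Step 2 ($\Leftarrow$).} Suppose a chamber $\fineC$ has $|\Ichamber{\mapgeneric}{\fineC}|=1$. Set $U:=\fineC\setminus N$. This set is open because $N$ is closed, and nonempty because $\fineC$ is a nonempty open set and $N$ is null. By Step 1, every $y\in U$ has exactly one preimage, so $\mapgeneric$ is weakly injective.

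\emph{Step 3 ($\Rightarrow$).} Suppose $U$ is open and nonempty and every point of $U$ has a unique preimage. The set $U\setminus N$ is nonempty and open. By the argument in the proof of Lemma~\ref{lem:single-chamber-affine}, every point of $U\setminus\mathcal B$ lies in some chamber $\fineC$. Pick such a point $y$. By Step 1, $|\Ichamber{\mapgeneric}{y}|=1$. The active set is constant on $\fineC$, so $|\Ichamber{\mapgeneric}{\fineC}|=1$.

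The main obstacle is the bookkeeping on boundary points in Step 1. Points on the boundaries of the branch domains may lie in two closed domains at once, or may be assigned by convention to an inactive branch. Either way they can inflate or deflate the preimage count. Removing the null set $N$ and using openness of $U$ to stay off it handles this. If instead the convention assigns each boundary point to exactly one branch, boundary points still contribute at most one extra preimage. Those points lie in $N$, which we have already excluded, so the argument is unaffected.
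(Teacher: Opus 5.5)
Your proof is correct and follows the paper's route: in both directions you identify unique preimages with $1$-active points, and removing the null set $N$ just makes explicit the boundary bookkeeping that the paper's two-line argument skips. (A small remark: $N=\mathcal B$, because each $\mapgeneric_i$ is a homeomorphism and so $\mapgeneric_i(\partial\branchgeneric_i)=\partial U_i$; moreover chambers never meet $\mathcal B$, so in the $\Leftarrow$ direction you could take $U=\fineC$ directly, as the paper does.)
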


\begin{proof}
($\Rightarrow$)
Let $U$ be the set witnessing weak injectivity.  For $y\in U$, unique
preimage forces $|\Ichamber{\mapgeneric}{y}|=1$.  Since active sets
are constant on each chamber, the chamber containing any $y\in U$ is
1-active.

($\Leftarrow$)
If $\Ichamber{\mapgeneric}{\fineC}=\{i\}$, every $y\in\fineC$ lies
in $\mapgeneric(\branchgeneric_i)$ and in no other branch image.
Invertibility of $\mapgeneric_i$ gives a unique preimage on
$U:=\fineC$.
\end{proof}

\subsection{Global injectivity implies both USB and weak injectivity}
\label{app:inj-implies-both}

Global injectivity forces every nonempty chamber to be 1-active,
simultaneously yielding weak injectivity (trivially) and \condref{USB}
(Proposition~\ref{prop:inj-usb}).  Global injectivity also implies
weak injectivity, but the converse of both implications fails, as the
remainder of this appendix shows.

\subsection{Weak injectivity does not imply (SB)}
\label{app:winj-not-sb}

We first identify the structural obstruction and then give a concrete
example.

\begin{remark}[Co-active neighbouring branches]
\label{rem:coactive}
In a continuous 1-D PWA map, two branches sharing a domain breakpoint
are forced to share the corresponding image-space boundary (by
continuity $f(x_0^-)=f(x_0^+)$).  If these two branches \emph{both}
cover an open interval on the same side of the boundary in
observation space, they enter or leave together.  A continuous map in
which \textit{every} pair of domain-neighbouring branches is co-active across
\textit{every} breakpoint therefore violates \condref{SB}---no boundary has a
unique toggling branch.  Any such map can be made weakly injective by
attaching a tail branch with a single-branch 1-active chamber
(e.g.\ a branch whose image extends to~$\pm\infty$).  The result is
a map that is weakly injective yet violates~\condref{SB}.
\end{remark}

\begin{figure}[t]
\centering
\includegraphics[width=0.85\linewidth]{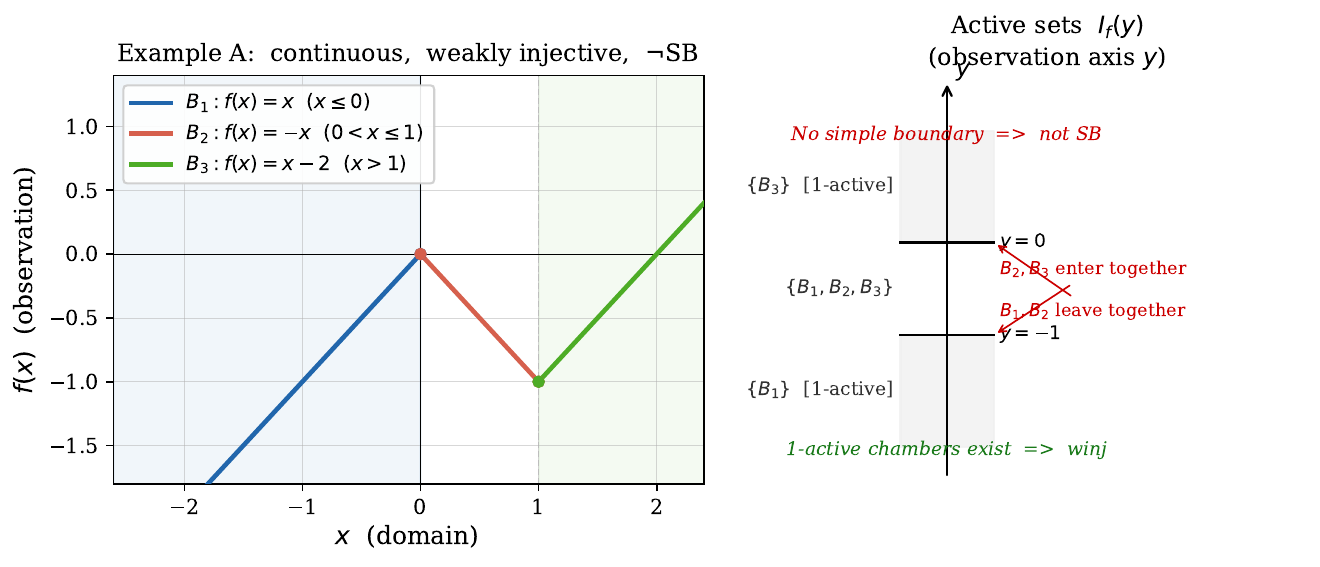}
\caption{Example~A\@: a continuous, weakly injective PWA map that
  violates \condref{SB}.  \emph{Left:} graph of~$\mapf$ with three
  branches.  The shaded regions indicate 1-active domains (winj
  witnesses).  \emph{Right:} active-set diagram showing that at both
  boundaries, two branches toggle simultaneously.}
\label{fig:exA}
\end{figure}

\paragraph{Example~A (continuous, weakly injective, $\neg$SB).}
\label{ex:winj-no-sb}

Define $\mapf\colon\R\to\R$ by three branches on $(-\infty,0]$,
$(0,1]$, $(1,\infty)$:
\[
  \mapf(x)=\begin{cases}
    x,    & x\le 0,\\
    -x,   & 0<x\le 1,\\
    x-2,  & x>1.
  \end{cases}
\]
This map is continuous: $\mapf(0^-)=0=\mapf(0^+)$ and
$\mapf(1^-)=-1=\mapf(1^+)$.

\emph{Branch images and chambers.}
$B_1\to(-\infty,0]$, $B_2\to[-1,0)$, $B_3\to(-1,\infty)$.  The
observation axis splits into three chambers:
\[
  (-\infty,-1):\;\Ichamber{\mapf}{\cdot}=\{B_1\},
  \qquad
  (-1,0):\;\{B_1,B_2,B_3\},
  \qquad
  (0,\infty):\;\{B_3\}.
\]
At $y=-1$, branches $B_2$ and $B_3$ enter together; at $y=0$, branches
$B_1$ and $B_2$ leave together.  No boundary has a unique toggling
branch, so $\mapf$ violates \condref{SB} (and hence \condref{USB}).
On the other hand, chambers $(-\infty,-1)$ and $(0,\infty)$ are
1-active, so $\mapf$ is weakly injective (Figure~\ref{fig:exA}).

\begin{remark}[Structural reading]
\label{rem:exA-structure}
Example~A is a ``weakly-injective tail attached to a co-toggling
core'': the middle portion ($B_2$, $B_3$) overlaps in image, forcing
coupled toggles, while the outer tails produce the 1-active chambers.
The same mechanism explains why $\mapg$ in
Example~2 (Appendix~\ref{app:examples}) violates \condref{SB}: every
breakpoint boundary of $\mapg$ has co-active domain neighbours.
We note that both $\mapf$ and~$\mapg$ in Example~2 are
weakly injective---the 1-active tail chambers $(-\infty,-2)$ and
$(2,\infty)$ witness this---yet $\mapg$ violates both \condref{SB}
and~\condref{USB}.
\end{remark}

\begin{remark}[Discontinuous maps: weak injectivity generically implies SB]
\label{rem:discont-winj-sb}
The co-toggling mechanism of Remark~\ref{rem:coactive} requires
continuity: domain neighbours must share image boundaries.  Without
continuity, a 1-active chamber
$\fineC$ with $\Ichamber{\mapf}{\fineC}=\{i\}$ has a boundary
across which some branches enter.  For a discontinuous map, these
entering branches have image boundaries that are independent affine
images of non-adjacent domain facets; having two or more of them
coincide at the same $(d{-}1)$-dimensional facet in observation space
is a codimension-$(r{-}1)$ coincidence in the affine parameters
$(A_i,b_i)$.  Thus, for generic discontinuous PWA maps, a 1-active
chamber automatically yields a simple boundary, and weak injectivity
implies \condref{SB}.  This can be formalised
by parametrising reduced PWA maps by their affine data: the set of
maps where two non-neighbouring branch images share a boundary facet
is a proper algebraic subvariety of the parameter space, hence
measure-zero and nowhere-dense.
\end{remark}

\subsection{(SB) and (USB) do not imply weak injectivity}
\label{app:sb-not-winj}

We give a discontinuous example and a continuous example, both
satisfying \condref{USB} while having no 1-active chamber.

\begin{figure}[t]
\centering
\includegraphics[width=0.85\linewidth]{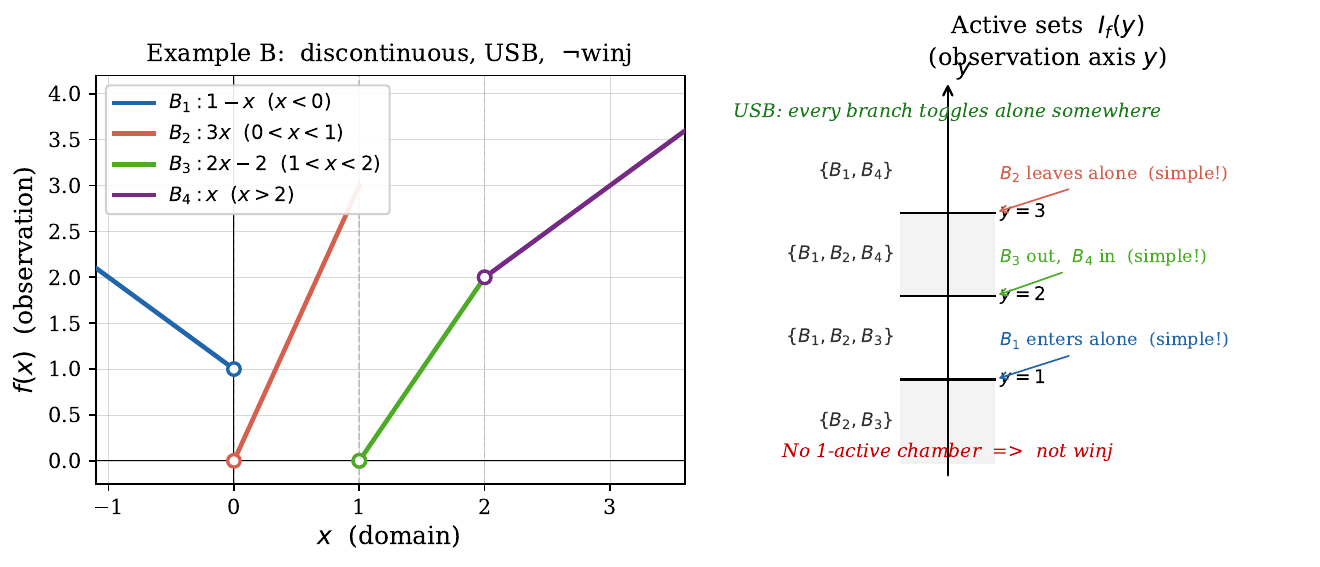}
\caption{Example~B\@: a discontinuous 4-branch PWA map satisfying
  \condref{USB} with no 1-active chamber.  \emph{Left:} graph
  of~$\mapf$; open circles mark jump discontinuities.  \emph{Right:}
  every branch toggles alone at some boundary (USB), yet all chambers
  are 2-or-more-active (not weakly injective).}
\label{fig:exB}
\end{figure}

\paragraph{Example~B (discontinuous, USB, $\neg$winj).}
\label{ex:discont-usb}

Let the branch domains be the open intervals $(-\infty,0)$, $(0,1)$,
$(1,2)$, $(2,\infty)$, with
\[
  \mapf(x)=\begin{cases}
    1-x,  & x<0,\\
    3x,   & 0<x<1,\\
    2x-2, & 1<x<2,\\
    x,    & x>2.
  \end{cases}
\]
Branch images: $B_1\to(1,\infty)$, $B_2\to(0,3)$, $B_3\to(0,2)$,
$B_4\to(2,\infty)$.  The observation axis decomposes into four
chambers:
\[
  (0,1):\;\{B_2,B_3\},
  \quad
  (1,2):\;\{B_1,B_2,B_3\},
  \quad
  (2,3):\;\{B_1,B_2,B_4\},
  \quad
  (3,\infty):\;\{B_1,B_4\}.
\]
Every chamber has at least two active branches: $\mapf$ is \emph{not}
weakly injective.  Yet \condref{USB} holds:
at $y=1$, $B_1$ uniquely enters;
at $y=2$, $B_3$ uniquely leaves and $B_4$ uniquely enters;
at $y=3$, $B_2$ uniquely leaves.
Every branch toggles alone at some boundary
(Figure~\ref{fig:exB}).

\begin{figure}[t]
\centering
\includegraphics[width=0.85\linewidth]{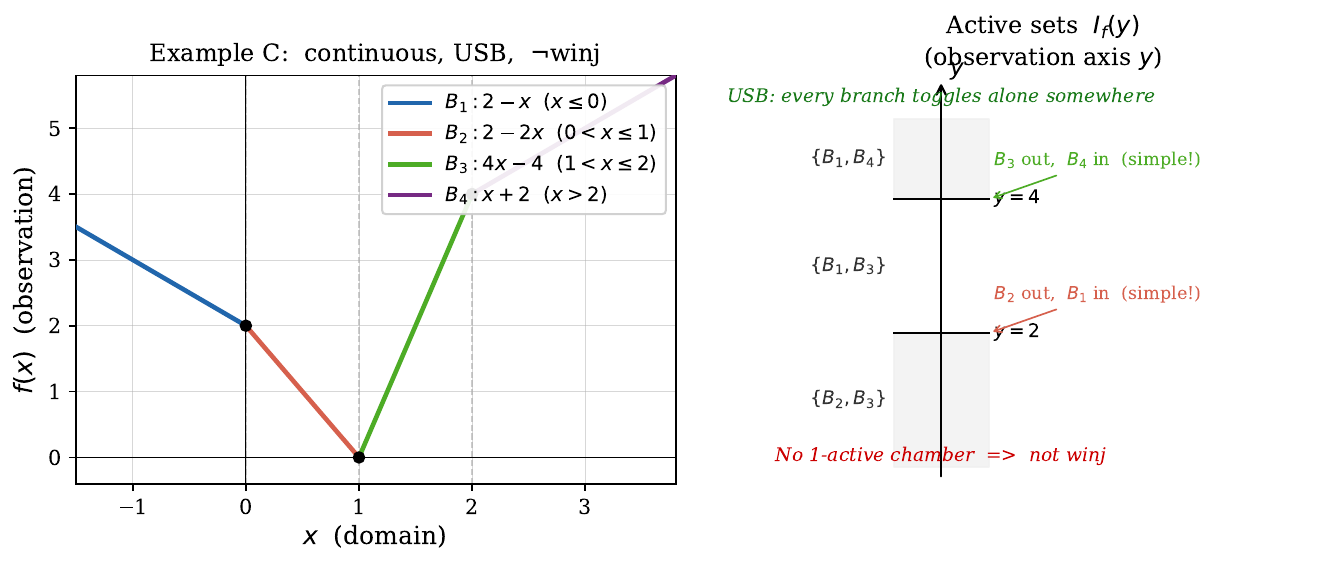}
\caption{Example~C\@: a continuous 4-branch PWA map satisfying
  \condref{USB} with no 1-active chamber.  \emph{Left:} graph
  of~$\mapf$; filled dots mark continuity joints.  \emph{Right:}
  every branch toggles alone (USB), yet all three chambers are
  exactly 2-active.}
\label{fig:exC}
\end{figure}

\paragraph{Example~C (continuous, USB, $\neg$winj).}
\label{ex:cont-usb}

Let the branch domains be $(-\infty,0]$, $(0,1]$, $(1,2]$,
$(2,\infty)$, with
\[
  \mapf(x)=\begin{cases}
    2-x,    & x\le 0,\\
    2-2x,   & 0<x\le 1,\\
    4x-4,   & 1<x\le 2,\\
    x+2,    & x>2.
  \end{cases}
\]
This map is continuous: $\mapf(0)=2$, $\mapf(1)=0$, $\mapf(2)=4$.
Branch images: $B_1\to[2,\infty)$, $B_2\to[0,2)$, $B_3\to(0,4]$,
$B_4\to(4,\infty)$.  Chambers:
\[
  (0,2):\;\{B_2,B_3\},
  \qquad
  (2,4):\;\{B_1,B_3\},
  \qquad
  (4,\infty):\;\{B_1,B_4\}.
\]
Every chamber is exactly 2-active; $\mapf$ is not weakly injective.
At $y=2$, $B_2$ uniquely leaves and $B_1$ uniquely enters; at $y=4$,
$B_3$ uniquely leaves and $B_4$ uniquely enters.  Each branch has a
simple boundary: \condref{USB} holds (Figure~\ref{fig:exC}).

\begin{remark}[Genericity of the USB$+\neg$winj pattern]
\label{rem:usb-nowinj-generic}
In both examples, the chamber structure is determined by the
combinatorial type of the branch partition and the signs of the
slopes.  Small perturbations of slopes and intercepts (preserving
continuity at breakpoints in Example~C) yield the same sign patterns
and hence the same chamber structure.  Thus ``USB~$\wedge\;\neg$winj''
holds on an open set in the space of affine parameters: these
examples are stable, not isolated.
\end{remark}

\subsection{(SB) does not imply global injectivity}
\label{app:sb-not-inj}

\condref{SB} is compatible with non-injective maps.  For a continuous
instance, the map~$\mapf$ of Example~2
(Appendix~\ref{app:examples}, originally
\citet[Example~D.2]{kivva2022identifiability}) is continuous, weakly
injective, and satisfies \condref{SB}, but is non-injective: in
observation chamber $(-2,0)$, three branches
$\{A,C,D\}$ are simultaneously active.  For discontinuous instances,
Example~B above satisfies \condref{USB} (hence \condref{SB}) while
lacking even weak injectivity.

\subsection{Summary of logical relations}
\label{app:logical-summary}

\begin{center}\small
\begin{tabular}{@{}ll@{}}
\toprule
\textbf{Implication} & \textbf{Reference} \\
\midrule
global inj.\ $\Rightarrow$ \condref{USB} $\Rightarrow$ \condref{SB}
  & Prop.~\ref{prop:inj-usb}; definition \\
global inj.\ $\Rightarrow$ winj
  & trivial \\
winj $\not\Rightarrow$ \condref{SB}
  & Example~A (continuous); $\mapg$ in Ex.~2 \\
\condref{USB} $\not\Rightarrow$ winj
  & Examples~B (discont.) and~C (continuous)\\
\condref{SB} $\not\Rightarrow$ winj
  & (follows from \condref{USB} $\not\Rightarrow$ winj)\\
\condref{SB} $\not\Rightarrow$ global inj.
  & $\mapf$ in Ex.~2; Example~B \\
\bottomrule
\end{tabular}
\end{center}

In summary, \condref{USB} is strictly weaker than global injectivity,
while \condref{SB} and weak injectivity are logically incomparable.
These conclusions hold identically in the continuous and discontinuous
regimes.  There are no further implications:
\begin{equation}
\label{eq:no-implications}
\text{global inj.}
  \;\Longrightarrow\;
  \condref{USB}
  \;\Longrightarrow\;
  \condref{SB},
\qquad
\text{global inj.}
  \;\Longrightarrow\;
  \text{winj},
\qquad
\text{no other arrows.}
\end{equation}

\subsection{Heuristic: \condref{SB}/\condref{USB} is milder than weak injectivity}
\label{app:winj-nongeneric}

The examples above prove logical independence between
\condref{SB}/\condref{USB} and weak injectivity. We now argue heuristically
that \condref{SB}/\condref{USB} is the milder requirement: it fails only on a
higher-codimension set than weak injectivity, is generic once discontinuities
are allowed, and is plausibly generic even when the decoder is constrained to
be continuous. Throughout, ``non-generic'' here means \emph{failing on a set
that---while sometimes of positive measure---shrinks as the branch count
grows}, not the measure-zero notion used for \condref{PI}/\condref{MST} in the
main text.

\paragraph{Discontinuous decoders.}
When discontinuities are allowed, \condref{SB} is generic: a single
non-co-toggling boundary suffices, and a generic PWA map has one. Its failure
set (every boundary co-toggling) is of higher codimension than that of weak
injectivity, so \condref{SB} is strictly harder to violate. Nothing beyond the
examples above is needed here.

\paragraph{Continuous decoders (open).}
When the decoder is constrained to be continuous we have no genericity proof
for \condref{SB}. In one dimension \condref{SB} can fail through an
every-boundary folding construction; as this is a condition on slope signs it
carves out a set of \emph{positive} measure, so \condref{SB} is not generic in
the strict sense. But the construction needs at least two folds, so
\condref{SB} holds with high probability and its failure probability decays
\emph{exponentially in the branch count~$M$}. The construction is
one-dimensional; whether it extends to higher dimensions is open---co-toggling
has more directions to exploit there---so we expect, but do not show, that a
comparable bound survives.

\paragraph{Weak injectivity is non-generic.}
Weak injectivity fails in both regimes, but the strength of the failure
depends on the chamber type.

\emph{Bounded chambers (exponentially unlikely).} Fix branch count~$M$ and
pretend that on each chamber~$\fineC$ the event ``branch~$i$ is active''
occurs independently with probability $p\in(0,1)$. Then
\[
  \Pr(|\Ichamber{\mapf}{\fineC}|=1)
  = M\,p\,(1-p)^{M-1}
  \;\xrightarrow{M\to\infty}\; 0,
\]
\emph{provided} $p$ stays bounded away from~$0$---equivalently, the average
branch coverage grows with~$M$, as when bounded branch images tile a fixed
region ever more densely (a covering/packing argument). Under that condition
1-active chambers, hence weak injectivity, become exponentially rare.

\emph{Unbounded chambers (non-generic, but not exponentially rare).} On
unbounded tails the coverage need not grow, and the failure is milder. In a
1-D continuous PWA map the two unbounded branches have slopes $a_L,a_R$: if
$\mathrm{sign}(a_L)=\mathrm{sign}(a_R)$ their images run to opposite infinities
(1-active tails), otherwise to the same infinity (2-active tails). Under a
symmetric random-sign model, tail weak injectivity already fails with
probability~$\approx\tfrac12$---a \emph{positive-measure} event, not a
measure-zero coincidence. This unbounded case is the most favourable one for
weak injectivity, and even here it is non-generic.

\medskip
The upshot: weak injectivity demands a \emph{global image-exclusivity}
event---one branch dominating an open region unopposed---which is fragile, and
exponentially so on bounded chambers. \condref{SB}/\condref{USB} demand only a
\emph{local combinatorial asymmetry} at a single boundary, which is robust
and, once discontinuities are allowed, generic.

\section{Injectivity implies (USB) and chamber-wise (PI)}
\label{app:inj-implications}

\begin{proposition}
\label{prop:inj-usb}
If $\mapf\colon\R^d\to\R^d$ is a globally injective reduced PWA map, then $\mapf$ satisfies \condref{USB}.
\end{proposition}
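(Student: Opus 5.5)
The plan is to show that under global injectivity every chamber has at most one active branch, and then to produce, for each branch, a boundary where it alone toggles. If $\mapf$ has a single chamber, clause~(a) of \condref{USB} holds and there is nothing to prove. So assume $\mapf$ has at least two branches.

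\textbf{Step 1: at most one active branch per chamber.} Suppose $i\neq i'$ were both active on a chamber $\fineC$. Then $\fineC\subset\mapf_i(\branchf{i})\cap\mapf_{i'}(\branchf{i'})$. This open set has positive measure, so it contains points $y=\mapf_i(z)=\mapf_{i'}(z')$ with $z\in\branchf{i}$ and $z'\in\branchf{i'}$. We may take these points off the measure-zero boundaries, so $z\neq z'$ because the branch domains have disjoint interiors. This contradicts injectivity. Hence every chamber $\fineC$ has $|\Ichamber{\mapf}{\fineC}|\le 1$.

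\textbf{Step 2: locating a boundary of each branch image.} Fix a branch $i$ and set $U_i:=\mapf_i(\branchf{i})$. As in Lemma~\ref{lem:single-chamber-affine}, $U_i$ is a finite union of full-dimensional polyhedra, so it has nonempty interior. It is also not all of $\R^{\ambientdim}$ up to a null set. Otherwise $\mapf_i^{-1}(U_i)=\branchf{i}$ would have full measure, since affine bijections preserve null sets. That contradicts the existence of a second branch domain of positive measure.

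Both $U_i$ and its complement therefore have nonempty interior, and $\partial U_i$ is a finite union of $(\ambientdim-1)$-dimensional polyhedral pieces. The union $\mathcal B$ of all branch-image boundaries has measure zero. Using connectedness of $\R^{\ambientdim}$, we can find a point $y$ on a $(\ambientdim-1)$-dimensional facet of $\partial U_i$ that lies off all lower-dimensional strata of $\mathcal B$. Near $y$ there are then exactly two chambers, $\fineC$ inside $U_i$ and $\fineCp$ outside $U_i$, and they are adjacent across this facet.

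\textbf{Step 3: $i$ is the unique leaving branch.} By Step~1, $\Ichamber{\mapf}{\fineC}=\{i\}$, because $i$ is active there and at most one branch can be. Also $i\notin\Ichamber{\mapf}{\fineCp}$. Therefore $\Ichamber{\mapf}{\fineC}\setminus\Ichamber{\mapf}{\fineCp}=\{i\}$, so $i$ is the unique leaving branch across this boundary. This holds whatever $\fineCp$ contains, whether it is empty or has a single other branch. Since $i$ was arbitrary, clause~(b) of \condref{USB} holds.

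\textbf{Main obstacle.} The delicate point is Step~2: guaranteeing a genuinely codimension-one facet point where exactly two chambers meet. I would handle it by a general-position argument. The finitely many hyperplanes supporting facets of all the $U_j$ intersect pairwise in codimension at least two, unless two of them coincide. Coincident hyperplanes still separate the same two local sides, so $\fineC$ and $\fineCp$ remain well defined. Removing the codimension-two set leaves facet points of $\partial U_i$ whose small balls are split into exactly two open pieces, and each piece lies in a single chamber.
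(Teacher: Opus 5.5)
Your proposal is correct and follows essentially the same route as the paper: injectivity forces every chamber to have at most one active branch, and then a chamber inside the image of branch $i$, together with an adjacent chamber across a facet of $\partial\mapf(\branchf{i})$, exhibits $i$ as the unique leaving branch. Your general-position argument for producing a genuine codimension-one facet point where exactly two chambers meet makes explicit a step that the paper asserts without proof.
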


\begin{proof}
Under global injectivity, for any $x$ in the interior of $\mathrm{Im}(\mapf)$, there is exactly one $z$ with $\mapf(z)=x$. Hence (ignoring measure-zero boundary bookkeeping) every nonempty chamber has exactly one active branch: $|\Ichamber{\mapf}{\fineC}|\in\{0,1\}$.

Given any branch $i$, pick any chamber $\fineC\subset\mapf(\branchf{i})^\circ$. If $\mapf(\branchf{i})^\circ = \R^d$ (the single-chamber case), clause~(a) of \condref{USB} applies. Otherwise, $\partial\mapf(\branchf{i})\neq\varnothing$, so there is an adjacent chamber $\fineC'$ across a facet of $\partial\mapf(\branchf{i})$. Since chambers have constant active set and $\fineC'$ cannot have active set $\{i\}$ (it is across a boundary of branch~$i$'s image), the active set of~$\fineC'$ is either $\varnothing$ or $\{i'\}$ with $i'\neq i$. In both cases, $i$ is the unique leaving branch across $(\fineC,\fineC')$. So \condref{USB} holds.

\textbf{Caveat.} A reduced branch domain $\branchf{i}$ may be a union of disconnected polyhedra, hence $\mapf(\branchf{i})^\circ$ can have multiple connected components (multiple chambers) with the same singleton active set $\{i\}$. This does not break the argument.
\end{proof}

\begin{proposition}
\label{prop:inj-chamber-pi}
If $\mapf$ is globally injective, then $(\mapf,\latentZ)$ satisfies chamber-wise \condref{PI} for any irredundant GMM $\latentZ$.
\end{proposition}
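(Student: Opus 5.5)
The statement to prove is Proposition~\ref{prop:inj-chamber-pi}: if $\mapf$ is globally injective, then chamber-wise \condref{PI} holds for every irredundant GMM $\latentZ$. The plan has two steps. First, global injectivity forces every chamber to have at most one active branch. Second, on a single active branch, injectivity of the parameter map reduces to irredundancy of the GMM together with invertibility of the branch rule. This is exactly the argument already used in Proposition~\ref{prop:global-pi-counterexample}, now stated in general.

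\textbf{Step 1 (active sets are singletons or empty).} This mirrors the first paragraph of the proof of Proposition~\ref{prop:inj-usb}. Fix a chamber~$\fineC$ and suppose two distinct branches $i\neq i'$ both lie in $\Ichamber{\mapf}{\fineC}$. Then $\fineC\subset \mapf(\branchf{i})\cap\mapf(\branchf{i'})$, which is a nonempty open set. Each image $\mapf(\branchf{i})$ is a finite union of full-dimensional polyhedra, and its boundary is a null set. So we can pick a point $y\in\fineC$ that lies in the interior of both images. Such a $y$ has preimages $\mapf_i^{-1}(y)\in\branchf{i}$ and $\mapf_{i'}^{-1}(y)\in\branchf{i'}$. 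Because the branch domains have disjoint interiors, these two preimages are distinct points, up to a measure-zero boundary set that we avoid by shrinking to interior points. This contradicts global injectivity. Hence $|\Ichamber{\mapf}{\fineC}|\le 1$.

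\textbf{Step 2 (injectivity on a single branch).} If $\Ichamber{\mapf}{\fineC}=\varnothing$, the condition is vacuous. Otherwise the active set is $\{i\}$, and we must show that $k\mapsto\paramf{i}{k}$ is injective on $[K]$. Suppose $\paramf{i}{k}=\paramf{i}{k'}$. Then $\linf{i}\mutrue{k}+\biasf{i}=\linf{i}\mutrue{k'}+\biasf{i}$ and $\linf{i}\Sigmatrue{k}\linf{i}^\top=\linf{i}\Sigmatrue{k'}\linf{i}^\top$. Since $\linf{i}$ is invertible, this gives $(\mutrue{k},\Sigmatrue{k})=(\mutrue{k'},\Sigmatrue{k'})$. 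Irredundancy of the GMM (Definition~\ref{def:gmm}) then forces $k=k'$. The restriction of the parameter map to $\Ichamber{\mapf}{\fineC}\times[K]$ is therefore injective on every chamber, which is chamber-wise \condref{PI}.

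\textbf{Main obstacle.} The only delicate point is the measure-zero bookkeeping in Step~1. Injectivity of a PWA map only makes sense modulo the boundary convention, so we must make sure the point $y$ can be chosen with both preimages in the interiors of their branch domains. I would handle this by noting that the set of such $y$ in $\fineC$ has full measure: it excludes only the affine images of the finitely many boundary facets. Therefore such a $y$ exists.
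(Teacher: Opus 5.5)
Your proof is correct and follows the paper's argument. The paper also restricts to chambers with singleton active set $\{i\}$ and concludes from invertibility of $\linf{i}$ plus irredundancy; it takes the ``at most one active branch'' fact for granted (it appears in the proof of Proposition~\ref{prop:inj-usb}), whereas your Step~1 proves it. In that step the full-measure bookkeeping is unnecessary: $\fineC$ is open and contained in both branch images, so every $y\in\fineC$ already has both preimages in the interiors of their respective branch domains.
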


\begin{proof}
On any chamber $\fineC$ with $\Ichamber{\mapf}{\fineC}=\{i\}$, the active pairs are $\{i\}\times[K]$. Since $\mapf_i$ is an invertible affine map and the GMM is irredundant, the map $k\mapsto\paramf{i}{k}$ is injective. Hence chamber-wise \condref{PI} holds automatically.
\end{proof}

As shown in Appendix~\ref{app:global-pi}, injectivity does \emph{not} imply global \condref{PI}.

\section{Observed domain labels and iVAE assumption~(iv) imply (MST)}
\label{app:ivae-mst}

We show that, when the PWA+GMM model is viewed as a multi-domain model with \emph{observed} domain labels, iVAE's sufficient-variability condition (assumption~(iv) of \citet{khemakhem2020variational}) implies our \condref{MST}---or, once labels are observed, its labelled form \condref{JST}.

In iVAE, the latent distribution is a conditionally factorial exponential family with $n=\ambientdim$ latent dimensions and $k$ sufficient statistics per dimension. For Gaussian latents with modulated mean and variance, $k=2$, so the natural parameter vector $\lambda(u)\in\R^{2\ambientdim}$ has $nk=2\ambientdim$ entries. Assumption~(iv) requires $nk+1=2\ambientdim+1$ distinct domain values $u_0,\ldots,u_{2\ambientdim}$ such that the $2\ambientdim\times 2\ambientdim$ matrix $L=(\lambda(u_1)-\lambda(u_0),\ldots,\lambda(u_{2\ambientdim})-\lambda(u_0))$ is invertible. In our GMM framework, each domain value corresponds to a mixture component, so assumption~(iv) requires $K>2\ambientdim$ components. Each component is $\Gauss(\mu(u),\Sigma(u))$ with $\Sigma(u)$ diagonal, and for the sufficient statistic $T(z_i)=(z_i,z_i^2)$ the natural parameter splits as $\lambda(u)=(\eta(u),\gamma(u))\in\R^{2\ambientdim}$ with $\eta(u)=\Sigma(u)^{-1}\mu(u)$ and $\gamma(u)=-\tfrac12\,\mathrm{diag}(\Sigma(u)^{-1})$.

Observed domain labels turn the mixture into a \emph{labelled GMM} in practice: any symmetry $T$ with $T_\sharp p = p$ must preserve each label---because $T$ maps the data from labelled component $k$ to data that must still carry label $k$---so the permutation $\sigma$ induced by $T$ must be the identity. Consequently, admissible symmetries are precisely those that fix every component individually, i.e., $T\in\bigcap_k\SymComp{k}$. This reduces \condref{MST} to \condref{JST}.

\begin{proposition}
\label{prop:ivae-mst}
Suppose the GMM has $K > 2\ambientdim$ components with diagonal covariances, the domain labels are observed, and iVAE assumption~(iv) holds. Then the GMM satisfies \condref{MST}. \emph{No independent generation of the mean and covariance parameters is required.}
\end{proposition}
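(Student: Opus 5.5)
The plan is to use the label-preservation reduction from the paragraph just before the statement: an affine $T(z)=Az+b$ with $T_\sharp p=p$ must fix every labelled component. So $T\in\bigcap_k\SymComp{k}$, and it suffices to prove \condref{JST}, i.e.\ that $T=\Id$. Fixing component $k$ means
\[
A\Sigma_kA^\top=\Sigma_k,\qquad A\mu_k+b=\mu_k,\qquad k=0,\dots,K-1
\]
(indexing the components by the domain values $u_0,\dots,u_{2\ambientdim}$ appearing in (iv), plus any others). I will translate these identities into natural parameters. Because $T$ pushes $\Gauss(\mu_k,\Sigma_k)$ to itself, the precision $\Lambda_k:=\Sigma_k^{-1}$ satisfies $A^\top\Lambda_kA=\Lambda_k$. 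For the linear natural parameter $\eta_k=\Lambda_k\mu_k$, substituting $\mu_k=A\mu_k+b$ gives $\eta_k=\Lambda_k(A\mu_k+b)$, and then applying $A^\top$ gives $A^\top\eta_k=A^\top\Lambda_kA\mu_k+A^\top\Lambda_kb$. With $A^\top\Lambda_kA=\Lambda_k$ this becomes
\[
A^\top\eta_k=\eta_k+A^\top\Lambda_kb .
\]

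\textbf{Step 1 (block rank).} Write $L=\begin{pmatrix}L_\eta\\ L_\gamma\end{pmatrix}$ with $L_\eta,L_\gamma\in\R^{\ambientdim\times2\ambientdim}$. Since $L$ is invertible, its rows are independent, so $L_\gamma$ has rank $\ambientdim$. Hence the differences $\gamma(u_j)-\gamma(u_0)$, i.e.\ $\mathrm{diag}(\Lambda_{k_j})-\mathrm{diag}(\Lambda_{k_0})$, span $\R^\ambientdim$. Extract $\ambientdim$ of them that are linearly independent.

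\textbf{Step 2 (linear part).} Form the differences $A^\top(\Lambda_{k_j}-\Lambda_{k_0})A=\Lambda_{k_j}-\Lambda_{k_0}$. These span all diagonal matrices, so $A^\top E_tA=E_t$ for each $t$, i.e.\ $a_ta_t^\top=E_t$, where $a_t$ is the $t$-th row of $A$. Therefore $a_t=\pm e_t$ and $A$ is a diagonal sign matrix; in particular $A^\top=A$ and $A^2=I$.

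\textbf{Step 3 (translation and signs).} Next I use $L_\eta$. For each $k$, $\mu_k=A\mu_k+b$ gives $(I-A)\mu_k=b$. Let $S=\{t:A_{tt}=-1\}$. For $t\in S$ this reads $2\mu_{k,t}=b_t$, so $\mu_{k,t}=b_t/2$ for every $k$. Since $\Lambda_k$ is diagonal, $\eta_{k,t}=\Lambda_{k,tt}\,b_t/2$. The $t$-th row of $L_\eta$ is then $(b_t/2)$ times the $t$-th row of $L_\gamma$ up to the factor $-2$, which makes two rows of $L$ proportional. This contradicts invertibility unless $S=\varnothing$. So $A=I$, and then $(I-A)\mu_k=b$ gives $b=0$, so $T=\Id$.

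I expect Step 3 to be the main obstacle. Step 2 only reduces $A$ to a sign matrix, and excluding reflections requires using the mean block of (iv) together with the exact form of the natural parameters. The one-dimensional proportionality argument needs care: $\eta_{k,t}$ and $\gamma_{k,t}$ are proportional across all $k$ with the common factor $-b_t$. Note that this also covers $b_t=0$, since then the $\eta_t$ row is identically zero, which again contradicts invertibility. Finally, I would remark that nothing in this argument requires means and covariances to be generated independently, which is the emphasised clause of the statement.
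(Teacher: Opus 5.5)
Your proposal is correct and follows the paper's proof essentially step for step. The shared steps are: the reduction to \condref{JST} via observed labels, a block-rank argument showing the precision block of $L$ has rank $\ambientdim$, the rank-one outer-product argument forcing $A$ to be a diagonal sign matrix, and the observation that row $t$ of the mean block is $-b_t$ times row $\ambientdim+t$, which rules out reflections. The differences are cosmetic: you get $\mathrm{rank}(L_\gamma)=\ambientdim$ directly from row independence of the invertible $L$, and the identity $A^\top\eta_k=\eta_k+A^\top\Lambda_k b$ derived at the start is never used and can be dropped.
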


\begin{proof}
Since domain labels are observed, any affine self-symmetry fixes every
component individually, reducing \condref{MST} to \condref{JST}. It thus
suffices to show that the $2\ambientdim+1$ components singled out by~(iv)
have trivial common symmetry group,
$\bigcap_{r=0}^{2\ambientdim}\SymComp{u_r}=\{\Id\}$.

Let $T(x)=Ax+b$ fix each of these Gaussians, and write $\mu_r:=\mu(u_r)$,
$\Sigma_r:=\Sigma(u_r)$ (diagonal). Fixing a nondegenerate Gaussian is
equivalent to
\[
  A\Sigma_r A^\top=\Sigma_r,\qquad A\mu_r+b=\mu_r,
\]
so $b=(I-A)\mu_r$ and, inverting the covariance equation,
$A^\top\Sigma_r^{-1}A=\Sigma_r^{-1}$ for every~$r$.

\emph{Step 1: (iv) spans the diagonal precisions.} Split $L$ into
mean/variance blocks $L=\left(\begin{smallmatrix}X\\ Y\end{smallmatrix}\right)$
with $Y\in\R^{\ambientdim\times 2\ambientdim}$ collecting the rows
$\gamma(u_r)-\gamma(u_0)$. Every column of~$L$ lies in
$\R^{\ambientdim}\oplus\mathrm{col}(Y)$, so
$2\ambientdim=\mathrm{rank}(L)\le\ambientdim+\mathrm{rank}(Y)$, forcing
$\mathrm{rank}(Y)=\ambientdim$. Since
$\gamma(u)=-\tfrac12\,\mathrm{diag}(\Sigma(u)^{-1})$, the vectors
$\mathrm{diag}(\Sigma_r^{-1})-\mathrm{diag}(\Sigma_0^{-1})$ span
$\R^{\ambientdim}$; equivalently, $\{\Sigma_r^{-1}\}_{r=0}^{2\ambientdim}$
spans the space of diagonal $\ambientdim\times\ambientdim$ matrices.

\emph{Step 2: $A$ fixes every diagonal matrix.} The linear map $D\mapsto
A^\top D A$ fixes each $\Sigma_r^{-1}$, hence, by Step~1, every diagonal~$D$:
$A^\top D A=D$. Taking $D=E_j$ (the diagonal matrix with a single~$1$ in
entry $(j,j)$), $A^\top E_j A=(A^\top e_j)(A^\top e_j)^\top$ is the rank-one
outer product of the $j$-th row of~$A$ with itself; equating it to $E_j$
forces that row to be~$\pm e_j^\top$. Thus
$A=\mathrm{diag}(\varepsilon_1,\dots,\varepsilon_{\ambientdim})$ with
$\varepsilon_j\in\{\pm1\}$.

\emph{Step 3: (iv) rules out sign flips.} From $b=(I-A)\mu_r$, coordinate~$j$
reads $b_j=(1-\varepsilon_j)(\mu_r)_j$ for all~$r$. If $\varepsilon_j=-1$
then $(\mu_r)_j\equiv c$ is constant in~$r$, whence
$\eta(u_r)_j=(\Sigma_r^{-1})_{jj}\,(\mu_r)_j=-2c\,\gamma(u_r)_j$; subtracting
the $r=0$ relation makes row~$j$ of~$L$ equal to $-2c$ times
row~$\ambientdim+j$, contradicting $\mathrm{rank}(L)=2\ambientdim$. Hence
every $\varepsilon_j=+1$, so $A=I$ and $b=(I-A)\mu_0=0$, i.e.\ $T=\Id$.
Therefore $\bigcap_{r}\SymComp{u_r}=\{\Id\}$, giving \condref{JST} and, with
observed labels, \condref{MST}.
\end{proof}

\begin{remark}
Under iVAE assumption~(iv) with $K>2\ambientdim$ components, one can also extract $\ambientdim+1$ components with affinely independent diagonal \emph{precisions} (Proposition~\ref{prop:aff-indep-diag-ica}), yielding the ICA-form ambiguity---with no independent generation of parameters. Indeed the block-rank argument in the proof of Proposition~\ref{prop:ivae-mst} shows that the $2\ambientdim$ natural-parameter differences of~(iv) already force the variance block, i.e.\ the diagonal precisions, to span $\R^{\ambientdim}$.
\end{remark}

\section{Finite-order affine symmetries of GMMs}
\label{app:torsion}

\begin{proposition}
\label{prop:torsion}
Let $p=\sum_{k=1}^K w_k\,\Gauss(\mu_k,\Sigma_k)$ be an irredundant nondegenerate Gaussian mixture on $\R^d$. If $p$ admits a nontrivial affine self-symmetry, then it admits a nontrivial affine self-symmetry of finite order.
\end{proposition}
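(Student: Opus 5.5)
The approach is to show that the whole group $\Gmix{p}$ is compact, and then that a nontrivial compact subgroup of $\Aff(\R^{\ambientdim})$ always contains a nontrivial element of finite order.

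\textbf{Step 1: boundedness.} Every $T\in\Gmix{p}$, $T(x)=Ax+b$, induces a permutation $\sigma_T$ of the components, with $T_\sharp\Gauss_k=\Gauss_{\sigma_T(k)}$ and $w_k=w_{\sigma_T(k)}$. This follows from the Gaussian-independence lemma (Lemma~\ref{lem:gaussian-independence}) together with irredundancy. In particular $A\Sigma_1A^\top=\Sigma_{\sigma_T(1)}$, so $\|A\|^2\le \|\Sigma_1^{-1}\|\max_k\|\Sigma_k\|$, and $b=\mu_{\sigma_T(1)}-A\mu_1$ is bounded as well.

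\textbf{Step 2: closedness.} The conditions $A\Sigma_kA^\top=\Sigma_{\sigma(k)}$ and $A\mu_k+b=\mu_{\sigma(k)}$, for a fixed weight-preserving $\sigma$, are closed conditions. $\Gmix{p}$ is the finite union over $\sigma$ of these sets, so it is closed. Hence $\Gmix{p}$ is a compact group.

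\textbf{Step 3: reduce to a subgroup of $O(d)$.} Averaging, e.g.\ the barycenter $c:=\frac1K\sum_k\mu_k$, is fixed by every $T$, since $T$ permutes the means affinely. Likewise $M:=\sum_k\Sigma_k$ satisfies $AMA^\top=M$. Conjugating by $x\mapsto M^{-1/2}(x-c)$ identifies $\Gmix{p}$ with a closed subgroup $G\subset O(\ambientdim)$. The identification is a group isomorphism, so it preserves order.

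\textbf{Step 4: extract torsion.} Suppose $G\neq\{I\}$ and pick $O\in G$ with $O\neq I$. If $O$ has finite order, we are done. Otherwise, diagonalize $O$ in real canonical form: it consists of rotation blocks with angles $\theta_1,\dots,\theta_r$, together with $\pm1$ eigenvalues. The closure of $\{O^n\}$ lies in $G$ because $G$ is closed. This closure is a compact abelian Lie group whose identity component is a nontrivial torus $\mathbb T^s$, with $s\ge1$ because $O$ has infinite order. A nontrivial torus contains nontrivial elements of finite order, for instance the image of $(1/2,0,\dots,0)$ under an isomorphism $\R^s/\Z^s\cong\mathbb T^s$. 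Such an element lies in $G$ and is a nontrivial finite-order symmetry. Conjugating back gives the required element of $\Gmix{p}$.

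The main obstacle I expect is Step 4: making precise that the closure of the cyclic group contains a nontrivial torus when $O$ has infinite order. This is Kronecker's theorem on the angles $\theta_i/2\pi$. The alternative is to quote the structure theorem for compact abelian Lie groups, namely that such a group is a torus times a finite group. A cheaper elementary route is also available. Since $O$ has infinite order, some $\theta_i/\pi$ is irrational. Then a suitable power $O^{n}$ approximates any rotation in that block, and the trick is to find the element of order $2$, given by $\theta=\pi$ on the irrational block(s), inside the closure. I would fall back on citing the torus structure of closures of monothetic subgroups of compact Lie groups.
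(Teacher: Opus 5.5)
Your proof is correct, but it is organised differently from the paper's.

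The paper never shows that $\Gmix{p}$ is compact. It takes the given $T$ and lets $m$ be the order of the induced permutation $\sigma$, so that $T^m$ fixes every component individually. If $T^m=\Id$, then $T$ itself is the finite-order witness. Otherwise $A^m\neq I$ lies in the group $H$ of matrices $L$ with $L\Sigma_kL^\top=\Sigma_k$ and $L(\mu_k-\mu_1)=\mu_k-\mu_1$ for all $k$. This $H$ is a closed subgroup of the compact group $O(\Sigma_1)$, so it is compact for free. The paper then quotes the fact that every nontrivial compact Lie group contains a nontrivial finite-order element, and transports it back as $S(x)=L(x-\mu_1)+\mu_1$.

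You replace the power trick with three steps:
\begin{itemize}
\item a direct compactness argument for the whole group, using the bound on $\|A\|$ from $A\Sigma_1A^\top=\Sigma_{\sigma(1)}$ and closedness as a finite union over weight-preserving $\sigma$;
\item an explicit averaging step, via the barycenter and $\sum_k\Sigma_k$, that conjugates $\Gmix{p}$ into $O(\ambientdim)$;
\item a proof of the torsion extraction through the closure of a cyclic subgroup, rather than citing it.
\end{itemize}

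What your route buys is a structural fact: $\Gmix{p}$ is always compact and conjugate to a closed subgroup of $O(\ambientdim)$ fixing the barycenter. It also avoids any case split. What the paper's route buys is economy, since no norm bounds are needed. It also gives a sharper dichotomy: either $T$ already has finite order, or there is a finite-order symmetry fixing every component individually, i.e.\ a torsion witness against \condref{JST}. That second form is the one convenient for the subsequent localisation remark.

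One caution about your ``elementary fallback'' of putting $\theta=\pi$ on every irrational block. It fails when the angles are rationally dependent. Take angles $(2\psi_0,\psi_0)$ with $\psi_0/\pi$ irrational. The closure of the powers is $\{(2\psi,\psi)\}$, which contains no element with both angles equal to $\pi$. A nontrivial involution still exists there ($\psi=\pi$ gives angles $(0,\pi)$). So the torus-structure argument in your main Step~4 is the one to keep.
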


\begin{proof}
Let $T(x)=Ax+b$ be a nontrivial affine map with $T_\sharp p=p$. Since affine maps send Gaussians to Gaussians and the Gaussian components are irredundant, uniqueness of finite Gaussian-mixture decomposition implies that $T$ permutes the weighted components. Let $\sigma\in S_K$ be the induced permutation, and let $m$ be its order.

Then $T^m$ fixes every component individually. Write $T^m(x)=Cx+d$, $C=A^m$. If $T^m=\Id$, then $T$ itself has finite order, and we are done.

Assume now that $T^m\neq\Id$. Because $T^m$ fixes each component $\Gauss(\mu_k,\Sigma_k)$, we have
$$
C\Sigma_k C^\top = \Sigma_k, \qquad C\mu_k+d=\mu_k \qquad (k=1,\ldots,K).
$$
Subtracting the mean equations for $k$ and $1$ gives $C(\mu_k-\mu_1)=\mu_k-\mu_1$ for all~$k$.

Define
$$
H := \bigl\{L\in\GL(d) : L\Sigma_k L^\top = \Sigma_k,\; L(\mu_k-\mu_1)=\mu_k-\mu_1 \;\forall k\bigr\}.
$$
This is a closed subgroup of $O(\Sigma_1):=\{L\in\GL(d):L\Sigma_1 L^\top=\Sigma_1\}$, hence $H$ is a compact Lie group. Since $C\in H$ and $T^m\neq\Id$, we must have $C\neq I$, so $H$ is nontrivial.

Every nontrivial compact Lie group contains a nontrivial finite-order element. Choose $L\in H$, $L\neq I$, of finite order. Then the affine map $S(x):=L(x-\mu_1)+\mu_1$ has finite order and fixes every Gaussian component:
$$
S_\sharp\Gauss(\mu_k,\Sigma_k) = \Gauss(\mu_k,\Sigma_k) \qquad (k=1,\ldots,K).
$$
Hence $S_\sharp p = p$, and $S$ is a nontrivial finite-order affine self-symmetry of~$p$.
\end{proof}

\begin{remark}
\label{rem:localise-torsion}
This is the group-theoretic reason the \condref{MST} assumption is structurally necessary for Gaussian-mixture latent laws in broad piecewise decoder classes: once a nontrivial affine symmetry~$T$ exists, a finite-order one~$S$ exists, and $S$ can be localised into a two-piece strict PWA self-map $\{S,\Id\}$ (with appropriately chosen branch domains) that preserves the mixture law while genuinely modifying the decoder (detail and proof omitted for scope; see also Remark~\ref{rem:pi-essential}). 
The next section shows analogous
torsion-based necessity phenomena for broad classes of
exponential-family and location-scale mixtures.
\end{remark}

\section{Broad torsion and symmetry-triviality mechanisms}
\label{app:ef-symmetry}

The present paper uses only the Gaussian case. For orientation, Table~\ref{tab:torsion-beyond-gmm} records several broader mechanisms that also arise for exponential-family and location-scale mixtures. Only the first row is proved here (Proposition~\ref{prop:torsion}); the remaining rows summarise complementary omitted results and directions of ongoing work.

\begin{table}[h]
\caption{Broad torsion and symmetry-triviality mechanisms.}
\label{tab:torsion-beyond-gmm}
\centering\small
\begin{tabular}{@{}p{3.2cm} p{4.2cm} p{2.5cm} p{3.0cm}@{}}
\toprule
\textbf{Setting} & \textbf{What happens} & \textbf{Mechanism} & \textbf{Examples} \\
\midrule
Irredundant Gaussian mixtures & Nontrivial symmetry $\Rightarrow$ finite-order element & compact stabilizer & proved here \\
Nondegenerate location-scale mixtures & likewise, finite order somewhere & compact linear stabilizer & Gaussian, Laplace, Student-$t$ \\
Compact-group-preserving families & torsion already present & compact Lie group contains finite-order elements & Bingham-type \\
Countable-support families & involution already present & transposition inside nontrivial fiber & discrete EF \\
One-parameter dilation families & no nontrivial realised symmetry & $c^\ell=1$ with $c>0 \Rightarrow c=1$ & exponential, fixed-shape gamma \\
Trivial realised symmetry group & MST holds identically & no symmetry to begin with & Poisson \\
\bottomrule
\end{tabular}
\end{table}

\begin{remark}
The broad message is that, beyond the Gaussian case, two phenomena recur: (1)~\emph{torsion-guaranteed settings}, where any nontrivial symmetry automatically produces a finite-order obstruction; (2)~\emph{symmetry-trivial settings}, where irredundant mixtures have no nontrivial symmetry at all. Both support the same conclusion: \condref{MST} is not an artificial assumption, but the natural boundary between identifiable and non-identifiable regimes.
\end{remark}


\section{Alternative map identifiability theorems}
\label{app:alt-mid}

The main-text MID theorem (Lemma~\ref{lem:mid-shared} and
Theorem~\ref{thm:mid}) uses the boundary-richness condition \condref{USB}
to establish a branch bijection, and then \condref{MST} to collapse the
branch-wise ambiguities. We now present two alternative routes that
replace \condref{USB} with purely algebraic conditions.

\subsection{MID via (DW), (OD), and (JST)}
\label{app:mid-dw-od-jst}

\begin{theorem}[MID via DW/OD/JST]
\label{thm:mid-dw-od-jst}
Let $\mapf,\mapg$ be PWA maps and $\latentZ,\latentZest$ both follow
the same GMM $p$ with $K$ components.
Assume~\eqref{eq:distributional-equality}, \condref{PI} on both sides,
and on the truth side: \condref{OD} on~$(\mapf,\latentZ)$, \condref{DW}
and \condref{JST} on~$p$. Then $\mapf = \mapg$.
\end{theorem}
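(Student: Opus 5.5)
The plan is to use the global bijection $\Psi$ of Lemma~\ref{lem:global-param-matching} directly, with \condref{DW} in place of \condref{USB}. \condref{DW} should make $\Psi$ preserve components. \condref{OD} should then force every estimator branch to match a single truth branch across all components. Finally \condref{JST} should collapse the resulting branch difference to the identity. No boundary structure is needed except in the last step, which compares branch domains via the chamber-wise restriction of $\Psi$.

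\emph{Step 1 (component preservation).} Since $\latentZ$ and $\latentZest$ share the law $p$, the weight identity in Lemma~\ref{lem:global-param-matching} reads $w_\ell=w_k$ whenever $\Psi(j,\ell)=(i,k)$. Under \condref{DW} this forces $k=\ell$. Hence for every estimator branch $j$ and component $\ell$ there is a unique truth branch $i_\ell(j)$ with $\paramEst{j}{\ell}=\paramf{i_\ell(j)}{\ell}$. Because the latent component is the same $\Gauss_\ell$ on both sides, this says $\mapf_{i_\ell(j)}^{-1}\circ\mapg_j\in\SymComp{\ell}$, that is, $\mapg_j\in\orbit{\mapf}{i_\ell(j)}{\ell}$.

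\emph{Step 2 (branch consistency via \condref{OD}).} Suppose that for some $j$ there are components $\ell\neq n$ with $i_\ell(j)=m\neq m'=i_n(j)$. Then $\mapg_j\in\orbit{\mapf}{m}{\ell}\cap\orbit{\mapf}{m'}{n}$, with distinct branches and distinct components. This contradicts \condref{OD} on $(\mapf,\latentZ)$. So $i(j):=i_\ell(j)$ is independent of $\ell$. Then $S_j:=\mapf_{i(j)}^{-1}\circ\mapg_j$ lies in $\bigcap_{\ell}\SymComp{\ell}$, and \condref{JST} gives $S_j=\Id$, i.e.\ $\mapg_j=\mapf_{i(j)}$. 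We note in passing that \condref{JST} already forces $K\ge 2$, since a single Gaussian has nontrivial symmetries.

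\emph{Step 3 (bijection and domains).} Injectivity of $j\mapsto i(j)$ follows from reducedness of $\mapg$: two estimator branches with the same truth rule would have identical affine rules. Since $|\If|=|\Ig|$ (Proposition~\ref{prop:matching-consequences}), it is a bijection $\pi^{-1}$. For the domains I would reuse Step~4 of Lemma~\ref{lem:mid-shared}. On each fine chamber $\fineC$, $\Psi$ restricts to a bijection of active pairs, and $\Psi(j,\ell)=(i(j),\ell)$. Hence $j\in\Ichamber{\mapg}{\fineC}$ iff $i(j)\in\Ichamber{\mapf}{\fineC}$. Taking unions over chambers gives equal interiors of the branch images. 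Applying the common inverse affine rule and using that finite unions of full-dimensional polyhedra are closures of their interiors gives $\branchg{j}=\branchf{i(j)}$ up to null sets, so $\mapf=\mapg$ by Definition~\ref{def:pwa-equality}.

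The main obstacle is Step~2. One must check that the \condref{OD} violation produced has exactly the required form, namely distinct branches \emph{and} distinct components on the same map $\mapf$. The situation with equal truth branches but different components is harmless, and that is precisely the situation we want. The domain comparison in Step~3 also needs care: it relies on the chamber-wise restriction statement of Lemma~\ref{lem:global-param-matching} rather than on any simple-boundary argument.
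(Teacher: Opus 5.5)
Your proposal is correct and follows the paper's proof essentially step by step: \condref{DW} makes $\Psi$ component-preserving, \condref{OD} makes the matched truth branch independent of the component, and \condref{JST} collapses $\mapf_{i(j)}^{-1}\circ\mapg_j$ to the identity. The differences are cosmetic: you get injectivity of $j\mapsto i(j)$ from reducedness of $\mapg$ rather than from \condref{PI} (both work), and you spell out the domain comparison via the chamber-wise restriction of $\Psi$, which the paper compresses into ``active-set agreement gives $\mapf=\mapg$.''
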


\begin{proof}
\textbf{Step~1.} Since $\latentZ$ and~$\latentZest$ share law~$p$ with
distinct weights (\condref{DW}), the permutation aligning their
components is unique; fix matched indexing.

\textbf{Step~2.} Coefficient matching requires $\wtrue{k}=\west{\ell}$;
\condref{DW} forces $k=\ell$. Hence for each $(j,k)$ active
on~$\fineC$, there is a unique $i_k(j)$ with
$\paramEst{j}{k}=\paramf{i_k(j)}{k}$.

\textbf{Step~3.} We show $i_k(j)$ is independent of~$k$. Suppose
$k\neq\ell$ give $i'=i_k(j)\neq i''=i_\ell(j)$. Then
$\mapg_j\in\orbit{\mapf}{i'}{k}\cap\orbit{\mapf}{i''}{\ell}$, which
contradicts \condref{OD}. Hence $i_k(j)=:\pi(j)$ is constant in~$k$.
Injectivity of~$\pi$ follows from \condref{PI}; since $|\If|=|\Ig|$,
$\pi$ is a bijection.

\textbf{Step~4.} Fix $j$, set $i=\pi(j)$. Then
$S_j:=\mapf_i^{-1}\circ\mapg_j$ satisfies $S_j\in\bigcap_k\SymComp{k}$.
By \condref{JST}, $S_j=\Id$, so $\mapf_i=\mapg_j$. Active-set
agreement gives $\mapf=\mapg$.
\end{proof}

\begin{remark}[Role of each condition]
\label{rem:dw-od-jst-roles}
\begin{itemize}
\item \condref{DW} ensures component-preserving matching (Step~2),
  enabling the orbit argument. Without it, $\Psi$ can swap components
  across branches, and \condref{OD} becomes inapplicable.
\item \condref{OD} forces the branch-wise component match to be
  consistent across components (Step~3). Without it, different
  components could match to different truth branches.
\item \condref{JST} kills the residual stabiliser ambiguity (Step~4).
  Under~\condref{DW}, this is equivalent to \condref{MST}
  (Proposition~\ref{prop:mst-jst}).
\end{itemize}
\end{remark}

\subsection{Transporter-orbit disjointness}
\label{app:TOD}

For Gaussians $\Gauss_k,\Gauss_\ell$ of a GMM, define
$\Transporter{k}{\ell} := \bigl\{T\in\Aff(\R^{\ambientdim}) :
  \pushfwd{T}{\Gauss_k}=\Gauss_\ell\bigr\}$.
This is the set of all affine maps
that transport component~$k$
exactly onto component~$\ell$.
It is a coset of the stabiliser:
$\Transporter{k}{\ell}= T_{k\to\ell}\circ\SymComp{k}$ for any fixed
$T_{k\to\ell}\in\Transporter{k}{\ell}$.
Consequently,
$\dim\Transporter{k}{\ell}=D$.
When $k=\ell$,
the transporter set reduces to
the stabiliser:
$\Transporter{k}{k}=\SymComp{k}$.

The \emph{transporter-orbit} of branch~$i$ under the pair $(k,\ell)$ is
\[
  \mapgeneric_i\circ\Transporter{k}{\ell}
  = \bigl\{\mapgeneric_i\circ T : T\in\Transporter{k}{\ell}\bigr\}.
\]
This is the set of all affine maps~$S$ such that branch~$i$ followed
by~$S^{-1}$ transports $\Gauss_k$ to~$\Gauss_\ell$.

\textbf{Condition \condref{TOD} (\TODname).}
\condlabel{TOD}
A pair $(\mapgeneric,\GMMgeneric)$ satisfies \condref{TOD} if for all
distinct branches $i\neq i'$ and all component quadruples
$(k,\ell,k',\ell')$,
$\bigl(\mapgeneric_i\circ\Transporter{k}{\ell}\bigr)
  \cap\bigl(\mapgeneric_{i'}\circ\Transporter{k'}{\ell'}\bigr)=\varnothing$.

\paragraph{Relationship to other conditions.}
Setting $k=\ell$ and $k'=\ell'$
reduces
$\Transporter{k}{k}=\SymComp{k}$,
so \condref{TOD} implies \condref{OD}
(further restricted to $k\neq k'$)
and \SameCompParamInj{}
(further restricted to $k=k'$).
\condref{TOD} is thus strictly stronger
than both.

\paragraph{Genericity.}
A \condref{TOD} violation requires the difference
map $\mapgeneric_{i'}^{-1}\circ\mapgeneric_i$ to lie in
$\Transporter{k'}{\ell'}^{-1}\circ\Transporter{k}{\ell}$, a set of
dimension at most~$2D$
(two independent orthogonal degrees of
freedom, one from each covariance).
The full affine group has
dimension~$\ambientdim^2+\ambientdim$,
so the bad set has codimension at
least~$2\ambientdim$---the same
bound as for~\condref{OD}.

\subsection{MID via (TOD) and (MST)}
\label{app:mid-tod-mst}

\begin{theorem}[MID via TOD/MST]
\label{thm:mid-tod-mst}
Let $\mapf,\mapg$ be PWA maps and $\latentZ,\latentZest$ both follow
the same GMM~$p$ with $K$ components.
Assume~\eqref{eq:distributional-equality}, \condref{PI} on both sides,
and on the truth side: \condref{TOD} on~$(\mapf,\latentZ)$ and
\condref{MST} on~$p$. Then $\mapf=\mapg$.
\end{theorem}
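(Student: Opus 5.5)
The plan follows the template of Lemma~\ref{lem:mid-shared}, with \condref{TOD} taking over the role that \condref{USB} played there: it must supply a well-defined branch correspondence $j\mapsto\pi(j)$. We start from the global bijection $\Psi$ of Lemma~\ref{lem:global-param-matching}. Fix an estimator branch $j$. For each component $\ell\in[K]$ write $\Psi(j,\ell)=(i_\ell,k_\ell)$. The parameter equality $\paramEst{j}{\ell}=\paramf{i_\ell}{k_\ell}$ with the shared law $p$ says that $\mapf_{i_\ell}^{-1}\circ\mapg_j$ transports $\Gauss_\ell$ onto $\Gauss_{k_\ell}$. Equivalently,
\[
  \mapg_j\in\mapf_{i_\ell}\circ\Transporter{\ell}{k_\ell}.
\]

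The first step is to show that $i_\ell$ does not depend on $\ell$. Suppose $i_\ell\neq i_{\ell'}$ for some $\ell,\ell'$. Then $\mapg_j$ lies in
\[
  \bigl(\mapf_{i_\ell}\circ\Transporter{\ell}{k_\ell}\bigr)\cap\bigl(\mapf_{i_{\ell'}}\circ\Transporter{\ell'}{k_{\ell'}}\bigr),
\]
and this intersection is empty by \condref{TOD}, which is quantified over all quadruples. This is a contradiction, so we may set $\pi(j):=i_\ell$. Then $\Psi$ maps $\{j\}\times[K]$ into $\{\pi(j)\}\times[K]$. Since $\Psi$ is injective and both blocks have size $K$, it maps $\{j\}\times[K]$ onto $\{\pi(j)\}\times[K]$, so $\ell\mapsto k_\ell$ is a permutation $\sigma_j$ of $[K]$. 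Injectivity of $\pi$ follows from the same block argument: if $\pi(j)=\pi(j')$ with $j\neq j'$, the two disjoint $K$-blocks $\{j\}\times[K]$ and $\{j'\}\times[K]$ would both fill the single $K$-block $\{\pi(j)\}\times[K]$, contradicting injectivity of $\Psi$. Because $|\If|=|\Ig|$ (Proposition~\ref{prop:matching-consequences}), $\pi$ is a bijection.

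Next, the symmetry-collapse step proceeds exactly as in Step~2 of Lemma~\ref{lem:mid-shared}. The coefficient matching in $\Psi$ gives $w_\ell=w_{\sigma_j(\ell)}$, so $S_j:=\mapf_{\pi(j)}^{-1}\circ\mapg_j$ satisfies $\pushfwd{S_j}{\Gauss_\ell}=\Gauss_{\sigma_j(\ell)}$ with weights preserved. Hence $\pushfwd{S_j}{p}=p$ and $S_j\in\Gmix{p}$. By \condref{MST}, $S_j=\Id$, so $\mapg_j=\mapf_{\pi(j)}$ for every $j$.

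Finally, the domain comparison is Step~4 of Lemma~\ref{lem:mid-shared}, reused verbatim. On each fine chamber, the chamberwise bijection $\Psi_\fineC$ maps active pairs to active pairs and respects branch blocks. Therefore $j\in\Ichamber{\mapg}{\fineC}$ if and only if $\pi(j)\in\Ichamber{\mapf}{\fineC}$. Taking unions over chambers, the branch images agree up to boundaries. Pulling back through the common invertible affine rule shows that the branch domains agree up to measure-zero sets, so $\mapf=\mapg$ by Definition~\ref{def:pwa-equality}.

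The main obstacle is the first step, namely making sure \condref{TOD} is strong enough to force block consistency of $\Psi$ without any help from \condref{DW}. The key observation is that \condref{TOD} ranges over arbitrary quadruples $(k,\ell,k',\ell')$, including $k_\ell\neq k_{\ell'}$ and permuted targets. This is exactly what is needed when $\Psi$ may also permute components, and it is precisely what \condref{OD} in Theorem~\ref{thm:mid-dw-od-jst} could not handle without \condref{DW}. Before relying on this, I would check the orientation convention: the transporter sends the estimator-side component $\ell$ to the truth-side component $k_\ell$, and that is the direction in which the transporter-orbits are defined.
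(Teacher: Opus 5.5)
Your proposal is correct and follows the paper's proof essentially step for step: \condref{TOD} forces $\Psi$ to respect branch blocks, the block-counting argument makes $\pi$ a bijection, and \condref{MST} collapses $S_j=\mapf_{\pi(j)}^{-1}\circ\mapg_j$ to the identity, with your domain comparison spelling out what the paper compresses into ``active-set agreement.'' Your orientation check also comes out right: $\mapf_{i}^{-1}\circ\mapg_j\in\Transporter{\ell}{k}$, equivalently $\mapg_j\in\mapf_i\circ\Transporter{\ell}{k}$, which is exactly the membership the paper's argument uses.
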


\begin{proof}
\textbf{Step~1: Branch bijection via (TOD).}
The global bijection
$\Psi(j,k) = (i(j,k),\,\ell(j,k))$
from Lemma~\ref{lem:global-param-matching}
satisfies
$\paramEst{j}{k}
  = \paramf{i(j,k)}{\ell(j,k)}$
and
$w_k = w_{\ell(j,k)}$.
This means
$\mapg_j^{-1}\circ\mapf_{i(j,k)}$
transports $\Gauss_k$
to~$\Gauss_{\ell(j,k)}$, i.e.,
\[
  \mapg_j
  \;\in\;
  \mapf_{i(j,k)}\circ
    \Transporter{k}{\ell(j,k)}.
\]

Fix any estimator branch~$j$.
Suppose there exist $k_1\neq k_2$
with $i_1:=i(j,k_1)\neq i_2:=i(j,k_2)$.
Then
\[
  \mapg_j
  \;\in\;
  \bigl(\mapf_{i_1}\circ
    \Transporter{k_1}{\ell_1}\bigr)
  \;\cap\;
  \bigl(\mapf_{i_2}\circ
    \Transporter{k_2}{\ell_2}\bigr),
\]
where $\ell_s:=\ell(j,k_s)$.
Since $i_1\neq i_2$, this contradicts
\condref{TOD} on~$(\mapf,\latentZ)$.
Hence $i(j,k)=:\pi(j)$ is independent
of~$k$.

\medskip
\textbf{Step~2: $\pi$ is a bijection.}
If $\pi(j_1)=\pi(j_2)$ for
$j_1\neq j_2$,
then $\Psi$ maps both blocks
$\{j_1\}\times[K]$ and
$\{j_2\}\times[K]$
into $\{\pi(j_1)\}\times[K]$,
requiring $2K\le K$: impossible.
Since $|\If|=|\Ig|$, $\pi$ is a bijection.

\medskip
\textbf{Step~3: Kill branch differences
via (MST).}
For each~$j$, define
$S_j := \mapf_{\pi(j)}^{-1}\circ\mapg_j$.
The matching gives
$\pushfwd{S_j}{\Gauss_k}
  = \Gauss_{\ell(j,k)}$
for every~$k$,
so $S_j$ induces a bijection
$\tau_j\colon k\mapsto\ell(j,k)$
on components.
The weight equality
$w_k = w_{\tau_j(k)}$
from the coefficient matching
ensures this permutation
preserves weights.
Hence
\begin{align*}
  \pushfwd{S_j}{p}
  &= \sum_k w_k\,\Gauss_{\tau_j(k)} \\
  &= \sum_m w_{\tau_j^{-1}(m)}\,\Gauss_m
  = \sum_m w_m\,\Gauss_m = p,
\end{align*}
so $S_j\in\Gmix{p}$.
By \condref{MST}, $S_j=\Id$; hence
$\mapf_{\pi(j)}=\mapg_j$.
Active-set agreement gives $\mapf=\mapg$.
\end{proof}

\subsection{Comparison of approaches}

\begin{table}[h]
\caption{Three alternative routes to map identifiability (shared
  latent setting). All assume (PI) on both sides.}
\label{tab:alt-mid}
\centering\small
\begin{tabular}{@{}lll@{}}
\toprule
\textbf{Route} & \textbf{Branch bij.} & \textbf{Kill} \\
\midrule
USB + MST & Boundary richness & MST \\
DW + OD + JST & DW (comp-pres.) + OD & JST (= MST) \\
TOD + MST & TOD & MST \\
\bottomrule
\end{tabular}
\end{table}

In all three cases, once the branch bijection is established, the
residual branch-wise ambiguity $S_j$ lies in~$\Gmix{p}$, and
\condref{MST} (or its equivalent under \condref{DW}) forces $S_j=\Id$.
This ``two-step'' structure---first a branch bijection, then a
symmetry collapse---is a recurring pattern across all our
identifiability results.

The three routes reflect different trade-offs:
\begin{itemize}
\item \textbf{USB + MST} (Lemma~\ref{lem:mid-shared}) uses geometric
  boundary structure to isolate each branch, and a single symmetry
  condition to kill the ambiguity. It requires no algebraic conditions
  on the map beyond \condref{PI}.
\item \textbf{DW + OD + JST} (Theorem~\ref{thm:mid-dw-od-jst}) uses
  weight distinctness as combinatorial tags and orbit geometry to
  separate branches. It requires no boundary conditions. Under DW,
  \condref{JST} and \condref{MST} coincide
  (Proposition~\ref{prop:mst-jst}), so the ``killing'' mechanism is
  equivalent.
\item \textbf{TOD + MST} (Theorem~\ref{thm:mid-tod-mst}) operates
  without boundary conditions or weight distinctness, using the
  generalised transporter-orbit structure to handle arbitrary
  component-branch entanglement. The cost is a stronger orbit condition
  (\condref{TOD} vs.\ \condref{OD}).
\end{itemize}

\section{Additional Discussion}
\label{app:discussion}


\textbf{Boundary richness vs.\ algebraic conditions.}
The main-text MID results use \condref{USB} (every branch has an
exposed boundary) to establish the branch bijection, illustrating how
geometric structure can substitute for algebraic conditions.  This is
not the only route: Appendix~\ref{app:alt-mid} presents two
alternative theorems that replace \condref{USB} with purely algebraic
conditions on the map--latent pair---orbit disjointness
(\condref{OD}; Theorem~\ref{thm:mid-dw-od-jst}) and
transporter-orbit disjointness (\condref{TOD};
Theorem~\ref{thm:mid-tod-mst}).  These approaches require weight
conditions or stronger orbit conditions instead of boundary richness,
illustrating the modularity of the framework.

\textbf{Globalness, not canonicity, of~$\alpha$.}
The link~$\alpha$ constructed in Theorem~\ref{thm:lid} comes from a specific toggling pair
$(i_\star,j_\star)$ and the permutation~$\sigma$. Neither the toggling
pair nor~$\sigma$ is claimed canonical: LID says only that \emph{some}
affine link exists, not that a single canonical one is attached to the
truth. What matters for MID is not canonicity but
\emph{globalness}---forcing the same structural relationship to hold
across \emph{every} branch, not just at the boundary that
produced~$\alpha$. The role of \condref{MST} is exactly to
deliver this globalness.
For a fixed truth--estimator pair, MID produces an affine~$\alpha$
with $\alpha(\latentZest)\deq\latentZ$ and
$\mapf=\mapg\circ\alpha^{-1}$. Without \condref{MST}, different
branches could in principle use different affine relabellings of the
mixture components; \condref{MST} forces all branchwise difference
maps $S_i=\mapf_i^{-1}\circ\mapg_{j(i)}$ into a trivial
mixture-symmetry group, so that they all coincide. This is a statement
of \emph{global consistency across branches}, not of a
``universal''~$\alpha$ independent of which estimator one considers.
Different estimators matching the same observable law can in general be
related to the truth by different affine links.

\textbf{Necessity of assumptions.}
All three conditions are structurally necessary, not merely sufficient for the present proofs.
For Gaussian mixtures,
Appendix~\ref{app:torsion} proves that if a GMM admits any nontrivial affine self-symmetry, then it admits a finite-order one (i.e., \emph{torsion}). In globally injective piecewise decoder
classes that are closed under precompositions, such torsion symmetries can be localised into law-preserving discontinuous
precompositions, producing observationally identical yet
mechanism-distinct models. Concretely, a torsion~$T$ can give a two-piece construction
$S=\{T,\Id\}$ (details omitted here for scope) that yields a
non-\condref{PI} strict PWA self-map~$S$ of the kind discussed in
Remark~\ref{rem:pi-essential}, which explains why such~$S$ breaks
identifiability.
This is the mechanism behind the necessity of \condref{MST}.
We also have complementary results (see
Table~\ref{tab:torsion-beyond-gmm}), showing analogous
torsion-based necessity phenomena for broad classes of
exponential-family and location-scale mixtures, and further examples
(omitted here) showing that the same obstruction can persist even in
continuous piecewise classes.
Boundary richness is likewise necessary: Appendix~\ref{app:fold} gives
a simple fold example in which latent \condref{MST} and \condref{PI}
both hold, yet \condref{SB} fails and law identifiability breaks.

\textbf{Towards a broader algebraic identifiability theory.}
Gaussian mixtures and piecewise-affine maps are the first concrete
testbed for a wider program. In the present paper, the relevant
ambiguity group is the affine group~$\Aff(\R^{\ambientdim})$, which is
exactly the natural symmetry group of the Gaussian family. More
generally, one can replace affine maps by the natural invertible
transformations that preserve a chosen component family~$F$, and then
study mixtures from~$F$ together with piecewise maps whose branch rules
come from that family symmetry group.
We also have a group-theoretic proof of
\condref{MST} genericity (omitted for scope) that uses weaker
conditions than any sufficient conditions shown in Appendix~\ref{app:suff-cond}
\emph{and} applies to many exponential-family
mixtures.
A key next step is to formulate family-level analogues of parameter
injectivity and to understand how far the branch class can be enlarged
before hidden symmetries reappear. Our expectation is that analogues of
\condref{PI}, \condref{MST}, and \condref{USB} will again mark the
boundary between identifiability and non-identifiability for a much
wider class of piecewise-parametric latent variable models.




\section{Technical comparisons with related work}
\label{app:tech-comparisons}

This section gathers technical comparisons that are too detailed for the main text.

\textbf{Two roles of (MST): comparison with prior work.}
The condition \condref{MST} plays a role analogous to two distinct
mechanisms in prior identifiability frameworks:
In \textbf{iVAE} \citep{khemakhem2020variational},
auxiliary labels (observed class indices) serve as tags that
allow local identifiability to extend across the latent space. In our
setting, \condref{MST} achieves a similar effect: the mixture
components serve as internal tags, and \condref{MST} ensures their
geometry is rich enough to pin down all branch-wise ambiguities.
In \textbf{Kivva et al.}~\citep{kivva2022identifiability},
MID is obtained by a different mechanism: continuity forces adjacent affine pieces to agree on their common facet, so their relative map fixes a hyperplane. The remaining local alternatives are shear-type or reflection-type. Global density equality, analytic continuation, and GMM identifiability rule out the shear case, since a nontrivial shear cannot preserve a nondegenerate Gaussian mixture. Injectivity then rules out the reflection/fold case. By contrast, our 
\condref{MST} assumption bypasses this continuity-plus-injectivity argument by directly trivializing the affine self-symmetry group of the latent mixture.
Example~2 (Appendix~\ref{app:examples}), adapted from
\citet[Example~D.2]{kivva2022identifiability}, concretely illustrates how both
\condref{PI} and \condref{MST} fail simultaneously, yielding a
strict-PWA self-map.

\textbf{Comparison with iVAE.}
Under the PWA+GMM model class, the assumptions of iVAE
\citep{khemakhem2020variational} are strictly stronger than our
\condref{MST}, \condref{USB}, and \emph{chamber-wise} \condref{PI}:
injectivity of the map implies \condref{USB} and chamber-wise
\condref{PI} (Appendix~\ref{app:inj-implications}), while observed
domain labels together with iVAE's sufficient-variability condition
(assumption~(iv)) imply \condref{MST}
(Appendix~\ref{app:ivae-mst}).
Concretely: observed labels fix component identities so any admissible symmetry must be component-preserving, reducing \condref{MST} to \condref{JST}; generic mean/variance variation then trivialises the remaining within-component stabiliser, giving \condref{JST} and hence \condref{MST}.
Since our main results use \emph{global} \condref{PI}, which is
strictly stronger than the chamber-wise version
(Appendix~\ref{app:global-pi}), we cannot say that our full assumption
set is strictly weaker than iVAE's. However, our framework strictly
weakens the \emph{domain-contrast assumption}: \condref{MST} replaces
iVAE's requirement of observed domain labels plus sufficient
variability.

\textbf{Boundary richness vs.\ weak injectivity.}
Appendix~\ref{app:sb-usb-winj} establishes that \condref{USB} is strictly weaker than global injectivity and logically incomparable with weak injectivity (in the sense of \citet{kivva2022identifiability}), with concrete examples in both the continuous and discontinuous regimes.



\end{document}